%% file: camera_ready.tex
\documentclass{article}

    \PassOptionsToPackage{numbers, compress}{natbib}
\usepackage[final,main]{neurips_2026}
\input{math_commands.tex}

\usepackage[utf8]{inputenc} % allow utf-8 input
\usepackage[T1]{fontenc}    % use 8-bit T1 fonts
\usepackage{hyperref}       % hyperlinks
\usepackage{url}            % simple URL typesetting
\usepackage{booktabs}       % professional-quality tables
\usepackage{amsfonts}       % blackboard math symbols
\usepackage{nicefrac}       % compact symbols for 1/2, etc.
\usepackage{microtype}      % microtypography
\usepackage{xcolor}         % colors
\usepackage{enumitem}
\usepackage{booktabs}
\usepackage{graphicx}
\usepackage{multirow}
\usepackage{caption}
\usepackage{array}
\usepackage{wrapfig}
\usepackage{pifont}
\usepackage{float}
\usepackage{xspace}
\usepackage{subcaption}
\usepackage{colortbl}
\usepackage{amsthm}
\usepackage{placeins}
\newtheorem{thm}{Theorem}[section]
\newcommand{\jiayu}[1]{#1}

\newcommand{\xmark}{\ding{55}\xspace}
\newcommand{\cmark}{\ding{51}\xspace}
\newcommand{\Na}{\textendash}
\definecolor{fractsrow}{RGB}{226,237,250}
\definecolor{headband}{RGB}{240,240,240}
\definecolor{goodgreen}{RGB}{22,118,68}
\definecolor{badred}{RGB}{192,57,43}
\newcommand{\dgood}[1]{\textcolor{goodgreen}{#1}}
\newcommand{\dbad}[1]{\textcolor{badred}{#1}}

\hypersetup{
    colorlinks=true,
    citecolor=[RGB]{31,91,158},
    linkcolor=[RGB]{31,91,158},
    urlcolor=[RGB]{31,91,158}
}
\definecolor{zebra}{RGB}{247,249,252}
\newcommand{\inlinecaption}[3]{%
  \begingroup\captionsetup{hypcap=false}\captionof{#1}{#2}\label{#3}\endgroup}
\makeatletter
\renewenvironment{table}
  {\setlength{\abovecaptionskip}{7pt}%
   \setlength{\belowcaptionskip}{7pt}%
   \@float{table}}
  {\end@float}
\makeatother
\title{FracTS: Hierarchical and Autoregressive Time Series Generation}

\author{
  \textbf{Jiayu Li}$^{2,\dagger,}$\thanks{Equal contributions. Work done while working at Betterdata AI} \quad
  \textbf{Umair Afzal}$^{1,*}$ \quad
  \textbf{Zilong Zhao}$^{1,3,}$\thanks{Correspondence: \texttt{jiayul11@illinois.edu, zilong@betterdata.ai}} \\[0.4ex]
  \textbf{Milad Abdollahzadeh}$^{1}$ \quad
  \textbf{Uzair Javaid}$^{1}$ \quad
  \textbf{Biplab Sikdar}$^{3}$ \\[0.6ex]
  $^{1}$Betterdata AI\quad
  $^{2}$University of Illinois Urbana-Champaign \\[0.2ex]
  $^{3}$National University of Singapore \\[0.6ex]
}

\begin{document}

\maketitle

\begin{abstract}
  Time series are ubiquitous across different domains, and the growing use of synthetic data for privacy-preserving sharing, data augmentation, and stress testing makes \emph{time series generation} increasingly important. Early generative approaches are predominantly autoregressive, which can suffer from low sample diversity, error accumulation, and difficulty modeling long-range dependencies. In this paper, we introduce \textbf{FracTS}, a hierarchical, coarse-to-fine autoregressive model for multivariate time series generation. Inspired by fractal generative modeling developed for images, FracTS adapts the paradigm \jiayu{from 2D} to \jiayu{1D} sequences by learning a multi-scale representation that captures global trajectory structure at coarse scales and local dynamics at fine scales while following the autoregressive nature of time series. 
  \jiayu{In contrast to unstructured images, time series are structured, allowing easy automated or designated temporal feature processing optimized for the model, and global (static) feature extraction describing the instances that generate the corresponding time series in the collection. We incorporate them in the generation process to further improve the generation quality.}
  % We adapt the model to supports both unconditional and conditional generation via static attributes, and incorporate a length predictor to generate time series with variable sequence lengths. 
  The design naturally handles multivariate dependencies and improves long-horizon temporal coherence without requiring fixed-length inputs.
% Evaluation covers (i) marginal distribution (ii) correlation preservation and (iii) ML utility comparing between real and synthetic data. 
Experiments on five datasets (two conditional, three unconditional)
conducted against seven baselines show that FracTS consistently outperforms baselines across \jiayu{marginal distribution and predictive ML utility, and correlation preservation in the unconditional setting}. \jiayu{In particular, }the predictive ML utility is averagely improved by 100.0\% and 75.0--81.0\% for unconditional and conditional time series datasets compared to the best baseline (relative error computed on dataset-averaged raw values).
Code is available at: \url{https://github.com/BetterdataLabs/fracts}.
\end{abstract}
% \jiayu{\textit{Comment: I removed conditional vs unconditional in abstract, it's distracting. I tried to explain that we're working on collections of time series instead of a single time-series also in abstract, as Milad seemed so confused by this previously, and I guess some other folks may be confused too. Please help see if the explanation is clear enough.}}
% \jiayu{\textit{Comment: Also, since now much of the content is not generated by the same person, I observe significant wording inconsistence. We need to check this out and align the choices (e.g., time-series vs time series).}}

\section{Introduction}
Time series data – sequences of observations over time – are ubiquitous across scientific, engineering, and social domains. Such data arise in diverse applications ranging from finance and economics to healthcare and manufacturing \citep{DBLP:journals/natmi/TurbeBLM23,DBLP:journals/jbd/StengerLFKB24}. 
% For example, IoT sensors in factories (engineering/manufacturing), medical devices in hospitals (healthcare), personal smart gadgets, and financial transaction systems (economic/social sciences) all generate vast amounts of sequential time-stamped data. 
This pervasiveness of time series has driven extensive research in tasks like forecasting, classification, and anomaly detection on temporal data.
Generative modeling, in particular, of time series serves several critical purposes in data science and machine learning and widely utilized in real-life applications: simulating plausible trajectories for what-if and stress testing~\citep{DBLP:journals/corr/abs-2506-06380}; augmenting scarce or imbalanced datasets~\citep{DBLP:journals/mbec/MiljkovicMPS24}; privacy-preserving data sharing and benchmarking~\citep{DBLP:journals/jbd/StengerLFKB24}; and robust evaluation of downstream decision systems~\citep{NEURIPS2023_05fb0f4e}. %Effective time series generators must therefore balance multiple desiderata: fidelity to marginal and joint distributions, realistic temporal dynamics (e.g., seasonality and long-range dependence), diversity rather than mode collapse, controllability via conditioning variables, and stability over long horizons, all while remaining computationally efficient.

By the diversity of data sources and applications, time series generation can be posed under several settings. 
\jiayu{In this paper, we focus on the generation of \textbf{\textit{new time series instances}} \textit{(resp. new time series steps)} \textbf{\textit{in a collection}} of time series instances (e.g., new users' transaction history, new companies' stocks). Thus, \textbf{forecasting} or \textbf{imputation}, which are also referred to as ``time series generation'' occasionally, are not discussed in this paper. 
Differentiated by the presence of explicitly known global features of the new instances, we classify the time series generation problems we cover into two major types: \textbf{unconditional} and \textbf{conditional} generation.} 
Early pioneers of time series generative models have mainly been autoregressive predictions of next steps~\citep{tforcing,pforcing,wavenet}, as autoregression aligns closely with the nature of time series, where new values are obtained or observed with dependency on the context of the past. 
However, autoregressive prediction typically suffers from small variance due to its deterministic nature~\citep{timegan}, accumulates error over time steps~\citep{timegan,aecgan}, and lacks understanding of long-range dependency and global information~\citep{ar-limit}. Consequently, many new time series generative models attempt to introduce auxiliary global components~\citep{timegan,diffusionts,tvqvae,aecgan,coscigan,kovae}.
That time series is inherently step-wise autoregressive, but requires auxiliary modules to complement a global understanding, naturally calls for hierarchical modeling approaches to reconcile local and global information in a coarse-to-fine manner. Some existing time series generative models introduce hierarchical components specifically for time series data~\citep{wavenet,tvqvae}. Nevertheless, generative models that are hierarchical by themselves~\citep{laddervae,proggan,hngan,nvae,nfldm,fractal} seem to be a more natural choice, but there is a lack of direct applications of these models on time series, except for limited exceptions~\citep{psagan}. 

In this paper, we introduce \textbf{FracTS} - a \textbf{Frac}tal Generative Model (FGM) ~\citep{fractal} for \textbf{T}ime \textbf{S}eries. 
% time series generative model that 
% FracTS
% is hierarchically autoregressive by design.
%, based on the generative model — fractal generative model (FGM) ~\citep{fractal}. 
Time series data possesses the fractal property like images~\citep{fractal}, as smaller chunks of time series are also time series. Inspired by this, we split long time series into shorter chunks, capturing the inter-and intra-chunk relations at two (full and chunked) fractal levels. This process continues until each chunk has length one, which is then modeled by a special final fractal level that learns across different variates, analogous to channels in images.
%We apply this process recursively, until each chunk has only length 1, which is learned by the special last fractal level learning different variates, analogous to channels in images.

%Time series data also possesses the fractal property like images~\citep{fractal}, as smaller chunks of time series are also time series. Inspired by this, we recursively split long time series into smaller chunks, learning the inter-and intra-chunk data relations at two fractal levels, until the chunks are short enough to be well learned by a simple model. An orthogonal dimension of the fractal property on multivariate time series data is that the projection of a multivariate time series on any subset of variates are still time series data, similarly to the role that channels play in images. Henceforth, we put the separation of different variates as a last fractal level. 

Unlike images where patches do not have specific orders, time series are naturally autoregressive. Thus, we adopt the autoregressive version of FGM~\citep{fractal} instead of the default choice of masked autoregressive version for images.
By a backbone of autoregressive transformer, the model naturally inherits the property of vanilla autoregressive transformers in handling varying sequence lengths~\citep{gpt1}.
% Another key aspect for time series generation is the nature of time series data as a special form of tabular data, on which data processing and encoding play a particularly important roles in model performances~\citep{ctgan,tabdiff,tvqvae,diffusionts}. 
\jiayu{The structured nature of time series, specifically in comparison to the unstructured images, allows standard automated or domain-specific feature processing optimized for the generative models~\citep{ctgan,tabdiff,tvqvae,diffusionts}, including tabular-formatted type-specific data processing of temporal features, and global static feature extraction.}
\begin{figure}[!b]
\centering
\includegraphics[width=\linewidth]{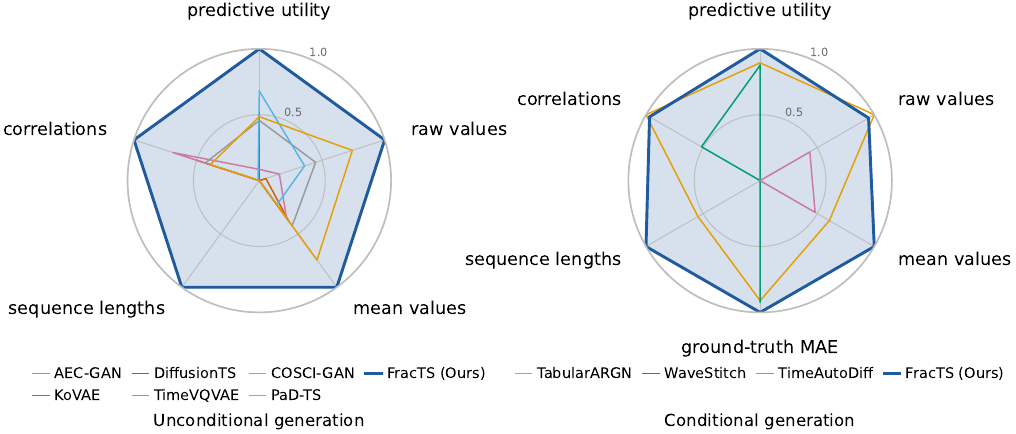}
\caption{Summary comparison of FracTS (ours) and the baselines on dataset-averaged scores of Tables~\ref{tab:uncond_results} and~\ref{tab:walmart_cond_quality}, rescaled per axis so that larger values indicate better performance. Clockwise from top: predictive ML utility at window $W{=}1000$, marginal distribution of raw values, of means and of lengths, and correlation preservation on raw values; conditional adds ground-truth MAE at the bottom (definitions in Section~\ref{sec:setup}).}
\label{fig:radar_summary}
\end{figure}
In this paper, we also introduce an effective data encoding for time series data in our FracTS model. Our main contributions can be summarized as follows:
\begin{enumerate}[leftmargin=14pt,itemsep=2pt,topsep=3pt]
    \item We propose FracTS model, an effective and intuitive hierarchically autoregressive generative model for conditional/unconditional generation of realistic time series data.
    \item We introduce a robust time series data processing \jiayu{and compression} pipeline \jiayu{optimized for transformer models.}
% , paired with a Perceiver-based condition compression module. This combination allows the model to efficiently process and generate heavily conditioned time series data without computational bottlenecks.
    % \item Our proposed FracTS model supports two complex time series settings: it can handle variable length sequences and time series with categorical features. 
    \item \jiayu{Our proposed FracTS model applies on much more flexible data settings for time series data without performance degradation in terms of lengths (variable, or very long) and features (diverse data types and distributions) than most existing models.}
    % {\color{red}[please make this point more detailed. For instance: Our proposed FracTS model can apply on variable length time series]}
    % \item Experiments demonstrate that FracTS consistently outperforms baselines across five datasets. For unconditional generation, FracTS surpasses the best baselines by up to 53.8\% in marginal distribution fidelity, 70.4\% in correlation preservation, and 100\% in predictive ML utility. In conditional generation, improvements reach up to 50.0\% in distribution fidelity and 81.0\% in predictive ML utility, and it reduces 38.35\% mean-absolute error comparing to the best baseline.
    \item Experiments demonstrate that FracTS consistently outperforms baselines across five datasets: compared to the best baseline, the predictive ML utility is improved by 100.0\% on unconditional datasets and by 75.0--81.0\% on conditional datasets, with consistent gains on the marginal distribution and correlation scores as well (Figure~\ref{fig:radar_summary}; per-column values in Tables~\ref{tab:uncond_results} and~\ref{tab:walmart_cond_quality}).
    % \jiayu{\textit{Comment: Can anyone draw a spider graph of the 6 metrics of all models and how in a figure? This 4 lines reading out all numbers is too verbose.}}
\end{enumerate}

\section{Related Works}
\label{sec:related}

\paragraph{Time Series Generation.}
Early time series generative models have mainly been step-wise autoregressive predictions due to the autoregressive nature of time series. For example, teacher forcing~\citep{tforcing} and professor forcing~\citep{pforcing} use recurrent neural networks to generate sequences autoregressively, and WaveNet~\citep{wavenet} autoregressively predicts the distribution of each step based on all previous steps with causal convolutions.
However, autoregressive prediction tends to accumulate errors and lacks global understanding~\citep{timegan,aecgan,ar-limit}. Therefore, recent models often introduce some global components, using modern generative methods such as generative adversarial networks (GAN)~\citep{gan}, variational autoencoders~\citep{vae}, diffusion models~\citep{ddpm}, and autoregressive transformers~\citep{transformer,gpt1}. For instance, TimeGAN~\citep{timegan} uses GAN to generate time series, which contains a global distribution objective on top of the next-step prediction objective, and DiffusionTS~\citep{diffusionts} uses a diffusion model to generate realistic time series data with both slow and rapidly-varying components for trend, seasonality, and error, respectively. Similarly, PaD-TS~\citep{padts} introduces a population-aware training objective to diffusion models, explicitly preserving dataset-level statistics such as cross-dimension correlations alongside individual-level authenticity. Even for some models that stick to autoregressive prediction for generation, specific designs to counter the drawbacks of autoregressive methods are introduced, such as the masked variant of autoregression with vector quantization in TimeVQVAE~\citep{tvqvae} and the error correction module that is trained globally in AEC-GAN~\citep{aecgan}.
Note that the detailed problem settings of different models vary in terms of the lengths of time series, number of variants and their types, and presence and nature of static conditions. Most existing models are not immediately applicable to all settings, while this paper aims to handle a general case that covers all scenarios (more details in Section~\ref{sec:prob}).

\paragraph{Hierarchical Understanding of Time Series and Hierarchical Generative Models.}
\label{sec:related:hier}
Hierarchical modeling is present in many time series generative models, explicitly or implicitly, such as the diluted causal convolution of WaveNet~\citep{wavenet}, and the separation of high and low frequency data in TimeVQVAE~\citep{tvqvae}. Hierarchical modeling is more prevalent in other time series tasks, such as forecasting and interpolation~\citep{dghl,gpht,himtm}.
However, there are hierarchical variants of most modern deep generative models, such as hierarchical hidden Markov models~\citep{hhmm}, VAEs~\citep{laddervae,nvae}, GANs~\citep{proggan,hngan}, diffusion models~\citep{nfldm}, and autoregressive transformers~\citep{fractal}, mostly designed for other data modalities.
There are applications of these hierarchical generative models to time series. For example, MR-HDMM~\citep{mrmts} generates time series via multi-rate hierarchy using deep Markov models, and PSA-GAN~\citep{psagan} generates time series using a progressive self-attention GAN. {Most recently, TimeMAR~\citep{xu2026timemar} proposes a multi-scale autoregressive framework using a dual-path VQ-VAE to disentangle trend and seasonal components for unconditional generation.\footnote{The code for TimeMAR is not publicly available, and thus it is not included as baseline.}} Nevertheless, to the best of our knowledge, there is no time series hierarchical generative model that generates data by fine-grained data units down to one variate of one time step, while natively supporting multivariate mixed-type features and complex static conditions.
% , especially in the realm of natural languages.

\paragraph{Time Series \jiayu{Structured} Data Processing.}
\label{sec:related:data}
Data processing is crucial to tabular modalities.
%, including plain tabular data and time series data. 
% For plain tabular data in particular, customized data processing for categorical and numeric data in tables is typically paired with corresponding input and output model layers and losses to obtain a generative model optimized for tabular data, while the network structure builds primarily on works in other data modalities. 
For instance, CTGAN~\citep{ctgan} uses a  simple GAN network, but the data processing of one-hot encoding for categorical data and variational Gaussian-mixture model decomposition of multimodal numeric data makes its performance outstanding.  TabDiff~\citep{tabdiff} uses
Gaussian and masked diffusion for numeric and categorical features, while the transformer-based denoising network backbone remains relatively standard. Extending this capability to the time domain, TimeAutoDiff~\citep{timeautodiff} employs a unified latent diffusion framework with a VAE backbone to natively model heterogeneous time series features, including categorical and binary variables. However, while time series data typically inherit the data properties of general tabular data, many time series generative models downstate this aspect, even overlooking the presence of categorical features~\citep{coscigan,tvqvae,kovae,diffusionts}, which will be addressed in this paper.
Additionally, some models leverage domain-specific processing, such as TimeVQVAE~\citep{tvqvae} and DiffusionTS~\citep{diffusionts}, which utilize Fourier transforms to capture multi-frequency dynamics.

\section{Problem Formulation}
\label{sec:prob}

\begin{table}[!htbp]
    \centering
    \caption{Functionality and flexibility of representative time series generative models, compared to FracTS. \xmark indicates that a model is \emph{limited to} the corresponding setting; for \emph{limited length}, \xmark means the model contains no specific design for long time series generation. Blank entries indicate that the setting is supported. $^{\dagger}$ marks models whose static
conditioning is restricted to class labels.}
    \label{tab:function}
    \small
    \resizebox{\linewidth}{!}{
    \begin{tabular}{l ccccccc >{\columncolor{fractsrow}}c}
        \toprule
        \rowcolor{headband}
        \textbf{Limited to} & \textbf{TimeGAN} & \textbf{WaveStitch} & \textbf{COSCI-GAN} & \textbf{TimeVQVAE} & \textbf{AEC-GAN} & \textbf{KoVAE} & \textbf{DiffusionTS} & \textbf{FracTS} \\
        \midrule
        i) fixed length    & \xmark &        & \xmark & \xmark          &        & \xmark & \xmark & \\
        ii) single variate &        &        &        & \xmark          &        &        &        & \\
        iii) no static     &        &        & \xmark & \xmark$^{\dagger}$ & \xmark & \xmark & \xmark$^{\dagger}$ & \\
        iv) limited length & \xmark & \xmark & \xmark & \xmark          &        & \xmark & \xmark & \\
        v) mixed type      &        &        & \xmark & \xmark          & \xmark & \xmark & \xmark & \\
        \bottomrule
    \end{tabular}
    }
\end{table}

\paragraph{General Time Series Setting.} Time series appear in several forms. For example, they can be \textit{uni}-or \textit{multi}-\textit{variate}, where the latter is more complex due to the cross-variable and lagged dependencies. Sequences may be \emph{irregularly sampled} (with explicit timestamps, usually \emph{event-based}), have \emph{variable length}, and their values may be \emph{continuous} and \emph{discrete/categorical}. Many time series are also accompanied with \emph{static conditions} describing the sequence as a whole (e.g. demographics, asset metadata), which can also have diverse data types.
In this paper, we use the most generic but complex setting --- multivariate time series generation of various lengths and diverse data types.

{ 
Let $\mathcal{D} = \{ (X_n, c_n) \}_{n=1}^{N}$ be a time series dataset comprising $N$ sequences. Each sequence $X_n = (x_{n,1}, \dots, x_{n,L_n})$ has $L_n$ time steps, where $L_n$ may vary across sequences. We set $L=\max_{n=1,\dots,N}L_n$ as the maximum length. 
At each step $t$, the observation $x_{n,t} \in \mathcal{X}_1 \times \dots \times \mathcal{X}_M$ is a multivariate measurement over $M$ features, whose domains $\mathcal{X}_m$ can be numerical, categorical, or datetime-valued.
Each sequence is accompanied by a static condition vector $c_n = (c_n^{(1)},\dots,c_n^{(M_c)})$ of $M_c$ attributes (categorical, numeric, or datetime).
}

{ 
The goal of generative modeling for time series is to learn a conditional distribution $p_{\theta}(X \mid c)$ that approximates the real data:
\begin{equation}
p_{\theta}(X \mid c) \approx p_{\text{data}}(X \mid c) \quad \text{for any } c \sim p_{\text{data}}(c).
\end{equation}
At generation time, $c$ may be specified by the user or sampled from a learned marginal $q_\phi(c) \approx p_{\text{data}}(c)$. 
Faithful approximation implies that generated sequences preserve key statistical properties of real time series, including:
\begin{itemize}
    \item[i)] the marginal distribution of each feature across all sequences and time steps: $p_{\theta}(X_{:,m}) \approx p_{\text{data}}(X_{:,m})$;
    \item[ii)] the temporal dynamics within a sequence: $p_{\theta}(x_{n,t} \mid x_{n,1:t-1}) \approx p_{\text{data}}(x_{n,t} \mid x_{n,1:t-1})$;
    \item[iii)] the instantaneous cross-feature dependencies: $p_{\theta}(x_{n,t}) \approx p_{\text{data}}(x_{n,t})$.
\end{itemize}
}

{ 
\textbf{Challenges of General Time Series Generation}
Most existing works simplify the setup considerably relative to our general formulation (variable length, multivariate mixed-type features, static conditions). Common restrictions are: i) fixed length $L_n = L$, ii) univariate series ($M=1$), iii) no static attributes ($M_c=0$), iv) short sequences (e.g., $L < 1000$), and v) purely numeric features ($\mathcal{X}_m \subseteq \mathbb{R}$). Table~\ref{tab:function} summarizes these simplifications; only a limited subset of prior works addresses the general case~\citep{wavestitch}, and in this work we relax these restrictions to better mirror real-world scenarios.
}

\newcommand{\rowcell}[2]{\parbox[c][#1][c]{\linewidth}{\centering #2}}
\begin{figure}[!t]
    \centering
    \includegraphics[width=0.95\linewidth]{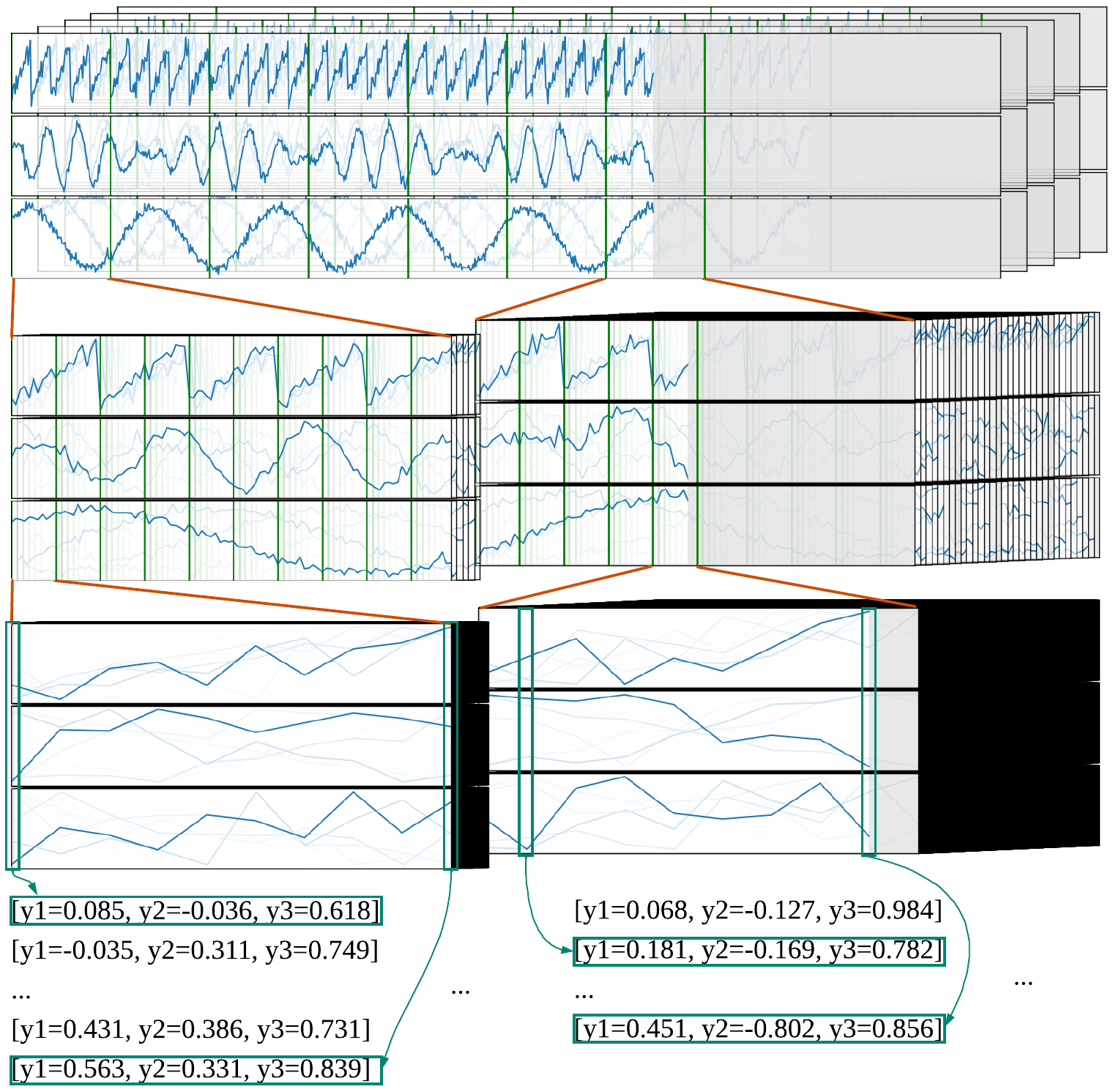}
    \caption{Fractal decomposition and model architecture for FracTS (with $K=4$ fractal levels). At each level except for the last one, a long time series is decomposed into patches of shorter ones.
    The time series dataset can have variable lengths, where the gray spans indicate paddings.  Completely empty patches are skipped for training.
    The last level $i=3$ is not a fractal decomposition of different time steps, but a series of different variables in \textit{one} time step.}
    \label{fig:fractal_ts_diag}
\end{figure}

% Given a time series dataset with $N$ data points $\x_1,\x_2,\dots,\x_N$, where each data point $\x_n$ ($n\in\{1,2,\dots,N\}$) consists of $L_n$ time steps of $M$ features, our goal is to generate another $N'$ time series data points with $M$ features similar to the provided real data. Let the maximum length be $L\ge L_n,\forall n$.
% Occasionally, the time series dataset has static features, where each $\x_n$ is accompanied by $M_s$ static features that describe a general profile of the time series data point. For example, in a dataset about transactions associated with different customers over a period of time, the static features can be the customer's profile, such as gender and age.
% In this paper, we focus on the generation of \textit{new time series data}, instead of forecasting \textit{new time steps} of existing time series.

% Prior works on time series generation usually do not solve the general problem statement above, but rather simplified versions, such as i) fixed length ($L_n=L,\forall n$), ii) single variate ($M=1$), iii) no static features ($M_s=0$), iv) limited time steps (small $L$, e.g., $L<1000$), and v) numerical values only (while time series and static features can be either categorical or numerical in most realistic datasets), with only a limited subset of works not limited to any simplified problem~\citep{wavestitch}. A summary of different prior works and the simplified version of problems they solve is found in Table~\ref{tab:function}. This paper, in comparison, aims to solve the general problem that is limited to any of the above constraints. 

\section{FracTS Model}
\label{sec:model}
% In this section, we introduce the adaptation of the original FGM for time series.
% \subsection{Fractal Generative Model (FGM)}
The underlying motivation of FGM is the existence of a fractal pattern in the data, namely, a recursive self-similar pattern, enabling divide-and-conquer generation. 
% Images can be generated by FGM because they are composed of sub-images, which are also images.
Time series data has such a fractal pattern, as long time series are composed of shorter time series. This hierarchical structure is also utilized in some existing time series models (recall Section~\ref{sec:related:hier}).
% , such as the super resolution in Temporal FiLM~\citep{tfilm}, and the separation of high and low frequency in TimeVQVAE~\citep{tvqvae}.
Nevertheless, time series and images are essentially different in many aspects, including i) time series is essentially 1D while images are 2D, ii) the shape of each data point for time series could vary due to variable time series lengths while it is fixed for most image datasets, iii) the distribution of each value unit is much more complex in time series as time series contain both discrete and continuous attributes with different distribution while the pixel values are integers approximately normally distributed, and same applies to the condition attributes of time series, in contrast to the only class labels for images in FGM. Therefore, FGM needs to be heavily customized to transfer from a visual model to a time series model. 
%We refer to our proposed customization of FGM as FracTS (\textbf{Frac}tal Generative Model for \textbf{T}ime \textbf{S}eries). 
% In this section, we introduce the adaptation of the original FGM for time series to handle the three major differences of time series and images listed above.
% Static conditions of different data points combined can be considered as tabular data, which can be generated by any tabular generative model that handles the data types needed~\citep{ctgan,realtabformer,tabsyn,tabdiff}. This paper focuses on the generation of time series, so we will not elaborate on this tabular model except for an ablation on the effect of the performance of this model.

\subsection{Fractal Architecture for 1D Time Series Data}

In FGM for images, each image is decomposed into patches, where each patch is a sub-image with smaller width and height. For time series, there are no widths and heights, but lengths, so we decompose a time series into patches where each patch is a shorter time series.
The last level of FGM for images is designed for exact values in each channel at one pixel, which we analogize to the exact values of different variables at one time step in time series, and this level has a length of 1 in terms of steps in the time series.
Thus, the FracTS decomposition of time series is shown in Figure~\ref{fig:fractal_ts_diag}.
Formally, in a fractal setting with $K$ levels, let the length of time series handled at fractal level $i=0,1,\dots,K-1$ be $L^{[i]}$ ($L^{[0]}=L$), then at each fractal level, FracTS decomposes a time series of length $L^{[i]}$ into shorter time series of length $L^{[i+1]}$, and the last level has $L^{[K-1]}=1$.
% In this paper, we use the default setting where $L^{[i]}=20L^{[i+1]}=20^{K-i-1}$.
% Since the model sizes can be configured, we only consider the cases where $L^{[i+1]}\mid L^{[i]}$ for simplicity.
% As such, level $i$ contains $L^{[i]}/L^{[i+1]}$ units for the next fractal level, which we refer to as the \textit{fractal factor} $f_i$.
% Therefore, in a fractal model with $K$ fractal levels, including the last level, $L^{[i]}=\prod_{k=i}^{K-2}f_k$. 
Following FGM for images, each fractal level is learned by a sequential model (i.e., transformer~\citep{transformer}).
Let the sequential model's decoder length at fractal level $i$ be $l^{[i]}$, then $l^{[i]}=\left\lceil L^{[i]}/L^{[i+1]}\right\rceil$ for $i\ne K-1$ and $l^{[K-1]}=M$.
% On a dataset $\mathcal{D}$ with $N$ data points, the top level has $N$ data points, but the other levels have more due to patchification. 
Let the number of data points at level $i$ be $N^{[i]}$, then $N^{[i]}=\sum_{n=1}^N\left\lceil L_n/L^{[i]}\right\rceil\le N\left\lceil L/L^{[i]}\right\rceil$.
% The values of the notations above at each fractal level are also indicated in Fig.~\ref{tab:fractal-ts}.
Therefore, we formally learn
\begin{multline}
    \log p(X|c)=\mathbb{E}_n\Bigg[\overbrace{\sum_{i=0}^{K-1}\frac{1}{l^{[i]}}\sum_{j=1}^{l^{[i]}}\log p(X_{n,jL^{[i+1]}+1:(j+1)L^{[i+1]}}\mid X_{1:jL^{[i+1]}},c)}^{\textrm{recursive fractal levels}}\\
    % \sqrt[\left\lfloor l^{[i-1]}/l^{[i]}\right\rfloor-2]{\prod_{j=1}^{\left\lfloor l^{[i-1]}/l^{[i]}\right\rfloor-2}p(X_{n,jl^{[i]}:(j+1)l^{[i]}}|X_{n,1:jl^{[i]}},c)}\\
    % \cdot \sqrt[L_n]{\prod_{t=1}^{L_n} p(X_{n,t,m}|X_{n,1:t-1},X_{n,t,1:m-1},c)}
    +\underbrace{\frac{1}{L_n}\sum_{t=1}^{L_n}\log p(X_{n,t,m}|X_{n,1:t-1},X_{n,t,1:m-1},c)}_{\textrm{final features level}}
    \Bigg]
\end{multline}

% A side effect of changing from 2D to 1D is the change of conditions passed to the next fractal level, as only the immediately preceding and succeeding patches are adjacent in 1D setting, and autoregressive setting allows only the preceding patches as context, which is insufficient to holistically understand the time series context, as
% % Nevertheless, this does not mean less information should be passed to the next level compared to images. 
% many critical time series patterns are embedded in extended horizons beyond adjacent windows, requiring capabilities in long dependencies for models handling them. For instance, whether a stock's value increases is not only related to the stock's value yesterday, but its overall performance in recent months, or even recent years. In other words, farther or even non-adjacent patches also tell important information about the next time steps' values. In contrast, images do not require such long-sightedness due to the inherently localized structure of visual data. Therefore, besides the adjacent patches that were used in the vanilla FGM, we also pass more adjacent and non-adjacent patches as the conditions of the next level. We make the relative positions of patches be passed to the next level a configurable list for each level in FracTS as hyperparameters. 

Another unique feature of 1D time series compared to 2D images is the need for long-term dependencies. Although global dependency is implicitly captured by patch embeddings inferred from a prefix including conditions from previous fractal levels, we argue that global information gradually vanishes down the fractal levels, similarly to the problem with deep neural networks with too many layers. To counter this effect, we adopt a ResNet-like~\citep{resnet} skip connections to directly pass the global condition from the previous fractal levels down to the next levels with a simple projector for space alignment.
% In this paper, we use XXX as the default.

Theorems~\ref{thm:error} and~\ref{thm:scale} (proofs in Appendix~\ref{app:theory}) verify in theory that the hierarchical architecture reduces error accumulation across long sequences (error scale $\Omega(\delta lK)$ versus $\Omega(\delta L)$ for a flattened autoregressive process) and justify its smaller memory usage compared to a flattened transformer with large $L$.

\begin{thm}
\label{thm:error}
    For a sequence of total length $L$ generated by a $K$-level hierarchical autoregressive architecture with branching factor $l$ (where $K=\log_l L$), if the local parameter estimation error per step is $\delta$, the total KL divergence between the ground-truth distribution $p(X_{\cdot,1:L})$ and the generated distribution $p(\hat{X}_{\cdot,1:L})$ satisfies:
    \begin{equation}
        D_{KL}\left(p(X_{\cdot,L}\mid c)\|p(\hat{X}_{\cdot,L}\mid c)\right)=\Omega(\delta lK).
    \end{equation}
\end{thm}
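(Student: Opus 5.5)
The plan is to decompose the KL divergence with the chain rule along the generation order that the $K$-level hierarchy induces, and then count how many ``local'' conditional factors each generated unit passes through. The hierarchy factorizes the joint as
\[
p(X_{\cdot,1:L}\mid c)=\prod_{i=0}^{K-1}\prod_{j=1}^{l} p\bigl(Z^{[i]}_j\mid Z^{[i]}_{<j},\,Z^{[<i]},c\bigr).
\]
Here $Z^{[i]}_j$ is the $j$-th patch token produced at level $i$ inside its parent patch, with branching factor $l=l^{[i]}$ and $K=\log_l L$. The generated distribution $p(\hat X\mid c)$ factorizes the same way, with the same structure but estimated conditionals. The chain rule for KL divergence then gives
\[
D_{KL}\bigl(p(X\mid c)\,\|\,p(\hat X\mid c)\bigr)=\sum_{i}\sum_{j}\mathbb{E}\Bigl[D_{KL}\bigl(p(Z^{[i]}_j\mid\cdot)\,\|\,\hat p(Z^{[i]}_j\mid\cdot)\bigr)\Bigr].
\]
This rewrites the total divergence exactly as a sum of per-step expected conditional divergences along the tree.

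Next I formalize the assumption that the local estimation error per step is $\delta$. Each autoregressive step of each level's transformer has expected conditional KL of order $\delta$: at least $\delta$ for the lower-bound direction and at most $c\delta$ for the matching upper bound. The key modeling choice is how the steps along a path are counted. A single target unit (one time step) is reached by a root-to-leaf path of $K$ levels. Within each level it is conditioned on at most $l$ autoregressive predecessors, whose errors propagate into its context. The per-path accumulated error is therefore a sum over $K$ levels of at most $l$ steps each, giving $\Theta(\delta l K)$. Normalizing as in the paper's training objective (the $1/l^{[i]}$ and $1/L_n$ averages) makes this per-path quantity the reported divergence. In contrast, a flattened autoregressive model has a single level with $l=L$, $K=1$, so the same count gives $\Theta(\delta L)$. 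The comparison then follows from $lK=l\log_l L\ll L$ whenever $l\ll L$.

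For the $\Omega$ direction, I lower-bound each term by $\delta$. This requires that the summed terms be nonnegative, which holds because KL is nonnegative, and that there are $lK$ of them along the relevant path. Summing gives $\Omega(\delta lK)$. I would also record the matching $O(\delta lK)$ bound, since the intended message is the tight scaling $\Theta(\delta lK)$.

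The main obstacle is making precise which terms count toward the stated quantity. A naive full-tree chain rule sums over all $\sum_i l^{i+1}\approx L$ tokens, not $lK$, and so would give $\Theta(\delta L)$ again. I would try first to interpret the divergence as per-unit, or path-marginal, KL on $X_{\cdot,L}$ as written in the statement. I would then use the data-processing inequality together with the fact that the unit at position $L$ depends only on its ancestor chain and its siblings' prefixes, which are at most $l$ per level over $K$ levels. If that interpretation proves too weak, the fallback is to state the result explicitly under a normalized per-level error assumption matching the averaged objective.
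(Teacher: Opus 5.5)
There is a genuine gap, and it sits exactly at the step you flag as the main obstacle: nothing in your toolkit gives a \emph{lower} bound on the divergence of the single marginal $p(X_{\cdot,L}\mid c)$ displayed in the statement.

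Summing nonnegative chain-rule terms, each at least $\delta$, lower-bounds only the KL of the joint law of the variables you sum over (your root-to-leaf path variables). That joint is neither the displayed marginal nor the full joint $p(X_{\cdot,1:L}\mid c)$ of the prose. The data-processing inequality then runs the wrong way: it gives $D_{KL}\bigl(p(X_{\cdot,L}\mid c)\,\|\,p(\hat X_{\cdot,L}\mid c)\bigr)\le D_{KL}(\text{path joint})$, so it can only transfer upper bounds to the marginal.

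Your hypotheses are in fact too weak for the marginal claim. Suppose that under both models the unit at position $L$ is independent of everything else, and every conditional factor (its own included) has KL exactly $\delta$. Then all your assumptions hold, yet the marginal KL at $X_{\cdot,L}$ is just $\delta$.

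Two smaller points. Your displayed product $\prod_{i=0}^{K-1}\prod_{j=1}^{l}$ has only $lK$ factors, so as written it describes one path rather than factorizing $p(X_{\cdot,1:L}\mid c)$. Also, the $1/l^{[i]}$ normalization of the objective would give $\Theta(\delta K)$, not $\Theta(\delta lK)$. Under the full-joint reading, your own count gives $\Omega(\delta L)$, which implies $\Omega(\delta lK)$ trivially since $l^K\ge lK$. But then the matching $O(\delta lK)$ you want to record contradicts your own per-step lower bound.

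The missing ingredient is an error-\emph{propagation} assumption on the dynamics, which the paper imports from Prop.~1 of AEC-GAN~\citep{aecgan}. That result says that for an AR(1) process with dominant eigenvalue $\lambda\ge1$ and parameter discrepancy $\delta$, the KL between the true and estimated $l$-step \emph{marginals} (given the initial state) is $\Omega(\delta l)$. The condition $\lambda\ge1$, i.e.\ non-decaying memory, is what rules out examples like the one above.

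The paper's argument runs as follows. It treats the information from previous levels as one variable shared by all steps of the current level. It applies the two-term chain rule $D_{KL}(\text{parent},\text{child})=D_{KL}(\text{parent})+\mathbb{E}\,D_{KL}(\text{child}\mid\text{parent})$, with the cited proposition controlling the second term on the within-patch chain. This yields $D^{[i]}_{KL}=D^{[i-1]}_{KL}+\Omega(\delta\,(j\bmod l))$, an increment of up to order $\delta l$ per level, and the paper inducts over the $K$ levels.

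The paper's identification of the parent--child joint with the next level's marginal is itself informal. Still, its per-level growth rests on a statement about marginals, which is exactly what your chain-rule-plus-DPI count lacks. Your picture of at most $l$ predecessors per level over $K$ levels is the same count. It becomes an $\Omega$ bound only once each level is modeled as such a non-contractive AR chain on level-wise states.
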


\begin{thm}
    \label{thm:scale}
    Let $B$ be the batch size, $L$ the sequence length, $d$ the hidden dimension, and $l$ the local patch size per level. For a hierarchical model like FracTS with $K = \log_l L$ levels, the total activation memory complexity is $\mathcal{O}(BL(l+d))$.
\end{thm}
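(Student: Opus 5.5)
The plan is a memory accounting of the activations stored at each fractal level, summed over levels. Fix level $i\in\{0,\dots,K-1\}$. By the decomposition in Section~\ref{sec:model}, level $i$ processes $N^{[i]}\le B\lceil L/L^{[i]}\rceil$ sequences per batch. Each has decoder length $l^{[i]}=l$, or $l^{[K-1]}=M$ at the final feature level, which we treat as a constant or absorb into $l$. With $L^{[i]}=l^{K-i}$, the number of sequences at level $i$ is $B\,l^{i}$. The first step is to bound the activation memory of one transformer pass on a sequence of length $l$ and width $d$. The token representations contribute $\mathcal{O}(l d)$ per layer. The attention score matrices contribute $\mathcal{O}(l^2)$ per layer and head. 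Treating depth and head count as constants, one sequence costs $\mathcal{O}(l^2+ld)=\mathcal{O}(l(l+d))$. The projector for the skip connection adds only $\mathcal{O}(d)$ per patch, which is dominated by this bound.

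The second step multiplies by the number of sequences. Level $i$ then costs $\mathcal{O}(B\,l^{i}\cdot l(l+d))=\mathcal{O}(B\,l^{i+1}(l+d))$. The third step sums over levels. The sum $\sum_{i=0}^{K-1}l^{i+1}$ is a geometric series dominated by its last term, $l^{K}=L$, so it is at most $\frac{l}{l-1}L\le 2L$ for $l\ge2$. Hence the total is $\mathcal{O}(BL(l+d))$. Equivalently, every level sees in aggregate about $BL/L^{[i+1]}$ tokens, each attending over $l$ positions with a $d$-dimensional state, and the finest level dominates.

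The expected obstacle is the bookkeeping at the boundaries. Two cases need a justification:
\begin{itemize}
\item \emph{Ceilings.} When $L$ is not a power of $l$ or the $L_n$ vary, the terms $\lceil L_n/L^{[i]}\rceil$ appear. I would bound each by $L_n/L^{[i]}+1$. The extra $+1$ terms sum to at most $BK$ sequences, which is $\mathcal{O}(BL)$ since $K\le L$. Skipping empty patches only decreases the count.
\item \emph{The final level.} Its length is $M$ rather than $l$. I would state the assumption $M=\mathcal{O}(l)$, or else report the additive term $\mathcal{O}(BL(M+d))$.
\end{itemize}
It also helps to contrast the result with a flattened transformer, whose $\mathcal{O}(BL(L+d))$ memory comes from the $L^2$ attention matrix. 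This shows the savings arise exactly because the quadratic term is paid only within windows of length $l$.
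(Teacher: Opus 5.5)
Your main argument is the paper's proof. Level $i$ carries $B\lceil L/L^{[i]}\rceil=Bl^{i}$ sequences of length $l$, each costing $\mathcal{O}(l^2+ld)$, and the geometric sum $\sum_{i=0}^{K-1}l^{i+1}\le\frac{l}{l-1}L$ yields $\mathcal{O}(BL(l+d))$. The paper takes $L^{[i]}=L/l^{i}$ at every level, drops the ceilings, and never treats the feature level separately.

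Your boundary remark on the final level is incorrect, however. The feature level runs one sequence per time step, so it sees about $BL$ sequences of length $M$, not $Bl^{K-1}$. With $L^{[i]}=l^{K-i}$, the $K$ temporal levels already bottom out at single time steps, so the feature level is an extra level beyond index $K-1$, not level $K-1$ itself. Its activation cost is therefore $\mathcal{O}(BLM(M+d))$. Two consequences follow:
\begin{itemize}
\item Assuming $M=\mathcal{O}(l)$ does not recover the stated bound; it leaves an extra factor of $l$. A sufficient condition is $M(M+d)=\mathcal{O}(l+d)$, for example $M=\mathcal{O}(1)$.
\item The additive term you would report is $\mathcal{O}(BLM(M+d))$, not $\mathcal{O}(BL(M+d))$.
\end{itemize}
Your ceiling bookkeeping has a smaller slip of the same kind. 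The $BK$ extra sequences each cost $\mathcal{O}(l(l+d))$, so the condition to verify is $Kl=\mathcal{O}(L)$, which holds for $L\ge l\ge 2$, rather than just $K\le L$.
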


\subsection{Learning and Constructing Time Series of Variable Lengths}
\label{sec:model:len}

Unlike fixed-size image datasets, time series datasets often contain variable-length instances due to heterogeneous user behaviors and company histories.
Similarly to the transformers for text~\citep{transformer,gpt1}, we apply attention masks to avoid learning positions with no data. Moreover, models are not trained on completely empty patches, which saves much of training time when $L\gg L_n$ for most $n$.

\newcommand{\qqplot}[1]{\raisebox{-2.2em}{\includegraphics[width=0.4\linewidth, trim={2.2em 2.5em 1em 2.3em}, clip]{figures/qq/#1.pdf}}}
\newcommand{\vheader}[1]{\rowcell{4em}{\rotatebox{90}{#1}}}
\begin{figure}[!htbp]
    \centering
    \includegraphics[width=\linewidth]{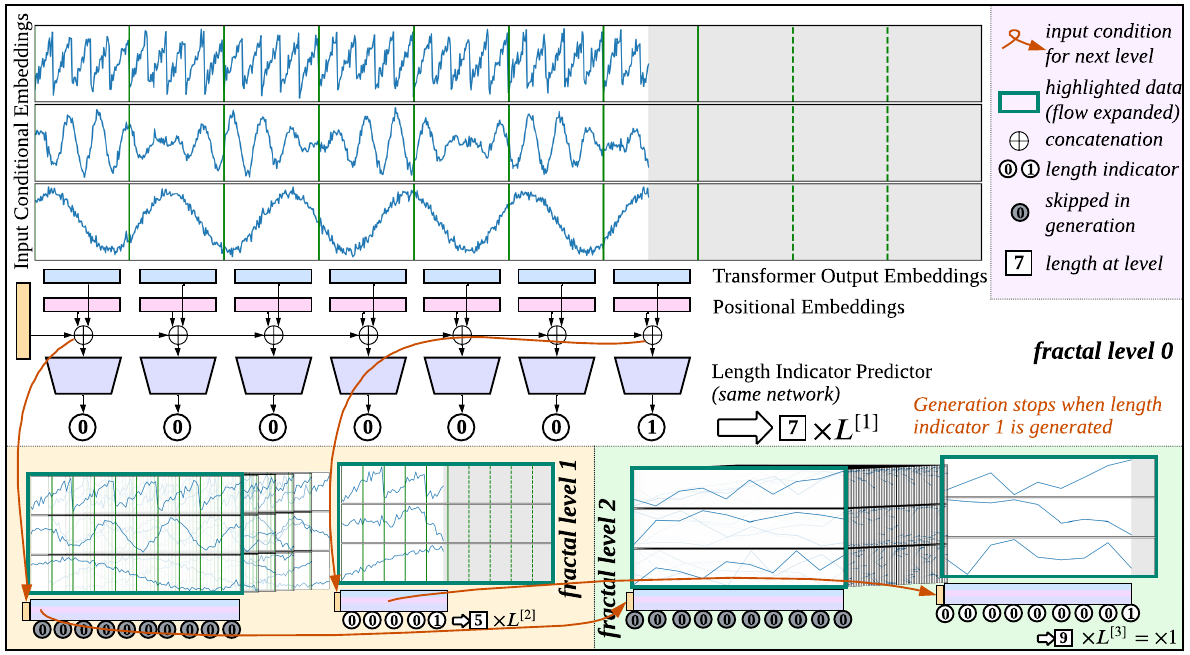}
    \caption{Length construction for variable-length time series data via length indicator predictors (purple trapezoids). At each fractal level, the length indicator is one network (drawn separately for separate calls).
    % Length indicators are predicted from input conditions, current fractal level's output embeddings, and positional embeddings.
    The length indicators are equivalent to is-[EOS] binary signals.
    Details at fractal levels 1-2 are omitted as they are the same as level 0, and the last level, unrelated to lengths, is hidden.
    % The last fractal level is not shown because it is not related to lengths.
    The total length of the example in the figure is $L_n=(7-1)L^{[1]}+(5-1)L^{[2]}+9L^{[3]}=649$.}
    \label{fig:len-ind}
\end{figure}

% \begin{figure}
%     \centering
%     \includegraphics[width=\linewidth]{figures/len-ind.pdf}
%     \caption{Length construction for variable-length time series data via length indicator predictors (purple trapezoids). At each fractal level, the length indicator is one network (drawn separately for separate calls). 
%     % Length indicators are predicted from input conditions, current fractal level's output embeddings, and positional embeddings. 
%     The length indicators are equivalent to is-[EOS] binary signals. 
%     Details at fractal levels 1-2 are omitted as they are the same as level 0, and the last level, unrelated to lengths, is hidden. 
%     % The last fractal level is not shown because it is not related to lengths. 
%     The total length of the example in the figure is $L_n=(7-1)L^{[1]}+(5-1)L^{[2]}+9L^{[3]}=649$.}
%     \label{fig:len-ind}
%     \vspace{-1em}
% \end{figure}

To determine where the time series ends, we use a \textit{length indicator predictor}. The idea is similar to the EOS token for natural languages.
We used an additional simple MLP-based binary classifier to predict whether a position corresponds to the end-of-sequence. We refer to this Boolean signal as the \textit{length indicator}. The process is visualized in Figure~\ref{fig:len-ind}. Since termination signals in time series are often implicit, we additionally provide sequence length as a static feature to the context generator $q_\phi$ and the length predictor. Unlike simple truncation, this design enables termination to depend on local dynamics, such as abrupt changes before a stock leaves the market.

% 
% Technically, from the output embeddings at each position, concatenated with the positional embedding (to emphasize the positional information) and the input conditions (to emphasize global information from upper levels), we predict a Boolean indicating whether this position is the last position with actual values in the sequence, which we call the \textit{length indicator}, via a simple MLP. Each fractal layer has a length indicator predictor module, except for the last one.
% Inputs to the length indicator predictor, along with the length indicator value per se, are passed to the next fractal level as input conditions to obtain the values as well as lengths at the next levels. 
% Fig.~\ref{fig:len-ind} summarizes the process.

\subsection{Handling Multivariate and Heterogeneous Mixed-Typed Features}

% Time series data are mostly stored in tabular format, and can be considered as tabular data with additional sequential correlations. Consequently, it inherits data properties essential to tabular data.
Time series data are typically stored in tabular form and can be viewed as tabular data with additional temporal dependencies.
% We encode each feature into separate positions for the last level's transformer.
To preserve tabular properties, we encode each feature as separate positions for the last level's transformer.
Categorical features are embedded by a vanilla embedding layer. Continuous features are decomposed into two positions: one K-Means binned category embedded similarly to categorical features, and one actual value padded to the embedding dimensions. To decode continuous values back, we take the second position only and pass it through a linear feed-forward layer. This decomposition of continuous value into bin and raw value is inspired by \citep{tabtreeformer} for learning multimodal numeric distributions, but we use the raw numeric value with projections at the second position. Having the auxiliary bin values leading the raw values introduces randomness by multinomial sampling of the leading discrete component. This compensates for the lack of randomness of generation based on embeddings only in transformers~\citep{transformer}.
% The randomness of transformer outputs from autoregressive sampling primarily arises from the multinomial sampling based on the softmax-normalized logits. Therefore, in a pure-numeric time series without static features, generated time series will look very alike.
This encoding method applies to both the input static features and the output time series features. 
% When training, we use cross-entropy loss for categorical positions and mean squared error for numeric positions.

Unlike plain tabular data, time series contain temporal trends. For example, a \$5 increase from yesterday may be as important as the current value itself. We analyzed the data of realistic time series datasets to compare the distributions of raw values and differences to previous time steps. The results are shown in Figure~\ref{fig:diff}, and more in Appendix~\ref{app:diff}. The distributions of differences are closer to normal distributions. In cases where both distributions are far from normal distribution, differences typically avoid one-tailedness, making the result closer to normal distribution. In an extreme but simple case of linearly increasing values, the raw values are uniformly distributed, but differences are constants. The latter is obviously much simpler than the former.
Therefore, to improve the model's capability in capturing the trend, we explicitly insert the difference to the previous time step for numeric features as an additional auxiliary leading numeric feature like the K-Means bins.
% Nevertheless, despite the usefulness of differences, we do not generate values depending merely on the differences because of potentially severe error accumulation over long time series, which is already an issue with long time series using raw values~\citep{aecgan}.
% For simplicity, this difference is not preceded by K-Means bins.

To provide more informative conditions, we compute simple summary statistics of each time series (e.g., mean and standard deviation) as additional static features. However, this produces many condition tokens that can overwhelm the context when directly prepended. To address this, we adopt a Perceiver-style condition compression module~\citep{perciever}, which distills these features into a compact representation, avoiding context overload. The full pipeline is illustrated in Figure~\ref{fig:data-proc}.

% small table wrapped on the right, raised; text flows beside and beneath it
\begin{wraptable}{r}{0.46\linewidth}
\vspace{-18pt}
    \centering
    \renewcommand{\arraystretch}{1.25}
    \captionof{table}{Summary of the datasets used and their core statistics. Notations follow Section~\ref{sec:prob}; $\overline{L_n}$ is the average sequence length.}
    \label{tab:datasets}
    \small
    \resizebox{\linewidth}{!}{
    \begin{tabular}{lrrrrrl}
        \toprule
        \rowcolor{headband}
        \textbf{Dataset} & $\bm{N}$ & $\bm{M_c}$ & $\bm{\overline{L_n}}$ & $\bm{L}$ & $\bm{M}$ & \textbf{Type} \\
        \midrule
        Weather   & 44   & 0 & 1798  & 2171  & 17 & unconditional \\
        Sensor    & 6939 & 0 & 51    & 139   & 17 & unconditional \\
        Stocks    & 1000 & 0 & 1278  & 1511  & 6  & unconditional \\
        Walmart   & 45   & 2 & 9368  & 10474 & 3  & conditional   \\
        Simulated & 100  & 4 & 15000 & 15000 & 15 & conditional   \\
        \bottomrule
    \end{tabular}
    }
\end{wraptable}
% \begin{table}[t]
%     \centering
%     \caption{Summary of the datasets used and their core statistics. Notations follow Section~\ref{sec:prob}; $\overline{L_n}$ is the average sequence length.}
%     \label{tab:datasets}
%     \footnotesize
%     \setlength{\tabcolsep}{4pt}
%     \begin{tabular}{lrrrrrl}
%         \toprule
%         \rowcolor{headband}
%         \textbf{Dataset} & $\bm{N}$ & $\bm{M_c}$ & $\bm{\overline{L_n}}$ & $\bm{L}$ & $\bm{M}$ & \textbf{Type} \\
%         \midrule
%         Weather   & 44   & 0 & 1798  & 2171  & 17 & unconditional \\
%         Sensor    & 6939 & 0 & 51    & 139   & 17 & unconditional \\
%         Stocks    & 1000 & 0 & 1278  & 1511  & 6  & unconditional \\
%         Walmart   & 45   & 2 & 9368  & 10474 & 3  & conditional   \\
%         Simulated & 100  & 4 & 15000 & 15000 & 15 & conditional   \\
%         \bottomrule
%     \end{tabular}
% \end{table}

\section{Experiments}
\label{sec:exp}

\subsection{Experiment Setup}
\label{sec:setup}
\label{sec:exp:setup}

% \subsubsection{Datasets}

% \paragraph{Real-world Multivariate Datasets.}

\paragraph{Datasets.}
We evaluate our models' unconditional generation ability on three real-world multivariate time series datasets from diverse domains, each posing distinct challenges due to their size, sequence length, and data characteristics: \textbf{Weather}~\citep{weatherdataset}, \textbf{Sensor}~\citep{sensordataset}, and \textbf{Stock}~\citep{stocksdataset}.
For conditional generation, we use \textbf{Walmart} dataset~\citep{walmartdataset}.
% More details of the setup can be found in Appendix~\ref{app:dataset}
%
%
% \paragraph{Simulated Multi-Pattern Dataset.}
We also construct a simulated dataset from mathematical functions to capture diverse temporal patterns, including linear, periodic, stepwise, convex, non-convex, modulated, and composite forms. Due to its mathematical design, this dataset is inherently conditional.
% We also create a simulated dataset from mathematical functions to represent a wide variety of temporal patterns and investigate the capability of different models on learning these patterns.
% This dataset is designed to be a conditional one due to its mathematical nature.
% This is a conditional dataset, and it covers linear, periodic, stepwise, convex, non-convex, modulated, and composite patterns.
% Details of the mathematical formulas are in Appendix~\ref{app:dataset:simulate}.
% More details of the datasets are provided in Appendix~\ref{app:dataset}.
Table~\ref{tab:datasets} summarizes the five datasets; more details are provided in Appendix~\ref{app:dataset}, with overall statistics in Appendix~\ref{app:dataset:overall} and the generating formulas of the simulated dataset in Appendix~\ref{app:dataset:simulate}.

\begin{figure}[!t]
\centering
\def\qqpanelH{6.0cm}
\def\qqImgH{138pt}
\begin{subfigure}[t]{0.48\linewidth}
    \centering
    \begin{minipage}[c][\qqpanelH][c]{\linewidth}
    \centering
    \setlength{\tabcolsep}{2pt}
    \resizebox*{!}{\qqImgH}{
    \begin{tabular}{
    >{\centering\arraybackslash}m{0.05\linewidth}
    >{\centering\arraybackslash}m{0.05\linewidth}
    cc
    }
        \toprule
        & & \textbf{Value} & \textbf{Difference} \\
        \midrule
        \vheader{Stock} & \vheader{Open} & \qqplot{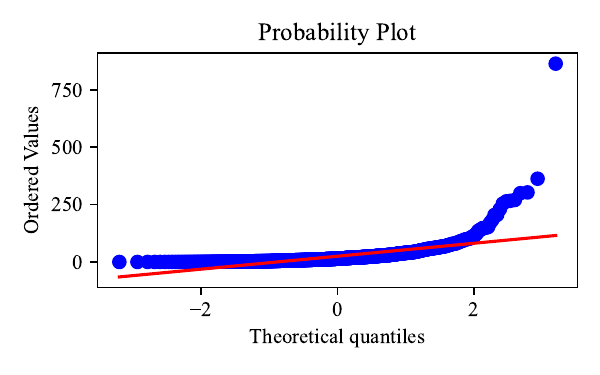} & \qqplot{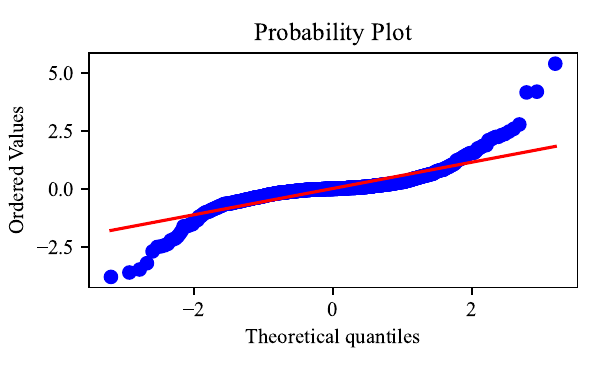} \\
        \vheader{Weather} & \vheader{MinTemp} & \qqplot{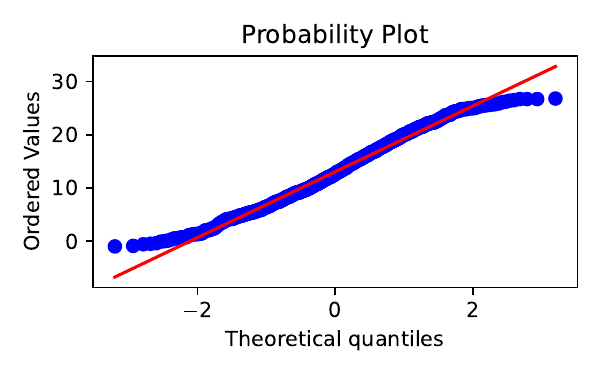} & \qqplot{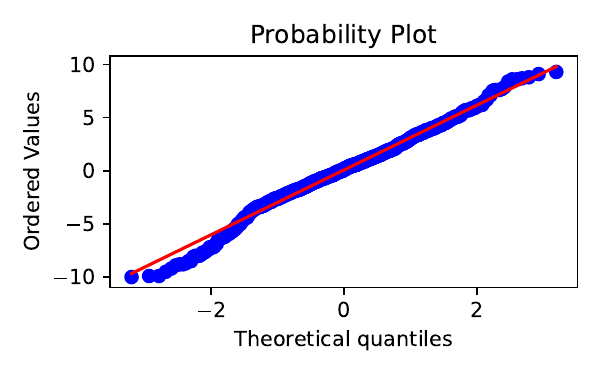}\\
        \vheader{Sensor} & \vheader{Data\_S} & \qqplot{sensor-data_S-values} & \qqplot{sensor-data_S-diff}\\
        \bottomrule
    \end{tabular}
    }
    \end{minipage}
    \caption{Q--Q plots comparing the distributions of raw values and differences of time series against the normal distribution. The closer the blue points lie along the red diagonal line, the more normally distributed the data.}
    \label{fig:diff}
\end{subfigure}\hfill
\begin{subfigure}[t]{0.50\linewidth}
    \centering
    \begin{minipage}[c][\qqpanelH][c]{\linewidth}
    \centering
    \includegraphics[height=\qqImgH]{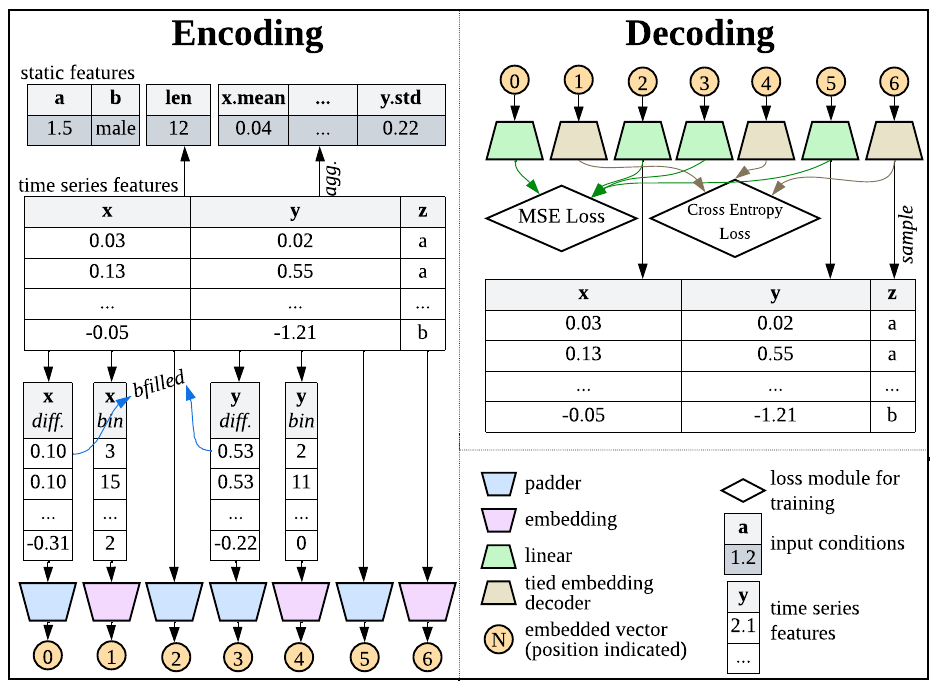}
    \end{minipage}
    \caption{Data processing of time series data. The left and right halves show the data \textbf{encoding} and \textbf{decoding} processes, respectively. Processing of static features, including length and aggregated values, is omitted as they follow that of time series features.}
    \label{fig:data-proc}
\end{subfigure}
\end{figure}

\paragraph{FracTS Implementation.} 
We implement FracTS using a hierarchical architecture with constant patch size $l^{[i]} = 20$ across all fractal levels $i \in \{1, \ldots, L\}$. Each fractal level employs a GPT2 small transformer~\citep{gpt1} as its base model $\mathcal{M}^{[i]}$. 
For optimization, we use AdamW~\citep{loshchilov2018decoupled} with hyperparameters: learning rate $\eta = 2 \times 10^{-4}$, weight decay $\lambda = 0.01$, and gradient clipping threshold $g_{\text{clip}} = 1.0$. Training is performed for a fixed number of $T = 200$ epochs, and the best model is selected based on the lowest validation loss observed during training. We evaluate model performance on the validation set after each epoch to monitor convergence.Further implementation details are provided in Appendix~\ref{app:fracts}, including the datetime and aggregation processing in Appendices~\ref{app:data:datetime} and~\ref{app:data:agg} and the full experimental configuration in Appendix~\ref{app:exp:config},
while component ablations and hyperparameter sensitivity analyses are
provided in Appendix~\ref{app:ablation}.

\paragraph{Baselines.}
For unconditional generation, we compare our model with recent baselines for synthetic time series generation. These include diffusion models: Diffusion-TS~\citep{diffusionts} and PaD-TS~\citep{padts}; GANs: COSCI-GAN~\citep{coscigan} and AEC-GAN~\citep{aecgan}; and VAEs: TimeVQVAE~\citep{tvqvae} and Koopman VAE (KoVAE)~\citep{kovae}. Since most of these methods do not support conditional generation, we use three different baselines for the conditional setting. These include an older but influential model, WaveStitch~\citep{wavestitch}, alongside two recent architectures: TabularARGN~\citep{tabularargn}, an autoregressive network, and TimeAutoDiff~\citep{timeautodiff}, a latent diffusion model. A detailed introduction for each baseline is provided in Appendix~\ref{app:baseline}.

\paragraph{Evaluation Metrics.}

% \paragraph{Unconditional Generation.}

% \label{subsec:uncond}

% \paragraph{Setup:}
% We evaluate unconditional quality by comparing many independent real time series (e.g., different stocks or ECGs) to an equally sized set of synthetic series generated by our model. Each series is a CSV with one or more columns (variables). For every column we extract simple, model-agnostic features that summarize distributional and temporal behavior: \emph{min}, \emph{max}, \emph{mean}, \emph{std}, \emph{skew}, \emph{kurtosis}, and \emph{autocorrelation at lag 1} (and a few similar basics). Aggregating these features across all files gives us a feature distribution for the real set and the synthetic set.

% \paragraph{Metrics:}

% We compute three complementary metrics of distributional similarity:
% (i) Kolmogorov--Smirnov (KS) statistic,
% (ii) Jensen--Shannon (JS) divergence,
% (iii) Cohen's $d$ (standardized mean difference).
% We also assess downstream utility with a forecasting task using two protocols: 
% \emph{TSTR} (Train on Synthetic, Test on Real) and \emph{TRTS} (Train on Real, Test on Synthetic). 
% We report RMSE for both and summarize stability via
% \[
% \Delta\mathrm{RMSE} \;=\; \big| \mathrm{RMSE}_{\text{TSTR}} - \mathrm{RMSE}_{\text{TRTS}} \big|.
% \]
% Smaller values indicate closer alignment of generalization behavior between real and synthetic sets. More details are found in Appendix~\ref{app:eval}.

We evaluate the quality of synthetic time series data in three aspects: (i) \textit{marginal distributions (MD)}: evaluated by Wasserstein distance, normalized
by the standard deviation of the real data, simple aggregation (mean, std., quantiles (5\%, 50\%, 95\%), skewness, auto-corr. (lag 1)) of each column, and the lengths respectively, adapted but augmented from common feature-based measures~\citep{tsgbench}; (ii) \textit{correlation preservation (CP)}: evaluated by MAE on the correlation matrix, commonly used for multivariate time series generation~\citep{coscigan}, but extended beyond raw values to aggregated results similarly to (i); (iii) \textit{predictive ML utility (PM)}: evaluated on the performance downstream application of next step prediction following the \textit{TSTR} (Train on Synthetic, Test on Real) framework~\citep{tsgbench,diffusionts}, for which we report MAE (mean absolute error) scores. We additionally report a windowed discriminative evaluation (discriminative accuracy minus $0.5$, lower is better) in Appendix~\ref{app:discriminative}.

All metrics for unconditional generation are also applied for conditional generation, except that static attributes are taken into consideration for CP and PM.
For simulated dataset in particular, the groundtruth (GT) time series can be calculated based on the generated static conditions according to their mathematical formulas. Thus, we also report the MAE (mean-absolute error) and MSE (mean-squared error) to the GT on this dataset. All results reported are based on three runs.
More details for all metrics are provided in Appendix~\ref{app:eval}.

% global variable change

\begin{table}[t]
\centering
\caption{Unconditional generation performance averaged over the three datasets (lower is better; best in \textbf{bold}). MD and CP are averaged across column pairs, and MAE uses standard-normalized values; PM is the $|\text{TSTR}-\text{TRTR}|$ MAE gap at prediction window $W$. The bottom row ($\Delta$best baseline) reports, for each column, the relative error of FracTS against the best baseline, computed on these dataset-averaged raw values. Per-dataset results with standard deviations over three runs are reported in Appendix~\ref{app:full-results}.}
\label{tab:uncond_results}
\small
\setlength{\tabcolsep}{5pt}
\resizebox{\linewidth}{!}{
\begin{tabular}{ll cccccc cc ccc}
\toprule
 & & \multicolumn{6}{c}{\cellcolor{headband}\textbf{MD} $\downarrow$} & \multicolumn{2}{c}{\cellcolor{headband}\textbf{CP} $\downarrow$} & \multicolumn{3}{c}{\cellcolor{headband}\textbf{PM} $\downarrow$} \\
\cmidrule(lr){3-8}\cmidrule(lr){9-10}\cmidrule(lr){11-13}
 & \textbf{Model} & raw & mean & quantile & skewness & autocorr. & length & raw & agg. & $W{=}10$ & $W{=}100$ & $W{=}1000$ \\
\midrule
 & AEC-GAN & 0.524 & 0.683 & 0.697 & 0.617 & 0.609 & \Na & 0.288 & 0.315 & $4\,052$ & $4\,171$ & $8\,233$ \\
 & KoVAE & 0.675 & 0.708 & 0.673 & 0.784 & 0.701 & 0.956 & 0.261 & 0.489 & $7\,461$ & $7\,680$ & $15\,156$ \\
 & DiffusionTS & 0.695 & 0.932 & 0.915 & 0.686 & 0.775 & 0.956 & 0.466 & 0.542 & $2\,524$ & $2\,598$ & $5\,129$ \\
 & TimeVQVAE & 0.634 & 0.726 & 0.700 & 0.496 & 0.732 & 0.956 & 0.183 & 0.443 & $6\,822$ & $7\,022$ & $13\,861$ \\
 & COSCI-GAN & 0.557 & 0.799 & 0.748 & 0.839 & 0.858 & 0.956 & 0.462 & 0.473 & $2\,392$ & $2\,462$ & $4\,858$ \\
 & PaD-TS & 0.412 & 0.495 & 0.513 & 0.618 & 0.714 & 0.956 & 0.305 & 0.470 & $3\,841$ & $3\,954$ & $7\,804$ \\
\rowcolor{fractsrow}
 & \textbf{FracTS (Ours)} & \textbf{0.314} & \textbf{0.344} & \textbf{0.360} & \textbf{0.270} & \textbf{0.356} & \textbf{0.442} & \textbf{0.054} & \textbf{0.222} & \textbf{0.006} & \textbf{0.007} & \textbf{0.010} \\
\midrule
 & $\Delta$best baseline & \dgood{23.87\%} & \dgood{30.39\%} & \dgood{29.89\%} & \dgood{45.50\%} & \dgood{41.52\%} & \dgood{53.75\%} & \dgood{70.44\%} & \dgood{29.42\%} & \dgood{100.00\%} & \dgood{100.00\%} & \dgood{100.00\%} \\
\bottomrule
\end{tabular}
}
\end{table}

\begin{figure}[!htbp]
  \centering
  \includegraphics[width=\linewidth, keepaspectratio]{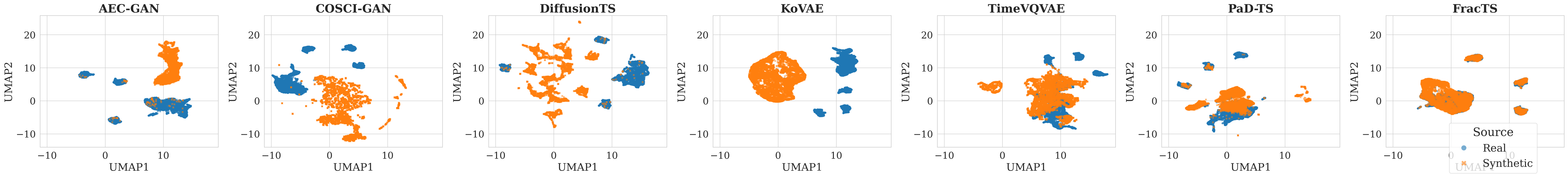}
  \caption{t-SNE of real compared to generated time series data of the Weather dataset.}
  \label{fig:weather_uncond_single_1x6}
\end{figure}
\subsection{Results on Unconditional Generation}

% As shown in Table~\ref{tab:uncond_results}, our model achieves the best scores on almost all metrics. 
{As shown in Table~\ref{tab:uncond_results}, our model achieves the best scores on all metrics, with up to 53.8\%, 70.4\%, and 100.0\% relative improvement from the best baseline in MD, CP, and PM, respectively (relative error computed on dataset-averaged raw values). Figure~\ref{fig:weather_uncond_single_1x6} visualizes the comparison of the outcomes from different models. The corresponding t-SNE visualization for the Sensors dataset is provided in Appendix~\ref{app:window_tsne}.}
This shows {our model} aligns well with the true distribution and works great for downstream tasks.
Additional windowed t-SNE visualizations for all datasets are provided in Appendix~\ref{app:window_tsne}.

% Noticeably, no baselines work on variable-length settings, and even if we add na\"ive zero padding pre- and post-processing (see Appendix~\ref{app:baseline}) to make them applicable, they hardly learn any useful length information, and the prevalence of zero values introduced by padding hinders these models from learning true data distribution.
% The improvement on MD can also be benefited from aggregation as auxiliary features, which provide hints to the model on the data distribution. The improvement on CP is likely also a consequence of our fine-grained last level's learner and the type-specific data processing pipeline. 

Notably, existing baselines do not support variable-length inputs; even with na\"ive zero-padding (Appendix~\ref{app:baseline}), they fail to learn meaningful length information and suffer from distribution distortion due to excessive padding. The gains on MD benefit from aggregated auxiliary features that better characterize data distributions, while improvements on CP stem from our fine-grained last level's learner and type-specific processing pipeline.

% FracTS preserves the marginal distribution better than all baselines. This holds true across all datasets and processing functions, such as raw values, aggregations, and lengths. We see clear improvements in length distribution, raw data, standard deviations, and quantile values. This proves that FracTS generates very diverse data. This is likely because we include aggregation values in the conditions. FracTS also captures the correlations between variables much better. We see improvements ranging from 27\% to 89\% over the best baselines on raw value correlations. We believe this comes from our fine-grained decomposition. The model goes down to the level of each variable at each time step. It then explicitly learns their relations at the final fractal level. 

% Moreover, FracTS shows excellent utility on downstream PM tasks for most domains. For example, it reaches a 100\% improvement on the Stocks dataset. It also cuts the error by over 50\% on the Weather dataset. While one baseline performs slightly better on the Sensor PM task, FracTS clearly provides the best overall results. Finally, the t-SNE plot in Figure~\ref{fig:weather_uncond_single_1x6} visually confirms that our generated data is highly realistic.

{The superior predictive ML utility, across various window sizes show FracTS outstanding capability in learning long time series. While other models are still able to run, their performance degrades significantly compared to their performance on shorter time series. The exceptional performance of FracTS on Stocks dataset further highlights that our model particularly shine when applied on long time series with abundant training data.}

% \jiayu{\textit{Comment: A few very large negative values do stand out. Could anyone collect the data by showing the average performance (maybe w/o std) in main paper, and move current table 3,4 to appendix? }}
% \jiayu{\textit{Comment: may also underline the second best, so that almost all values from our model gets some annotations around.}}
% \jiayu{\textit{Comment: Almost rewritten. The original experiment analysis was too repetitive and verbose, with very few important information that is really expected as an analysis}}
% \jiayu{\textit{Comment: Need to fill the first sentence with actual (mean, not max) values.}}

\begin{table}[t]
\centering
\caption{Conditional generation performance averaged over the two datasets (lower is better; best in \textbf{bold}). MD, CP, and GT are averaged across all column pairs, and MAE is computed on standard-normalized values; CP additionally accounts for the static conditions (agg.+cond.). PM is the $|\text{TSTR}-\text{TRTR}|$ MAE gap at prediction window $W$; GT reports the error against the ground truth of the simulated dataset. The bottom row ($\Delta$best baseline) reports, for each column, the relative error of FracTS against the best baseline, computed on these dataset-averaged raw values. Per-dataset results with standard deviations over three runs are reported in Appendix~\ref{app:full-results}.}
\label{tab:walmart_cond_quality}
\small
\setlength{\tabcolsep}{5pt}
\resizebox{\linewidth}{!}{
\begin{tabular}{ll cccccc cc ccc cc}
\toprule
 & & \multicolumn{6}{c}{\cellcolor{headband}\textbf{MD} $\downarrow$} & \multicolumn{2}{c}{\cellcolor{headband}\textbf{CP} $\downarrow$} & \multicolumn{3}{c}{\cellcolor{headband}\textbf{PM} $\downarrow$} & \multicolumn{2}{c}{\cellcolor{headband}\textbf{GT} $\downarrow$} \\
\cmidrule(lr){3-8}\cmidrule(lr){9-10}\cmidrule(lr){11-13}\cmidrule(lr){14-15}
 & \textbf{Model} & raw & mean & quantile & skewness & autocorr. & length & raw & agg.+cond. & $W{=}10$ & $W{=}100$ & $W{=}1000$ & MSE & MAE \\
\midrule
 & TabularARGN & \textbf{0.154} & 0.731 & 0.710 & 0.748 & 0.994 & 0.278 & \textbf{0.069} & \textbf{0.345} & 0.056 & 0.084 & 0.050 & 1.759 & 0.952 \\
 & WaveStitch & 0.506 & 0.916 & 0.890 & 0.842 & 0.996 & 0.445 & 0.146 & 0.500 & 0.059 & 0.030 & 0.057 & 1.233 & 0.884 \\
 & TimeAutoDiff & 0.353 & 0.769 & 0.782 & 0.797 & 0.898 & 0.445 & 0.228 & 0.421 & 0.406 & 0.420 & 0.392 & 144.260 & 5.103 \\
\rowcolor{fractsrow}
 & \textbf{FracTS (Ours)} & 0.172 & \textbf{0.611} & \textbf{0.645} & \textbf{0.605} & \textbf{0.834} & \textbf{0.139} & 0.074 & 0.409 & \textbf{0.011} & \textbf{0.007} & \textbf{0.009} & \textbf{0.983} & \textbf{0.545} \\
\midrule
 & $\Delta$\textit{best baseline} & \dbad{$-$11.36\%} & \dgood{16.42\%} & \dgood{9.02\%} & \dgood{19.17\%} & \dgood{7.13\%} & \dgood{50.00\%} & \dbad{$-$7.30\%} & \dbad{$-$18.72\%} & \dgood{79.28\%} & \dgood{75.00\%} & \dgood{81.00\%} & \dgood{20.28\%} & \dgood{38.35\%} \\
\bottomrule
\end{tabular}
}
\end{table}

\subsection{Results on Conditional Generation}

FracTS also shows strong results for conditional generation, as shown in  Table~\ref{tab:walmart_cond_quality} and Figure~\ref{fig:simulated_walmart_cond_1x6}. It outperforms the best baseline by up to 50.0\%, 75.0--81.0\% and 20.3--38.4\% in MD, PM and GT, respectively (relative error computed on dataset-averaged raw values). For CP, FracTS and TabularARGN show competitive performance across different scores. Nevertheless,  both methods consistently outperform the remaining baselines. On the three columns with a negative $\Delta$ (MD raw, CP raw and CP agg.+cond.), the two methods win on different datasets and their dataset-averaged scores differ only by 0.005 to 0.064, while FracTS is best or tied-best on all temporal and predictive columns on both datasets; Appendix~\ref{app:full-results} reports the per-dataset scores and standard deviations. We additionally provide windowed t-SNE visualizations for the conditional
Walmart and Simulated datasets at window sizes
$W\in\{10,100,1000\}$ in Appendix~\ref{app:window_tsne}.

The most important metric for simulated dataset is GT. We show the visualized samples of the simulated dataset in Appendix~\ref{app:exp:vis}. FracTS clearly learns these mathematical formulae better. This gain manifests the advantage of our data processing in both static and temporal features, and our model's outstanding ability in learning long temporal and static dependencies. 

% First, FracTS is excellent at capturing ground truth behavior. On the Simulated dataset, it achieves the lowest MSE and MAE scores. This is a clear improvement over the baselines. We credit this success to our specific data processing steps and fine-grained learning approach. Second, FracTS generates data with a highly accurate distribution. On the Walmart dataset, it beats all baselines across every single marginal distribution (MD) metric. On the Simulated dataset, it still leads in mean, skewness, and auto-correlation. 

% The model also handles correlations well. FracTS preserves raw data correlations best on the Walmart dataset. On the Simulated dataset, it achieves the best score for aggregated and conditional correlations. While TabularARGN wins a few specific correlation metrics, FracTS remains highly competitive overall. Finally, the data generated by FracTS is very useful for downstream tasks. It achieves the best Predictive Margin (PM) scores across all time windows on both datasets. This proves that FracTS successfully captures the hidden, true behaviors of conditional time series data.

% \jiayu{\textit{Comment: also almost rewritten for the same reason.}}

\begin{figure}[!htbp]
  \centering
  \begin{minipage}{0.49\textwidth}
    \centering
    \includegraphics[width=\linewidth]{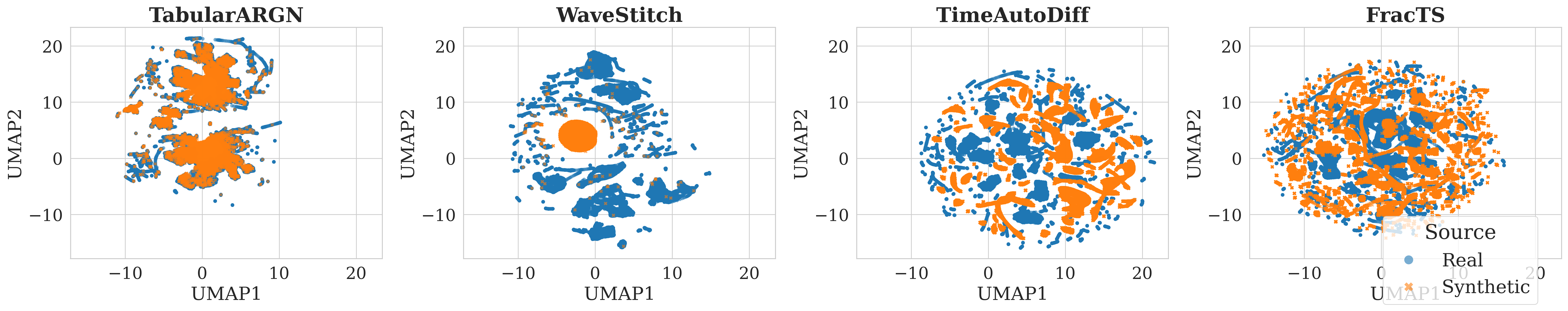}
    \caption*{\small\textbf{(a) Simulated}}
  \end{minipage}
  \hfill
  \begin{minipage}{0.49\textwidth}
    \centering
    \includegraphics[width=\linewidth]{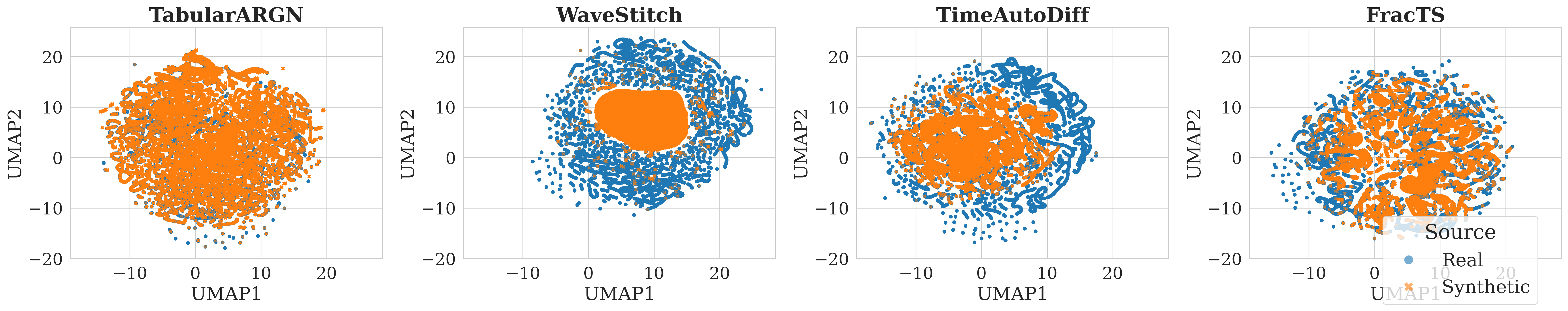}
    \caption*{\small\textbf{(b) Walmart}}
  \end{minipage}
  \caption{t-SNE of real compared to generated time series data on conditional generation datasets.}
  \label{fig:simulated_walmart_cond_1x6}
\end{figure}

\section{Conclusion}
% In this paper, we propose FracTS - a hierarchical and autoregressive time series generative model. FracTS leverages the framework of fractal generative model, which is originally designed for image, to applying for time series. FracTS introduces an effective time series preprocessing strategy which can encode data in a much more representative format for generative model training. FracTS is also engineered to support variable length time series, conditional time series generation and time series with categorical features. Experiments showed that FracTS outperforms seven competitive baselines on five datasets.
% In this paper, we introduced FracTS, a hierarchical and autoregressive generative model for time series. Building on the fractal generative framework originally designed for images, FracTS adapts it effectively to the time series domain. Our approach incorporates a novel preprocessing strategy that, considering the temporal features, encodes data into a more expressive representation. Furthermore, FracTS is designed to handle practical challenges in real-world time series: it supports variable-length sequences, conditional generation, and categorical features in the sequence. Extensive experiments across five datasets demonstrate that FracTS consistently outperforms seven baselines on: i) marginal distributions, ii) correlation preservation and iii) predictive ML utility.
\jiayu{In this paper, we introduced FracTS, a hierarchical and autoregressive generative model for time series. The model allows high-quality time series generation in a much less constrained setup than existing models in terms of the time series lengths and feature types and distributions. Building from a model originally designed for images, FracTS introduces significant adjustments for adaptation to time series domain, including a novel time series data processing pipeline optimized for the transformers backbone architecture, and correspondingly adjusted network structure. Extensive experiments across five datasets demonstrate that FracTS consistently outperforms seven baselines on distribution fidelity and downstream utility. We discuss the limitations of FracTS in Appendix~\ref{app:limitations}.}

% \jiayu{\textit{Comment: rewritten. the original version looks like it's incremental.}}

\clearpage
\bibliography{neurips_2026}
\bibliographystyle{unsrtnat}
%%%%%%%%%%%%%%%%%%%%%%%%%%%%%%%%%%%%%%%%%%%%%%%%%%%%%%%%%%%%

\clearpage
\appendix
\raggedbottom

% \section{Appendix / supplemental material}

\section{More Results of Step-wise Differences versus Raw Values}
\label{app:diff}

In Figure~\ref{fig:diff}, we show the Q-Q plots for the normal distribution of one feature from three datasets each, and for all three features, the distribution of differences is closer to a normal distribution than raw values. These are not individual cases, but a common phenomenon in many datasets. In this section, we show the Q-Q plots of more features from more datasets. Details of datasets will be introduced with experiments, or found in Appendix~\ref{app:dataset}.

\renewcommand{\qqplot}[1]{\raisebox{-2.2em}{\includegraphics[width=0.19\linewidth, trim={2.2em 2.5em 1em 2.3em}, clip]{figures/qq/#1.pdf}}}
\begin{table}[!htbp]
    \centering
    \setlength{\tabcolsep}{3pt}
    \resizebox{\linewidth}{!}{
    \begin{tabular}{
    >{\centering\arraybackslash}m{0.03\linewidth}
    ccccc}
    \toprule
         & \textbf{Stock} & \textbf{Weather} & \textbf{Sensor} & \textbf{Sensor} & \textbf{Sensor}\\
         & Volume & WindGustSpeed & Data\_Sent\_To\_BS & Dist\_To\_CH & Expaned Energy \\
    \midrule
     \vheader{Value} & \qqplot{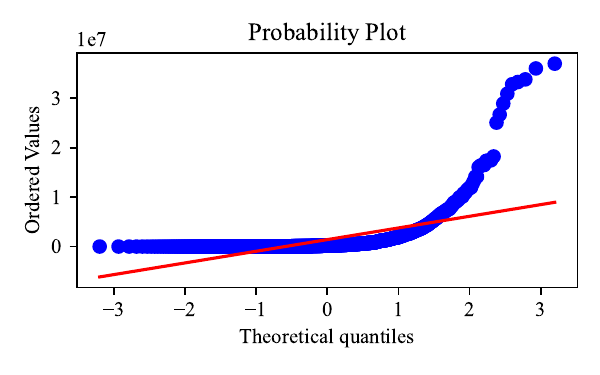} & \qqplot{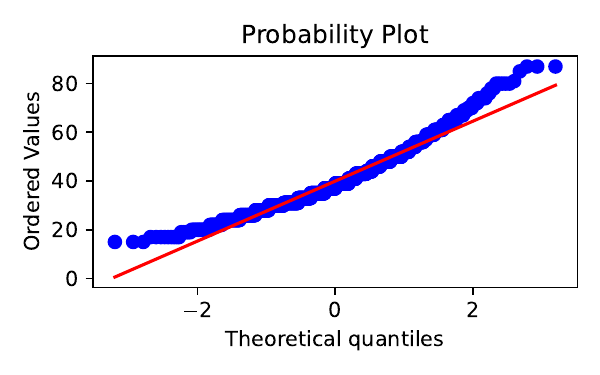} & \qqplot{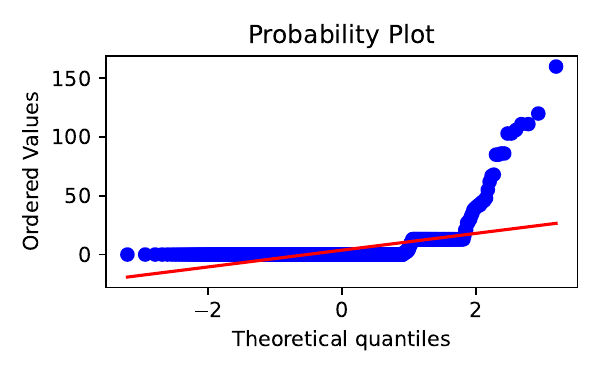} & \qqplot{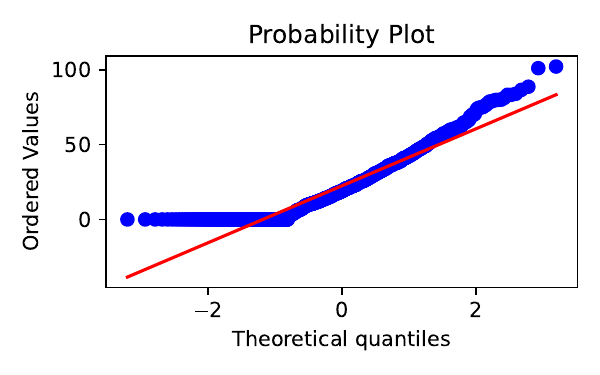} & \qqplot{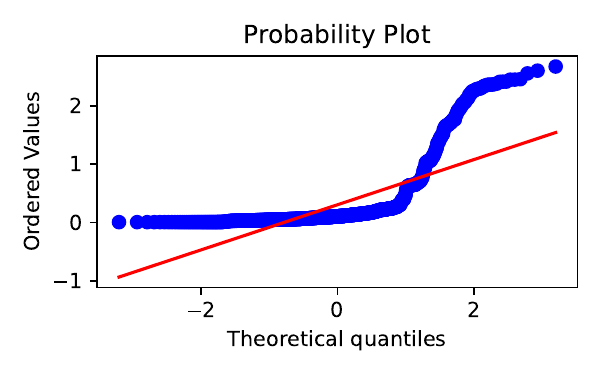}\\
     \vheader{Diff.} & \qqplot{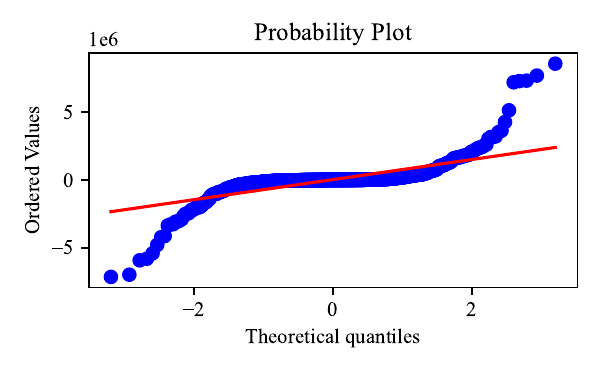} & \qqplot{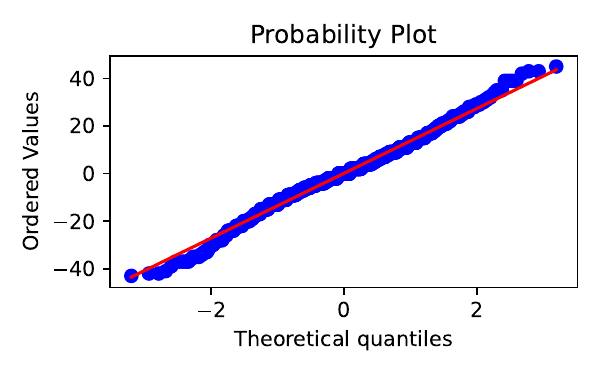}  & \qqplot{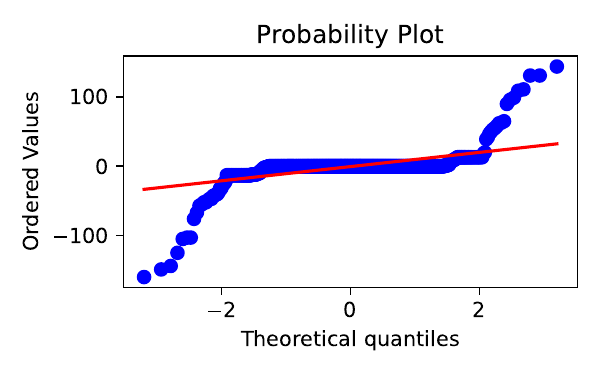} & \qqplot{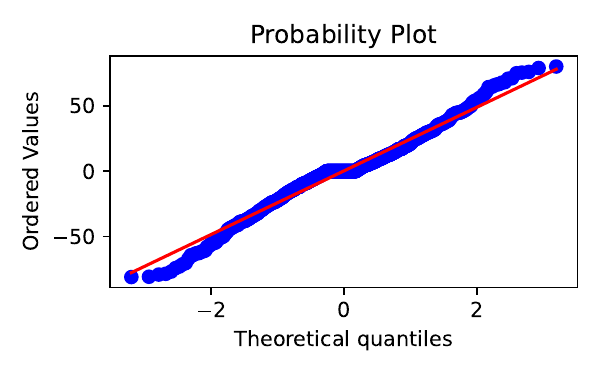} & \qqplot{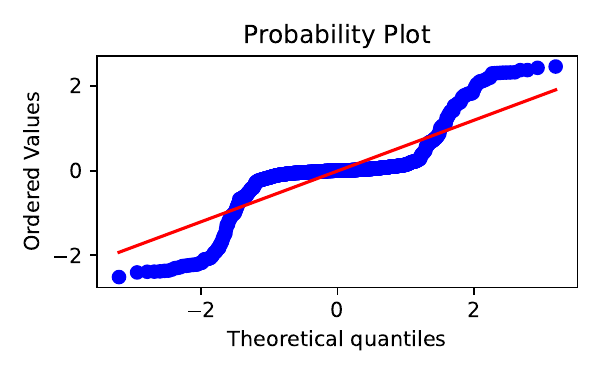}\\
    \bottomrule
    \end{tabular}
    }
    \inlinecaption{figure}{Comparison to normal distribution of the distribution of raw values and differences of consecutive timesteps in Q-Q plots of more features from real-life time series datasets.}{fig:more-diff}
\end{table}

In Figure~\ref{fig:more-diff}, we show more features from the realistic datasets we experimented with. Highly synchronized features are omitted, such as close and high for stocks, as open is already presented. It is clearly shown that the distributions of differences are closer to a normal distribution than the raw values. Indeed, there are cases where neither differences nor raw values are close to normal distribution, computing differences at least transforms the one-sided tailed distributions to two-sided tailed distributions, which is closer to the nature of normal distributions. The distribution of differences is at least not farther from a normal distribution than the distribution of raw values.

\begin{table}[!htbp]
    \centering
    \setlength{\tabcolsep}{3pt}
    \resizebox{\linewidth}{!}{
    \begin{tabular}{
    >{\centering\arraybackslash}m{0.03\linewidth}
    ccccc}
    \toprule
         & \textbf{col2} & \textbf{col4} & \textbf{col9} & \textbf{col12} & \textbf{col15}\\
         & trigonometric & step linear & periodic ramp & composite sin exp & Gabor-like \\
    \midrule
     \vheader{Value} & \qqplot{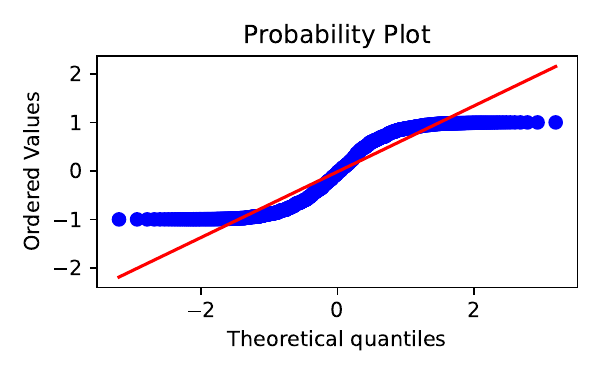} & \qqplot{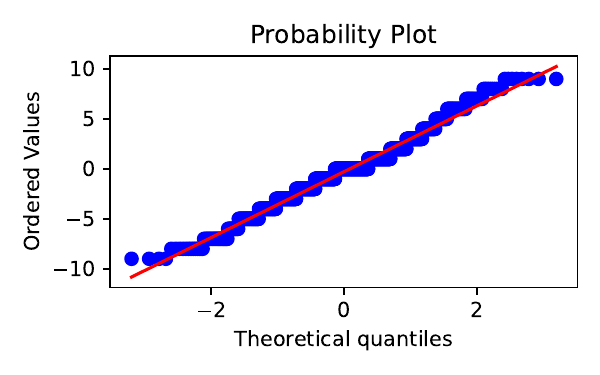} & \qqplot{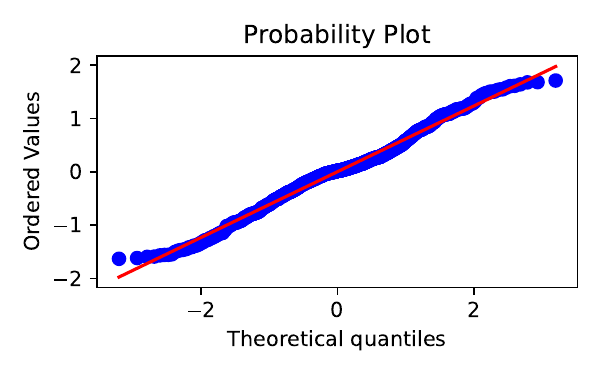} & \qqplot{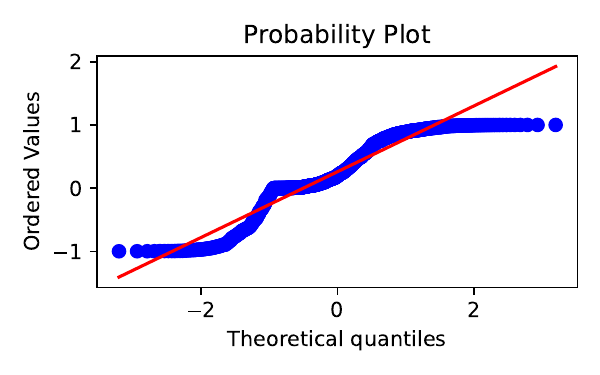} & \qqplot{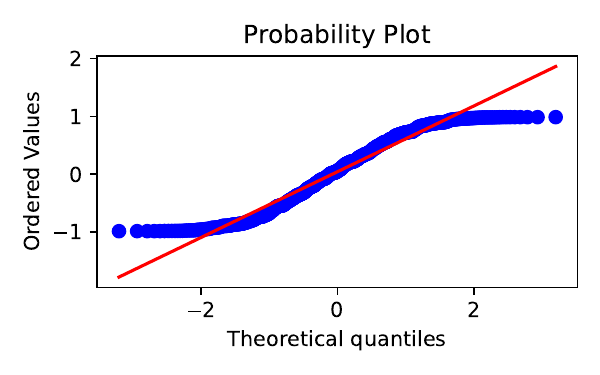}\\
     \vheader{Diff.} & \qqplot{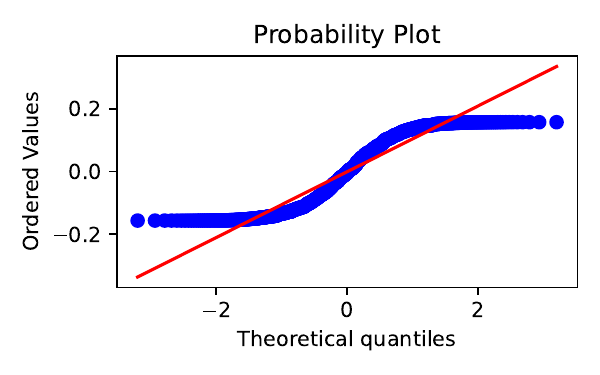} & \qqplot{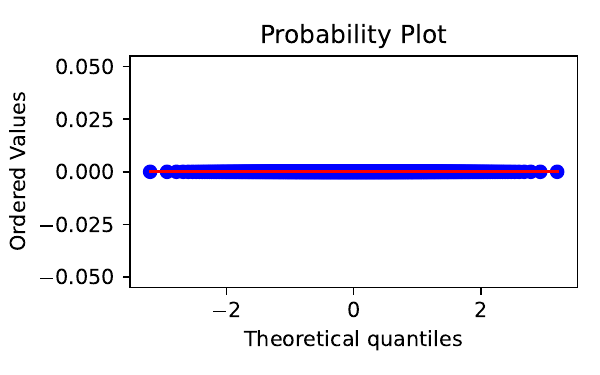} & \qqplot{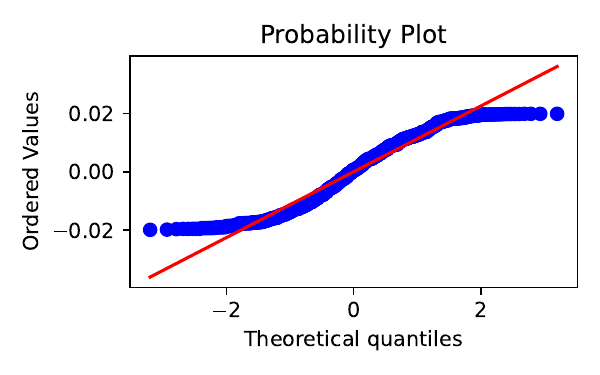} & \qqplot{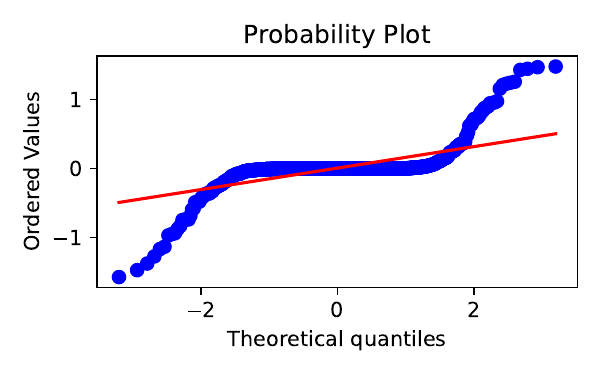} & \qqplot{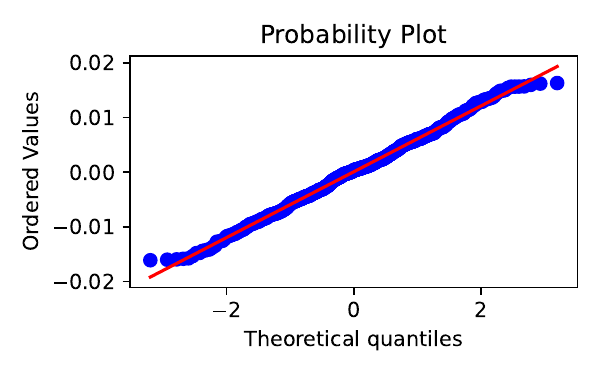}\\
    \bottomrule
    \end{tabular}
    }
    \inlinecaption{figure}{Comparison to normal distribution of the distribution of raw values and differences of consecutive timesteps in Q-Q plots of simulated features from mathematical formulae.}{fig:sim-diff}
\end{table}

Analyzing the distribution of differences versus values based on time series simulated from mathematical formulae, the conclusion above still holds in most cases, as shown in Figure~\ref{fig:sim-diff}, except for very few exceptional cases, such as periodic ramp signals. Nevertheless, periodic ramp signals' step-wise differences have two major clusters for intra- and inter-period changes. If one observes the two groups separately, the distribution of differences would still be more standard than raw values. Such clustering does not exist for raw values. Given the capability of neural networks in classification tasks, it is believed that a well-designed and well-trained neural network is capable of telling the two clusters above, and hence is likely to learn the combination of the two clusters of close-to-normal distributions, better than one cluster of non-normal distribution.

\section{Dataset Setup}
\label{app:dataset}
\subsection{Overall Setup}
\label{app:dataset:overall}
For all datasets, to ensure a fair and efficient comparison with baseline models, we use curated subsets of each dataset. We remove columns with the highest missingness and, for baselines that do not natively support categorical features, we encode those columns during preprocessing. This yields a consistent input space across datasets with varying domains, sequence lengths, and complexity while preserving realistic data characteristics.
All datasets are split at the series level into $8:2$ training and test set.

\subsection{Real-world Datasets Details}

\begin{itemize}
    \item \textbf{Weather Dataset:} The weather dataset~\citep{weatherdataset} contains the weather information, including temperature, humidity, wind speed, and pressure, of various places. The primary challenge lies in learning from very long sequences despite having a limited number of examples, making it essential for models to capture long-range temporal dependencies effectively.

    \item \textbf{Sensor Dataset:} The sensor dataset~\citep{sensordataset} includes 17 sensor readings related to either industrial equipment monitoring or cybersecurity scenarios. The main difficulty here is ensuring generalization across a large number of sequences with potentially unseen or highly diverse behaviors.

    \item \textbf{Stocks Dataset:} The stocks dataset~\citep{stocksdataset} features a variety of financial indicators such as stock prices, trading volume, and market volatility. The key challenge is dealing with the inherent noise, irregularity, and rapid changes that are characteristic of financial time series.

    \item \textbf{Walmart Dataset:} The Walmart dataset~\citep{walmartdataset} contains 2 static features (Type and Size) and 3 time-dependent variables capturing store-level sales statistics. As a conditional dataset, it requires models to leverage both static contextual information and dynamic temporal patterns. The primary challenge lies in effectively integrating static and sequential features over relatively short horizons while maintaining the ability to generalize across diverse departments and store profiles.

\end{itemize}

Due to the limited number of data points for Weather dataset, we use a different data split ($7:3$) for training and testing, to ensure sufficient test samples for reliable evaluation.

\subsection{Simulated Data Configuration}
\label{app:dataset:simulate}

The four static features and their corresponding sampling strategies are $w_1, w_2,b,k$, all uniformly sampled from $[-1, 1]$.

These condition values are the variables of the mathematical expressions.
Time series features are generated using the formulas listed in Table~\ref{tab:simulated-formulas}, where $x$ indicates the time step.

\begin{table}[!htbp]
\centering
\caption{Generating formulas of the 15 time series features of the simulated dataset. Each feature is a function of the time step $x$ and the static conditions $w_1, w_2, b, k$.}
\label{tab:simulated-formulas}
\resizebox{\linewidth}{!}{
\begin{tabular}{cc}
    \toprule
    \multicolumn{1}{l}{\textbf{col1:} Linear trend}  & \multicolumn{1}{l}{\textbf{col2:} Cosine wave with phase shift}  \\
    $0.001kx + b$& $\cos(0.05 \pi x + b)$\\
    \midrule
    \multicolumn{1}{l}{\textbf{col3:} Alternating constant pattern}  & \multicolumn{1}{l}{\textbf{col4:} Rounded step function}  \\
    $\left\lfloor \frac{x + \lfloor25b+0.5\rfloor}{100} \right\rfloor \bmod 2$ & $\lfloor-0.01 w_1 x+0.5\rfloor$\\
    \midrule
    \multicolumn{1}{l}{\textbf{col5:} Convex curve  } & \multicolumn{1}{l}{\textbf{col6:} Non-convex shape (tanh function)  }\\
    $\sqrt{0.01(w_1 + 1) x + 10 + b}$ & $\tanh\left(5k\left(\frac{x}{L_n} - 0.5\right)\right)$ \\
    \midrule
    \multicolumn{1}{l}{\textbf{col7:} Linear and sinusoidal combination  } & \multicolumn{1}{l}{\textbf{col8:} Cosine wave with exponential decay  }\\
    $0.001 w_1  x + w_2 + \sin(0.03  \pi x - 0.05)$ & $\exp(-0.001 x) \cos(0.003  x + b)$\\
    \midrule
    \multicolumn{1}{l}{\textbf{col9:} Periodic ramp signal  } & \multicolumn{1}{l}{\textbf{col10:} Piecewise exponential and linear function }\\
    $0.02 \cdot w_2 \cdot (x \bmod 100) $ & $\begin{cases}
        \exp(0.005 \cdot x), & \text{if } x < L_n/2 \\
        0.003 \cdot w_2 \cdot x, & \text{otherwise}
    \end{cases}$\\
    \midrule
    \multicolumn{1}{l}{ \textbf{col11:} Conditional cosine or sine  } & \multicolumn{1}{l}{\textbf{col12:} Composite function with exponential inside sine  } \\
    $\begin{cases}
        \cos(0.02 \pi  x + w_1), & \text{if } b \geq 0 \\
        \sin(0.02  \pi  x + w_2), & \text{if } b < 0
    \end{cases}$ & $ \sin(\exp(0.01 kx))$ \\
    \midrule
     \multicolumn{2}{l}{\textbf{col13:} Chirp signal~\citep{chirp} with combined frequency growth  }\\
    \multicolumn{2}{c}{$\sin\left(2\pi (0.003  k  x + (0.005 k  x)^2)\right)$}\\
    \midrule
    \multicolumn{2}{l}{\textbf{col14:} Amplitude modulated signal~\citep{am}  } \\
    \multicolumn{2}{c}{$\left(1 + 0.5 \sin\left(\frac{w_1\pi x}{500} \right)\right) \cdot \sin\left( \frac{\pi w_1x}{50}\right)$}\\
    \midrule

    \multicolumn{2}{l}{\textbf{col15:} Gabor-like modulated Gaussian signal~\citep{gabor}  }\\
    \multicolumn{2}{c}{ $\exp\left(-\frac{k^2x^2}{500000}\right) \cdot \sin\left(\frac{kx}{50}\right)$}\\
    \bottomrule
\end{tabular}
}
\end{table}

\section{FracTS Implementation Details}
\label{app:fracts}
\subsection{Details of Data Processing}
\paragraph{Data Requirements.}
In this paper, we focus on clean tabular data. Missing values, mixed numerical and categorical features, and data types other than categorical, numerical, and datetime, such as textual, audio, and vision data, are not included. 

\paragraph{Encoding of Numerical Data.}
All numerical data are standardized by their mean and standard deviation before being passed to the networks. When numerical data are binned, we use 20 bins by default. Both data processors are implemented using \texttt{scikit-learn} preprocessing module.

\label{app:data:datetime}
\paragraph{Datetime Components.} Datetime features are often omitted or simply treated as numeric values using Unix timestamps in most tabular models~\citep{ctgan,tabsyn,tabtreeformer}. However, this is not the optimal solution for time series data, as datetimes are typically critical to time series~\citep{banksformer} especially for event-based time series with event dates. Thus, prior to the numeric value representation of the datetime value, we put some important datetime components, such as month and day of week, to guide the generation of actual values, in addition to the bin of numeric values. 

Datetime components we use include: year (integer), month (integer), day in month (integer), hour (integer), minute (integer), second (integer), millisecond (integer), microsecond (integer), nanosecond (integer), month name (categorical), day of week (integer), day name of week (categorical), days to month end (integer), AM vs. PM (categorical), whether this is month start (categorical), whether this is month end (categorical), whether this is holiday (categorical). For components that do not have different values throughout the entire dataset in the current feature, the component is dropped. For example, a column up to the precision of day will have any component of higher precision, like hour, minute, and second, dropped.

While the exact datetime value can be directly inferred from the numeric representation, it is worthwhile to adapt the datetime value based on the components predicted, due to the limited precision the numeric predictions can achieve. For example, if a date is predicted to be a month's start but the date generated is Sep. 2, then we find the closest month start date Sep. 1, to replace the day of month from 2 to 1.

\label{app:data:agg}
\paragraph{Aggregation Functions.} We first extract the first value of all features in the aggregated auxiliary static features.
For categorical features, we further collect the top-frequency category and the proportion of at most 10 globally most frequent categories.
For numerical features, we collect the mean and standard deviation as additional aggregated information.
For datetime features, we only collect aggregated information on the day name of a week and the month name.

\subsection{Experimental Setup}
\label{app:exp:config}

All experiments were run on NVIDIA H100 GPUs using PyTorch with mixed precision training. Each result is reported as the average of three runs.

We use the same transformer model (GPT-2 small) for all datasets to keep the setup consistent and comparable. GPT-2 small has an embedding size of 768, 12 attention heads, and 12 transformer blocks. For the static features, we generate all samples using CTGAN~\citep{ctgan}. This ensures diverse and realistic static conditions while keeping the process simple and consistent across datasets.

For the simulated dataset specifically, we disabled all data aggregations due to GPU memory limits. We also skipped the condition compression module for this dataset. Since it only contains 4 conditions, the compression step was unnecessary.

Training is done with the AdamW optimizer using a learning rate of $2\times10^{-4}$. We apply a warmup of 10 epochs and train for a total of 200 epochs. Validation is performed after every epoch, and the best model is selected based on the lowest validation loss.

% side-by-side text + table: a small table must not waste a full-width band
\begin{minipage}[t]{0.55\linewidth}
\vspace*{0pt}
The convergence dynamics of our models are illustrated in Figure~\ref{fig:training_loss_curves} and Figure~\ref{fig:validation_loss_curves}, which displays the training and validation losses across all five datasets. To provide a comprehensive view of the computational requirements, we detail the total trainable parameters and the average training time for each dataset in Table~\ref{tab:efficiency}.
\end{minipage}\hfill
\begin{minipage}[t]{0.42\linewidth}
\vspace*{-12pt}
    \centering
    \inlinecaption{table}{Model size and average training time per dataset.}{tab:efficiency}
    \small
    \begin{tabular}{lcc}
        \toprule
        \rowcolor{headband}
        \textbf{Dataset} & \textbf{Params (M)} & \textbf{Time (h)} \\
        \midrule
        Weather   & 729.8  & 1.1 \\
        Sensor    & 432.8  & 4.6 \\
        Stocks    & 617.8  & 7.3 \\
        Walmart   & 1147.3 & 1.8 \\
        Simulated & 732.5  & 3.7 \\
        \bottomrule
    \end{tabular}
\end{minipage}

\paragraph{Training, inference, and memory cost.}
To further characterize computational efficiency, we compare the training
and inference time of FracTS with the most competitive baseline used for each of
the Weather and Walmart datasets, and measure the GPU memory consumption of
FracTS on an NVIDIA H100 GPU with 80\,GB of memory (Tables~\ref{tab:cost},~\ref{tab:runtime_comparison} and~\ref{tab:memory_usage}).
FracTS incurs a higher training cost than the compared baselines.
For inference, PaD-TS is faster on Weather, while FracTS is faster than
TabularARGN on the substantially longer Walmart sequences.

\begin{table}[!htbp]
\centering
\caption{Computational cost of FracTS on the two datasets used in the
efficiency analysis. Memory is measured on an NVIDIA H100 GPU with 80\,GB.
Training batch sizes are selected for data throughput, so the memory figures
are not a controlled comparison against baselines with different batch-size
configurations.}
\label{tab:cost}
\begin{subtable}{0.48\linewidth}
\centering
\caption{Training and inference time.}
\label{tab:runtime_comparison}
\small
\setlength{\tabcolsep}{5pt}
\begin{tabular}{llcc}
\toprule
\textbf{Dataset} & \textbf{Model} & \textbf{Train (s)} & \textbf{Inf.\ (s)} \\
\midrule
Weather & FracTS & 4068 & 240 \\
Weather & PaD-TS & 1560 & 10 \\
\midrule
Walmart & FracTS & 6300 & 241 \\
Walmart & TabularARGN & 240 & 300 \\
\bottomrule
\end{tabular}
\end{subtable}
\hfill
\begin{subtable}{0.46\linewidth}
\centering
\caption{GPU memory consumption.}
\label{tab:memory_usage}
\small
\setlength{\tabcolsep}{5pt}
\begin{tabular}{lccc}
\toprule
\textbf{Dataset} & \textbf{Len.\ $L$} & \textbf{Train (GB)} & \textbf{Inf.\ (GB)} \\
\midrule
Weather & 2171  & 73.0 & 3.5 \\
Walmart & 10474 & 77.0 & 5.5 \\
\bottomrule
\end{tabular}
\end{subtable}
\end{table}

\begin{figure}[!htbp] % pinned in place to preserve source order
    \centering
    % ==========================================
    % ROW 1: TRAINING LOSS
    % ==========================================
    \begin{subfigure}[b]{0.19\textwidth}
        \centering
        \includegraphics[width=\textwidth]{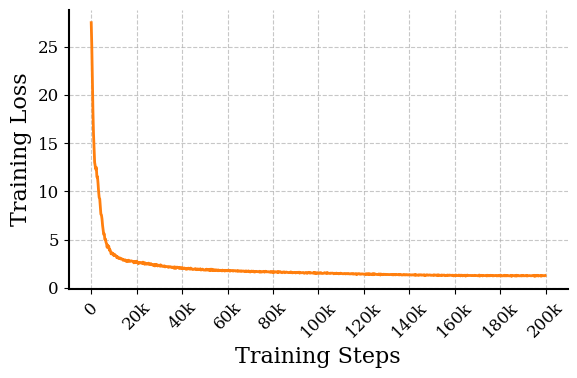}
        \caption{Weather}
    \end{subfigure}
    \hfill
    \begin{subfigure}[b]{0.19\textwidth}
        \centering
        \includegraphics[width=\textwidth]{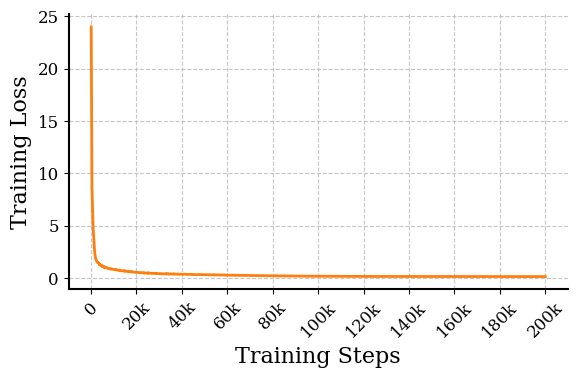}
        \caption{Sensors}
    \end{subfigure}
    \hfill
    \begin{subfigure}[b]{0.19\textwidth}
        \centering
        \includegraphics[width=\textwidth]{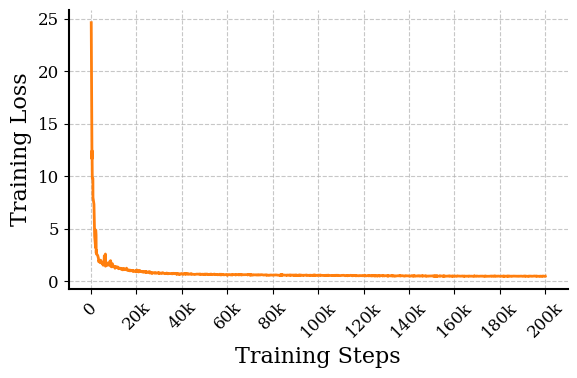}
        \caption{Stocks}
    \end{subfigure}
    \hfill
    \begin{subfigure}[b]{0.19\textwidth}
        \centering
        \includegraphics[width=\textwidth]{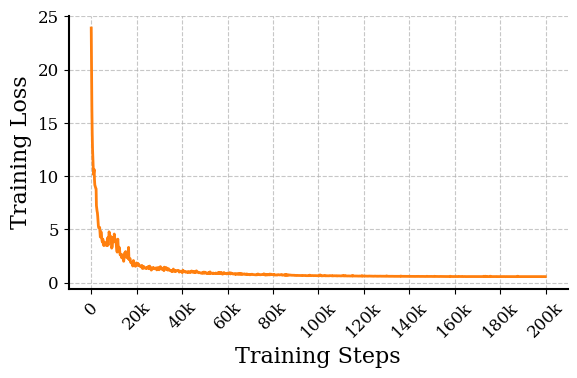}
        \caption{Walmart}
    \end{subfigure}
    \hfill
    \begin{subfigure}[b]{0.19\textwidth}
        \centering
        \includegraphics[width=\textwidth]{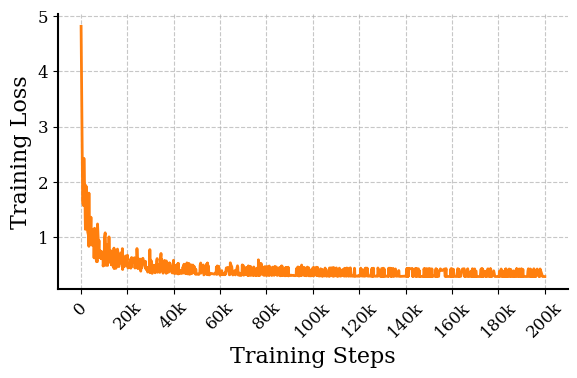}
        \caption{Simulated}
    \end{subfigure}
    \caption{Training loss curves across the five evaluated datasets. The x-axis represents the training steps, demonstrating that the models consistently converge over the course of the training regime.}
    \label{fig:training_loss_curves}
\end{figure}

\begin{figure}[!htbp]
    \centering
    % ==========================================
    % VALIDATION LOSS
    % ==========================================
    \begin{subfigure}[b]{0.19\textwidth}
        \centering
        \includegraphics[width=\textwidth]{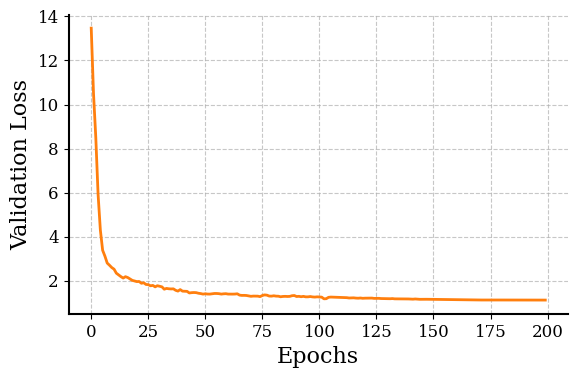}
        \caption{Weather}
    \end{subfigure}
    \hfill
    \begin{subfigure}[b]{0.19\textwidth}
        \centering
        \includegraphics[width=\textwidth]{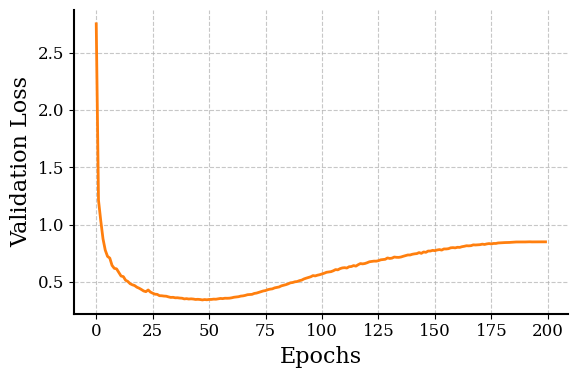}
        \caption{Sensors}
    \end{subfigure}
    \hfill
    \begin{subfigure}[b]{0.19\textwidth}
        \centering
        \includegraphics[width=\textwidth]{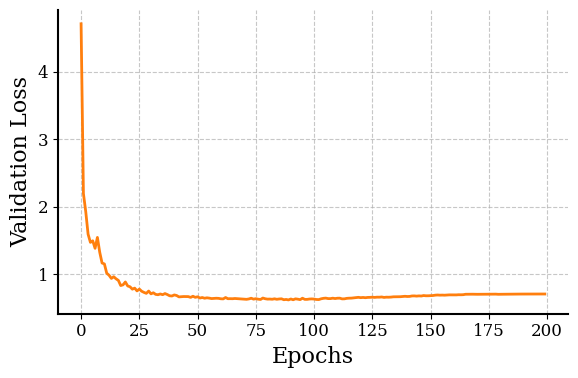}
        \caption{Stocks}
    \end{subfigure}
    \hfill
    \begin{subfigure}[b]{0.19\textwidth}
        \centering
        \includegraphics[width=\textwidth]{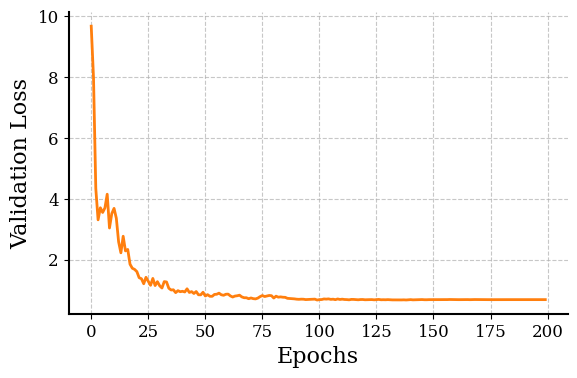}
        \caption{Walmart}
    \end{subfigure}
    \hfill
    \begin{subfigure}[b]{0.19\textwidth}
        \centering
        \includegraphics[width=\textwidth]{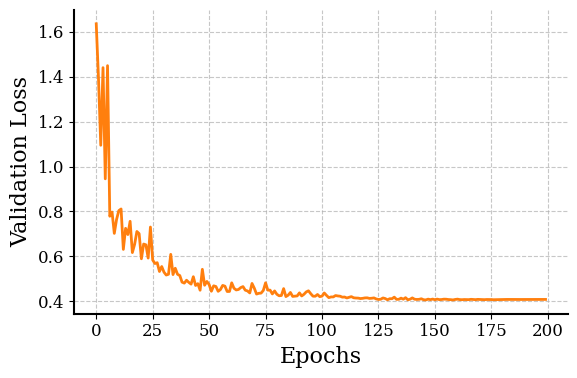}
        \caption{Simulated}
    \end{subfigure}
    \caption{Validation loss curves across the five evaluated datasets. The x-axis represents the training epochs, illustrating that the models consistently converge within the 200-epoch training regime.}
    \label{fig:validation_loss_curves}
\end{figure}

\section{Baselines}
\label{app:baseline}

Following gives the detailed introduction of each baseline.

\paragraph{Generation Length.}
FracTS generates the entire time series and supports variable and long generation lengths without truncation. For baselines that require fixed-length inputs, we pad sequences to the maximum sequence length.

By default, the baseline models construct training samples by forming intervals or overlapping segments from the given time series. However, since our dataset consists of distinct and independent time series, we generally treat each complete time series as a single window during training. To ensure consistent input lengths across samples, shorter series are zero-padded to match the maximum sequence length. After training, we applied a depadding step using a threshold of $1\times10^{-2}$ to remove the trailing padded values. The only exception to this approach is AECGAN; because its algorithm cannot function when an entire time series is treated as a single window, we split each time series into shorter windows specifically for this model. For all baselines, we used their official implementations with default hyperparameters and settings, modifying only the preprocessing pipeline to adapt to these configurations.

\begin{itemize}
    \item \textbf{Diffusion-TS}~\citep{diffusionts}: A diffusion-based generative model that uses an encoder-decoder Transformer for high-quality multivariate time series generation. It reconstructs the sample at each denoising step and applies a Fourier-based loss to enhance the quality and interpretability of generated data.
    \item \textbf{TimeVQVAE}~\citep{tvqvae}: This method uses a vector-quantized variational autoencoder for time series. TimeVQVAE learns discrete representations and models temporal dependencies with a Transformer prior, capturing both long-range and fine-grained temporal patterns.
    \item \textbf{COSCI-GAN}~\citep{coscigan}: A GAN-based model designed for multivariate time series from common sources, such as multi-channel sensor data. COSCI-GAN uses a central discriminator to maintain correlations between different channels and preserve realistic dynamics.
    \item \textbf{AEC-GAN}~\citep{aecgan}: An adversarial model that includes an error-correction mechanism to handle long-term dependencies and avoid drift in synthetic sequences. We defined the past ($p$) and future ($q$) sequence lengths based on the original experimental settings present in their official code. Specifically, we set $p=96$ and $q=192$ for the Stock dataset, $p=18$ and $q=36$ for Sensor, and $p=168$ and $q=336$ for Weather.
    \item \textbf{Koopman VAE (KoVAE)}~\citep{kovae}: A variational autoencoder that uses a linear Koopman operator in the latent space to model time series dynamics. KoVAE supports both regular and irregular time intervals and incorporates domain knowledge through its spectral constraints.
    \item\textbf{WaveStitch}~\citep{wavestitch}: WaveStitch is a conditional time series generation model based on denoising diffusion probabilistic models (DDPMs), which conditions on both static metadata and partially observed signals to flexibly adapt to unseen constraints. To incorporate these static conditioning attributes, we one-hot encode the categorical metadata, tile it across all timesteps, and concatenate it with the time series. Each complete series is treated as a single, fixed-length window via padding or truncation. Crucially, both the forward noising and reverse denoising diffusion processes are applied exclusively to the time series features, while the metadata columns remain clamped to serve as a constant conditioning context.
    \item \textbf{TabularARGN}~\citep{tabularargn}: A flexible auto-regressive generative framework for mixed-type, multivariate, and sequential data that factors the joint distribution into conditional probabilities and supports efficient conditional generation.
    \item \textbf{PaD-TS}~\citep{padts}: A diffusion-based model that focuses on capturing the patterns of the entire dataset instead of just individual samples. It uses a population-aware training method to ensure the generated data preserves the global statistics and correlations of the real data.
    \item \textbf{TimeAutoDiff}~\citep{timeautodiff}: A unified framework that combines a VAE and latent diffusion to handle time series with both continuous and categorical features. It compresses the data into a latent space first to effectively generate complex, mixed-type sequences. TimeAutoDiff expects each time series as a separate sample, with static conditioning attributes (e.g., simulation parameters or store metadata) broadcast across all timesteps and concatenated with the time-varying columns.
\end{itemize}

\section{Evaluation Metric Implementation}
\label{app:eval}
\paragraph{Setup:}
We evaluate quality by comparing many independent real time series (e.g., different stocks or sensors readings) to an equally sized set of synthetic series generated by our model. Each series is a CSV with one or more columns (variables). For every column we extract simple, model-agnostic features that summarize distributional and temporal behavior: \emph{mean}, \emph{std}, \emph{skew}, \emph{quantile}, \emph{length} and \emph{autocorrelation at lag 1}. Aggregating these features across all files gives us feature distributions for the real set and the synthetic set, which are then compared using the KS test.

\paragraph{Ground-truth (simulation):}
For simulated data with known generator $\mathcal{G}$, we compare reference series
$\mathbf{y}^{\star}=\mathcal{G}(\mathbf{c}_{\text{new}})$
to model output $\hat{\mathbf{y}} \sim p(\cdot \mid \mathbf{c}_{\text{new}})$ via:
\[
\mathrm{MAE}=\tfrac{1}{N}\!\sum_{t=1}^N \big|\hat{y}_t-y^{\star}_t\big|,
\qquad
\mathrm{MSE}=\tfrac{1}{N}\!\sum_{t=1}^N (\hat{y}_t-y^{\star}_t)^2.
\]
\paragraph{Utility Testing:} 
For the downstream performance evaluation, we used an LSTM network for time series prediction. To measure how useful the generated data is, we evaluated the models using two settings: Train on Synthetic, Test on Real (TSTR) and Train on Real, Test on Real (TRTR). The LSTM was trained to predict the next time step for all columns simultaneously. We used a sliding window approach with window sizes of 10, 100, and 1000 time steps to test both short- and long-term dependencies. The raw values were used as input features to the network. Finally, we calculated the Mean Absolute Error (MAE) for both settings and reported the absolute gap between their performances ($|\text{TSTR} - \text{TRTR}|$). A smaller gap shows that the synthetic data successfully preserves the predictive ML utility of the real data.

\paragraph{Discriminative Evaluation:}
Standard sequence-level discriminative evaluation assumes fixed-length
inputs and therefore cannot be directly applied to the variable-length
sequences generated in our setting. To enable this evaluation, we use
fixed-length windows of $W\in\{10,100,1000\}$ sampled from real and
synthetic sequences and train a discriminative classifier to distinguish
between them. Following the standard discriminative-score formulation used in our
evaluation, we report discriminative accuracy minus $0.5$, where lower
values are better and $0$ represents ideal indistinguishability.

\section{Supplementary Experiment Results}

\subsection{Windowed t-SNE Visualizations}
\label{app:window_tsne}

The t-SNE visualizations in the main paper operate at the individual
time-step level. To further assess whether the generated data preserve
temporal structure over different horizons, we additionally perform
windowed t-SNE using fixed-length windows of
$W\in\{10,100,1000\}$. Figures~\ref{fig:window_tsne_uncond}, \ref{fig:window_tsne_cond} and~\ref{fig:window_tsne_cond_sim}
show the results for the unconditional datasets and the conditional Walmart
and Simulated datasets, respectively.
Figure~\ref{fig:sensors_uncond_single_1x6} additionally shows the step-level
t-SNE comparison between real and generated data on the Sensors dataset.

% ---------------------------------------------------------
% Unconditional datasets (Weather, Sensors)
% ---------------------------------------------------------
\begin{figure}[!p]
    \centering
    \begin{subfigure}[b]{\linewidth}
        \centering
        \includegraphics[width=\linewidth]{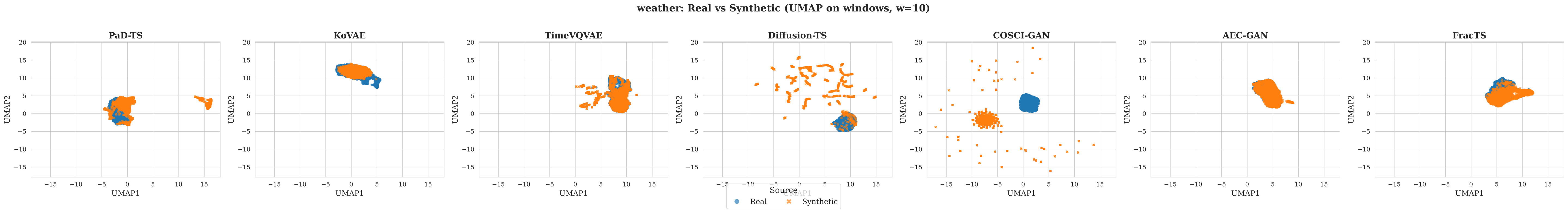}
        \caption{Weather, $W=10$}
    \end{subfigure}
    \begin{subfigure}[b]{\linewidth}
        \centering
        \includegraphics[width=\linewidth]{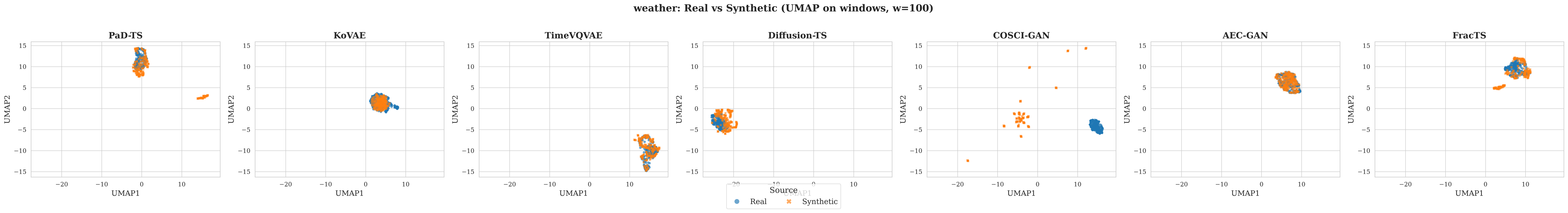}
        \caption{Weather, $W=100$}
    \end{subfigure}
    \begin{subfigure}[b]{\linewidth}
        \centering
        \includegraphics[width=\linewidth]{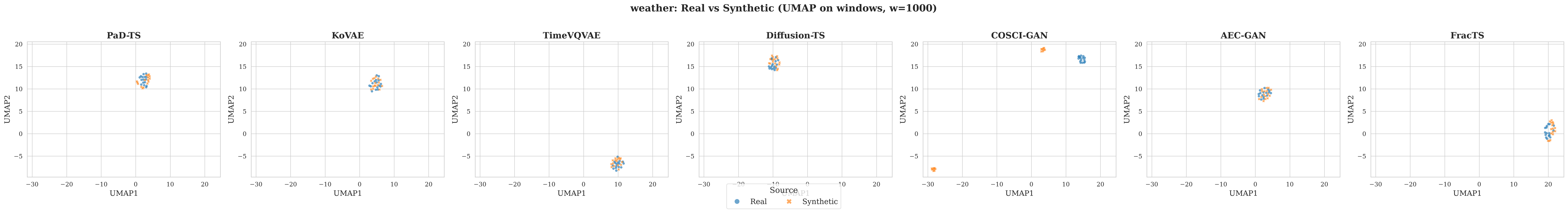}
        \caption{Weather, $W=1000$}
    \end{subfigure}
    \begin{subfigure}[b]{\linewidth}
        \centering
        \includegraphics[width=\linewidth]{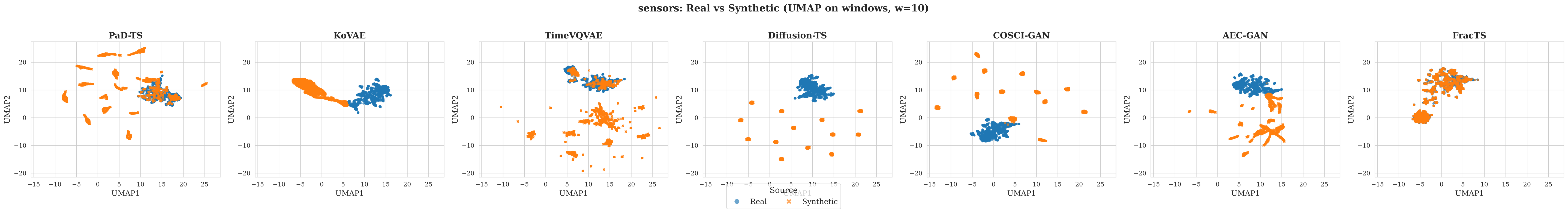}
        \caption{Sensors, $W=10$}
    \end{subfigure}
    \begin{subfigure}[b]{\linewidth}
        \centering
        \includegraphics[width=\linewidth]{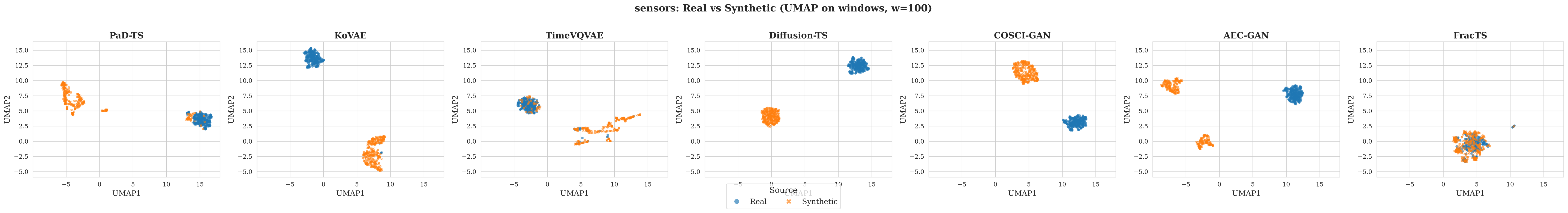}
        \caption{Sensors, $W=100$}
    \end{subfigure}
    \caption{Windowed t-SNE visualizations of real and generated samples for
    the unconditional datasets at different window sizes. ($W=1000$ is omitted
    for Sensors because every Sensors series is shorter than 1000 time steps,
    so no such window exists.)}
    \label{fig:window_tsne_uncond}
\end{figure}

% ---------------------------------------------------------
% Conditional datasets (Walmart, Simulated)
% ---------------------------------------------------------
\begin{figure}[!t]
    \centering
    \begin{subfigure}[b]{\linewidth}
        \centering
        \includegraphics[width=\linewidth]{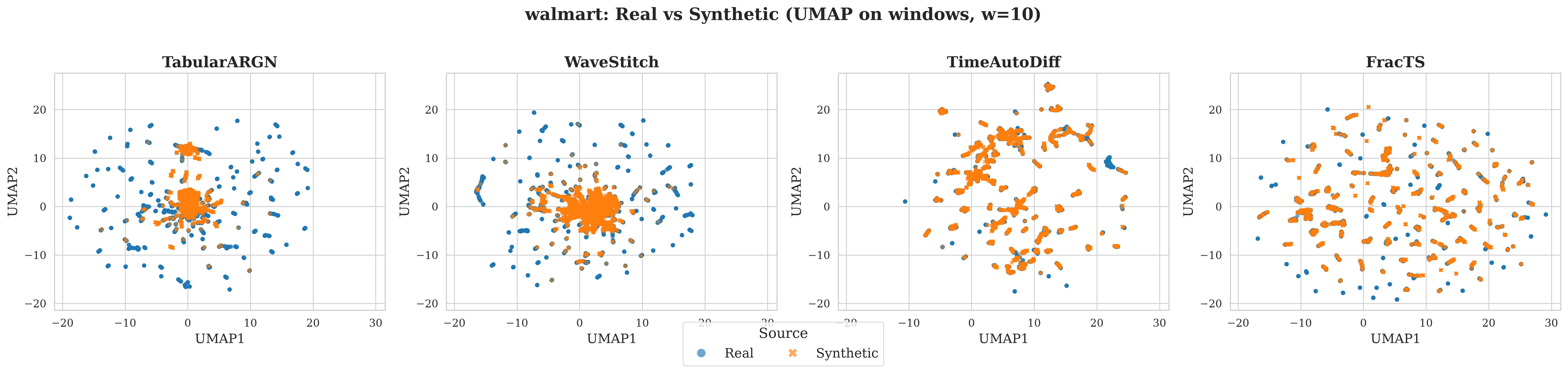}
        \caption{Walmart, $W=10$}
    \end{subfigure}
    \begin{subfigure}[b]{\linewidth}
        \centering
        \includegraphics[width=\linewidth]{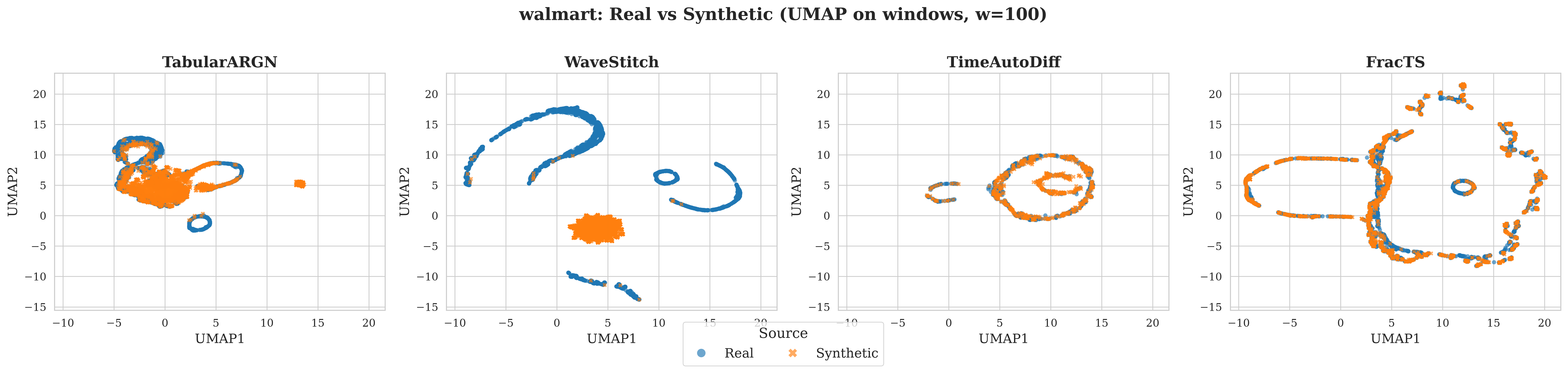}
        \caption{Walmart, $W=100$}
    \end{subfigure}
    \begin{subfigure}[b]{\linewidth}
        \centering
        \includegraphics[width=\linewidth]{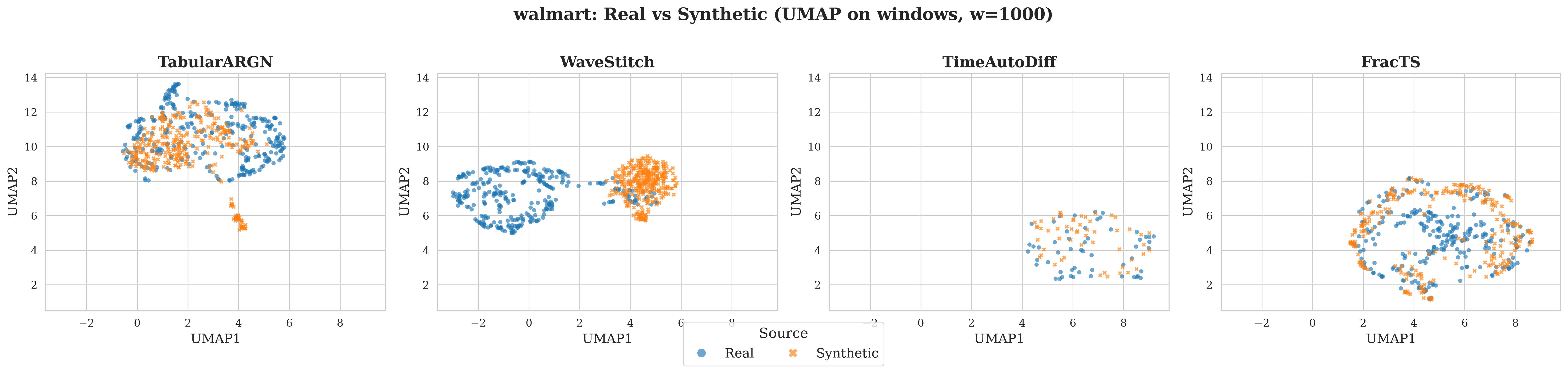}
        \caption{Walmart, $W=1000$}
    \end{subfigure}
    \caption{Windowed t-SNE visualizations of real and generated samples for the conditional Walmart dataset at different window sizes.}
    \label{fig:window_tsne_cond}
\end{figure}

\begin{figure}[!t]
    \centering
    \begin{subfigure}[b]{\linewidth}
        \centering
        \includegraphics[width=\linewidth]{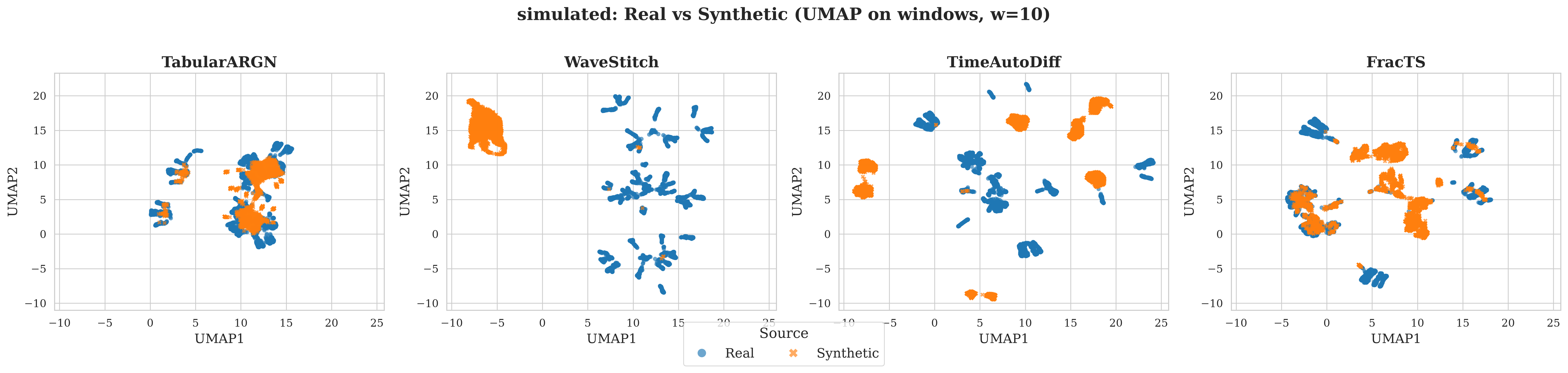}
        \caption{Simulated, $W=10$}
    \end{subfigure}
    \begin{subfigure}[b]{\linewidth}
        \centering
        \includegraphics[width=\linewidth]{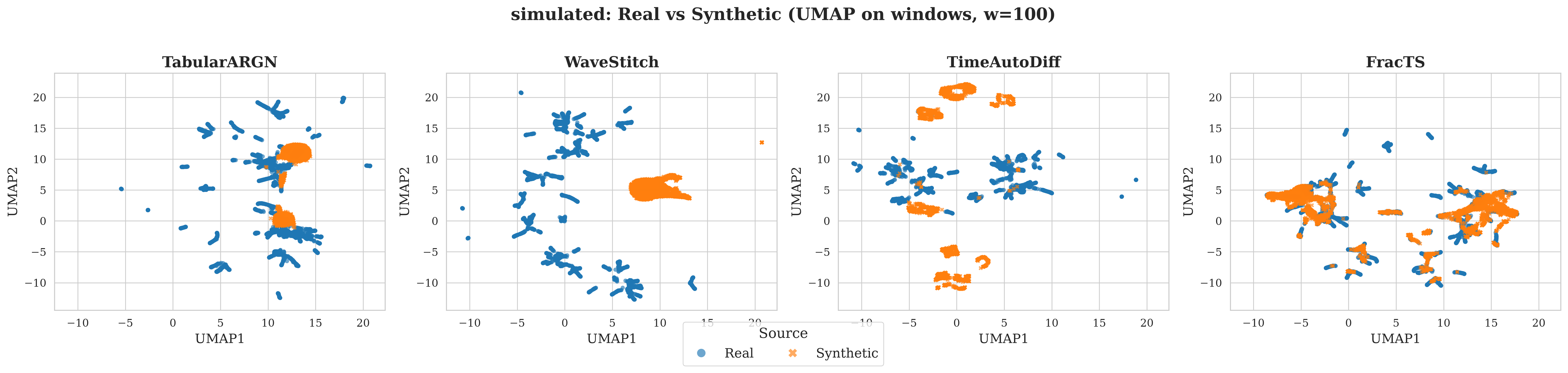}
        \caption{Simulated, $W=100$}
    \end{subfigure}
    \begin{subfigure}[b]{\linewidth}
        \centering
        \includegraphics[width=\linewidth]{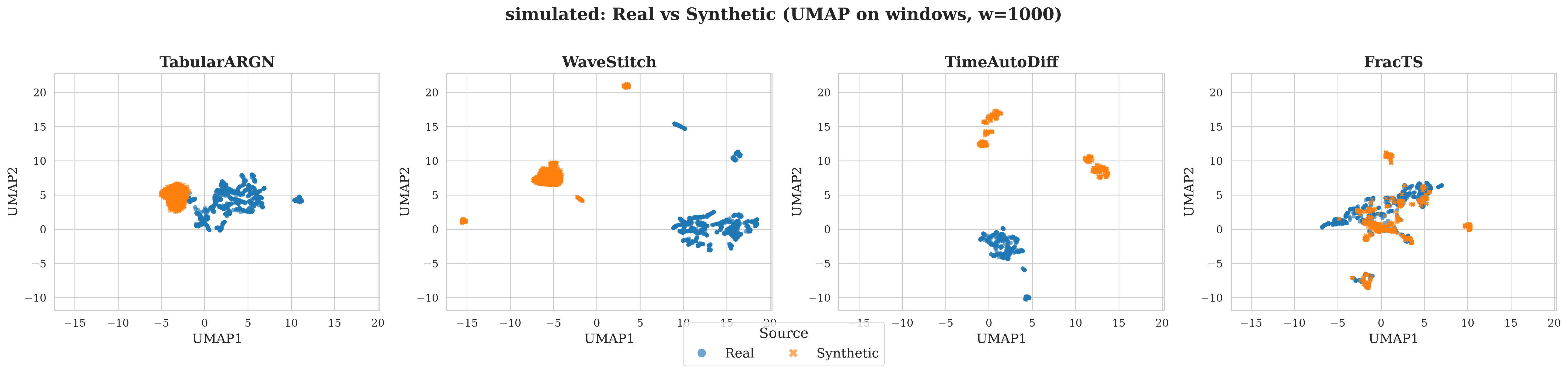}
        \caption{Simulated, $W=1000$}
    \end{subfigure}
    \caption{Windowed t-SNE visualizations of real and generated samples for the conditional Simulated dataset at different window sizes.}
    \label{fig:window_tsne_cond_sim}
\end{figure}

\begin{figure}[!p]
  \centering
  \includegraphics[width=\linewidth, keepaspectratio]{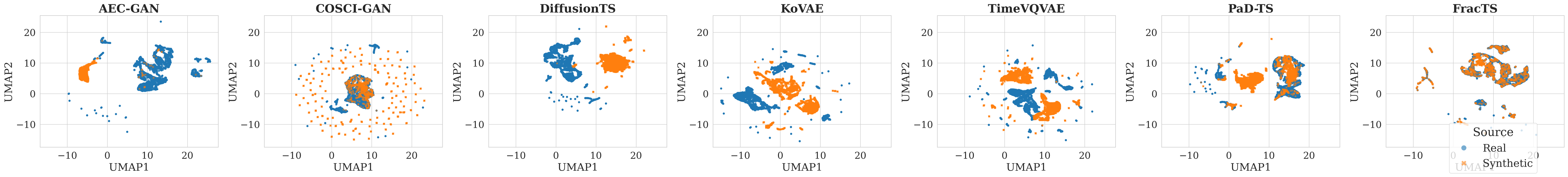}
  \caption{t-SNE of real compared to generated time series data of the Sensors dataset.}
  \label{fig:sensors_uncond_single_1x6}
\end{figure}

\subsection{More Visualized Results on Realistic Datasets}
\label{app:exp:vis}

Table~\ref{tab:simulated_percol_mae_mse} provides a per-column dissection of model performance on the simulated dataset. FracTS demonstrates clear overall superiority, outperforming the baselines in 11 out of the 15 columns. The gains are especially pronounced in capturing fundamental temporal structures, ranging from linear and standard periodic patterns (Cols 1–2) to highly nonlinear or piecewise non-stationary signals. For example, on the alternating constant pattern (Col~3) and the piecewise exponential/linear function (Col~10), FracTS achieves errors that are significantly lower than both WaveStitch and TabularARGN. However, we observe that for highly complex composite and oscillatory signals involving frequency growth or heavy amplitude modulation (Cols 12–15, such as the Chirp and Gabor-like signals), FracTS struggles to maintain this lead, and baselines like WaveStitch perform better. These highly modulated signals are defined by complex analytical formulas
that are difficult to recover exactly from data alone. In these cases,
WaveStitch produces highly fluctuating outputs that can obtain lower
pointwise MAE/MSE values. FracTS produces more realistic overall signal
shapes, but when it does not recover the exact underlying modulation,
the generated sequence can drift from the target over longer horizons,
resulting in larger MAE/MSE values. Despite this limitation on heavily modulated signals, FracTS provides the most accurate and robust generation overall. On average, FracTS achieves an MAE of 0.545 and an MSE of 0.983, substantially outperforming the closest baselines, WaveStitch (0.884 / 1.233) and TabularARGN (0.952 / 1.759). Figure~\ref{fig:selected_comparisons} shows generated versus ground-truth examples for six representative columns. 

\begin{figure}[!p]
  \centering
  \begin{subfigure}[b]{\linewidth}
    \centering
    \includegraphics[width=\linewidth]{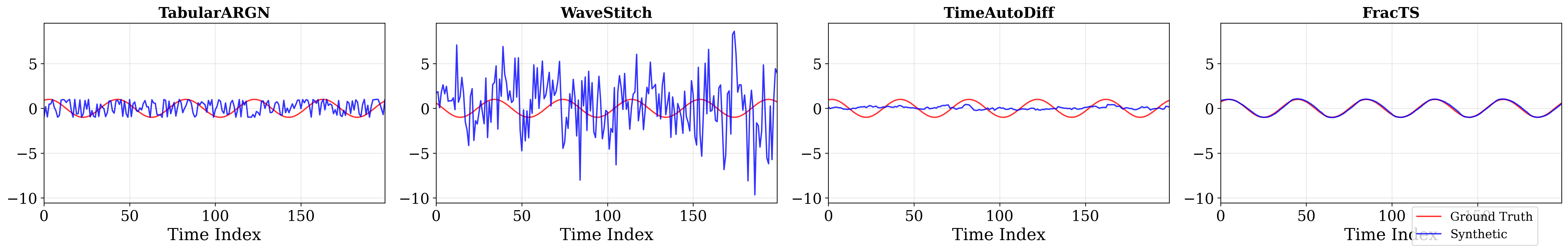}
    \caption{col2: trigonometric}
    \label{fig:sim_vis_col2}
  \end{subfigure}
  \begin{subfigure}[b]{\linewidth}
    \centering
    \includegraphics[width=\linewidth]{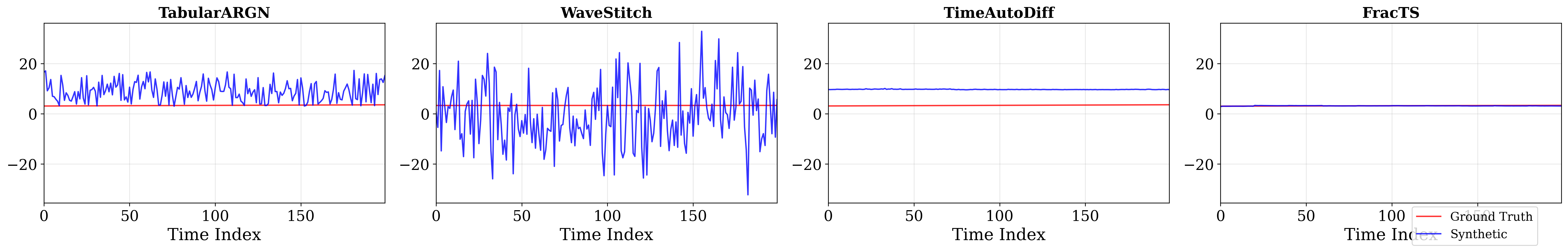}
    \caption{col5: linear}
    \label{fig:sim_vis_col5}
  \end{subfigure}
  \begin{subfigure}[b]{\linewidth}
    \centering
    \includegraphics[width=\linewidth]{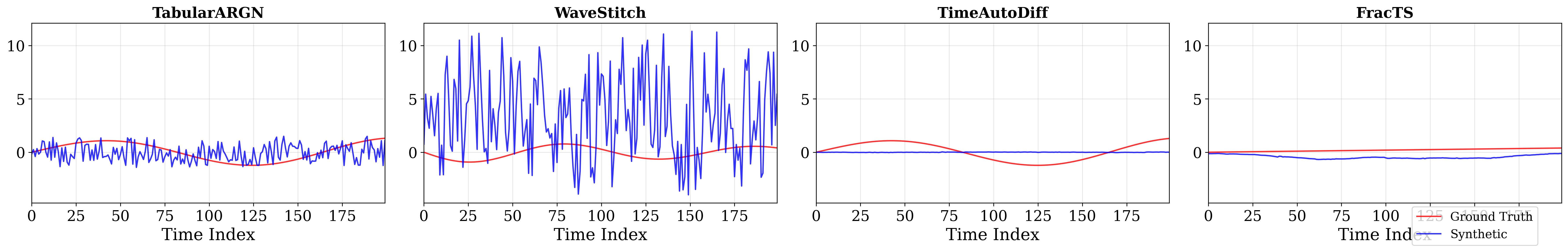}
    \caption{col14: amplitude modulated}
    \label{fig:sim_vis_col14}
  \end{subfigure}
  \begin{subfigure}[b]{\linewidth}
    \centering
    \includegraphics[width=\linewidth]{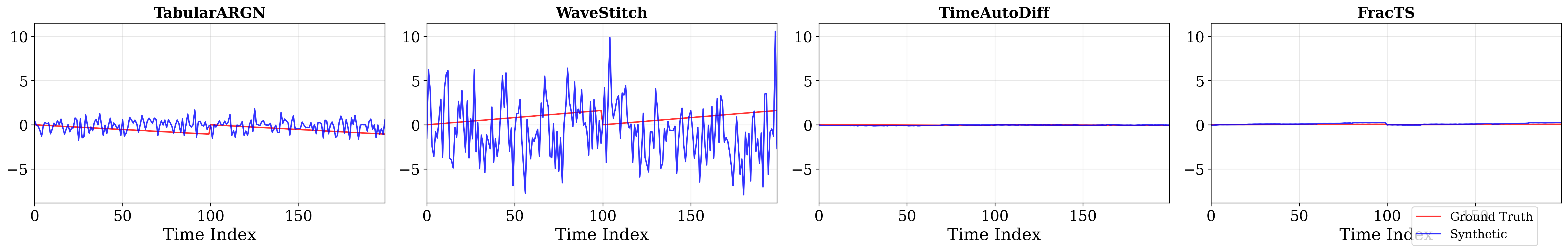}
    \caption{col9: periodic ramp}
    \label{fig:sim_vis_col9}
  \end{subfigure}
  \begin{subfigure}[b]{\linewidth}
    \centering
    \includegraphics[width=\linewidth]{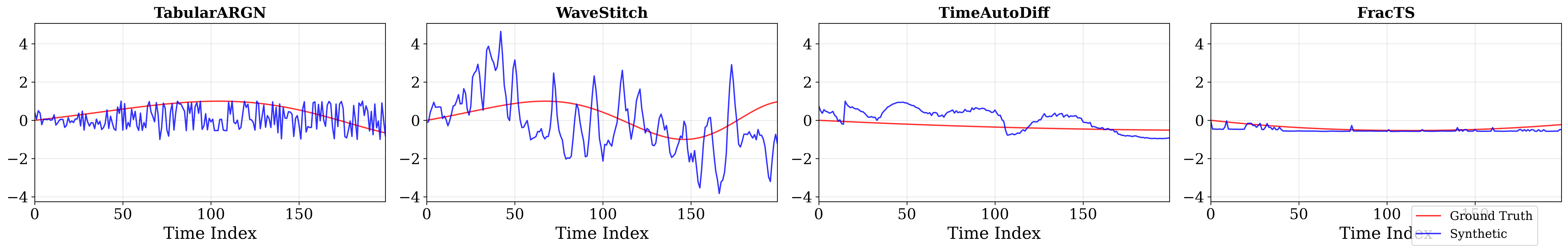}
    \caption{col13: chirp}
    \label{fig:sim_vis_col13}
  \end{subfigure}
  \begin{subfigure}[b]{\linewidth}
    \centering
    \includegraphics[width=\linewidth]{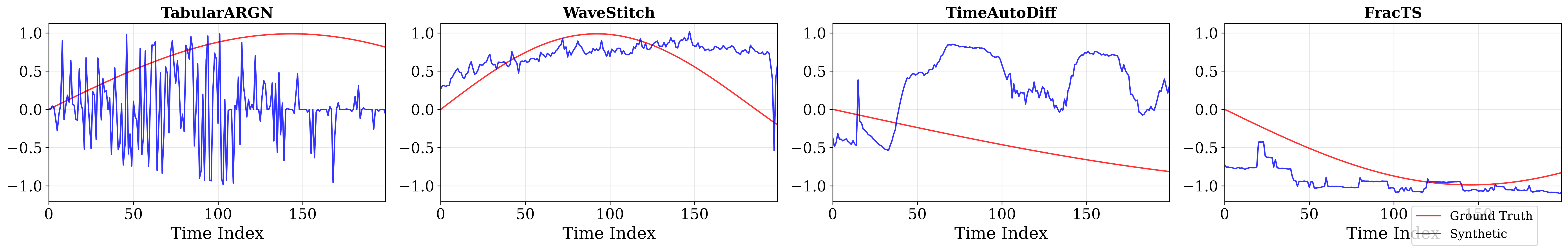}
    \caption{col15: Gabor-like}
    \label{fig:sim_vis_col15}
  \end{subfigure}
  \caption{Examples of generated versus ground-truth simulated data for six
  representative generating formulas (Table~\ref{tab:simulated-formulas}).
  Each panel overlays FracTS samples on the analytic ground truth.}
  \label{fig:selected_comparisons}
\end{figure}

\begin{table}[!htbp]
\centering
\renewcommand{\arraystretch}{0.9}
\caption{Per-column MAE/MSE against ground truth on the simulated dataset (lower is better; best in \textbf{bold}, second best \underline{underlined}). Column numbers correspond to the generating formulas in Table~\ref{tab:simulated-formulas}.}
\label{tab:simulated_percol_mae_mse}
\setlength{\tabcolsep}{4pt}
\resizebox{\linewidth}{!}{
\begin{tabular}{l cc cc cc cc}
\toprule
\rowcolor{headband}
 & \multicolumn{2}{c}{\textbf{FracTS (Ours)}} & \multicolumn{2}{c}{\textbf{TabularARGN}} & \multicolumn{2}{c}{\textbf{WaveStitch}} & \multicolumn{2}{c}{\textbf{TimeAutoDiff}} \\
\cmidrule(lr){2-3}\cmidrule(lr){4-5}\cmidrule(lr){6-7}\cmidrule(lr){8-9}
\textbf{Pattern} & MAE $\downarrow$ & MSE $\downarrow$ & MAE $\downarrow$ & MSE $\downarrow$ & MAE $\downarrow$ & MSE $\downarrow$ & MAE $\downarrow$ & MSE $\downarrow$ \\
\midrule
1. Linear trend            & $\mathbf{0.525}_{\pm 0.102}$ & $\mathbf{0.590}_{\pm 0.250}$ & $0.963_{\pm 0.015}$ & $1.547_{\pm 0.059}$ & $0.885_{\pm 0.083}$ & $1.232_{\pm 0.213}$ & $0.896_{\pm 0.055}$ & $1.466_{\pm 0.208}$ \\
2. Cosine wave             & $\mathbf{0.292}_{\pm 0.047}$ & $\mathbf{0.166}_{\pm 0.055}$ & $1.144_{\pm 0.001}$ & $1.993_{\pm 0.001}$ & $0.914_{\pm 0.097}$ & $1.312_{\pm 0.258}$ & $3.298_{\pm 0.526}$ & $14.337_{\pm 3.971}$ \\
3. Alternating constant    & $\mathbf{0.168}_{\pm 0.027}$ & $\mathbf{0.335}_{\pm 0.055}$ & $1.272_{\pm 0.125}$ & $2.547_{\pm 0.249}$ & $0.881_{\pm 0.055}$ & $1.217_{\pm 0.145}$ & $1.158_{\pm 0.102}$ & $2.696_{\pm 0.484}$ \\
4. Rounded step            & $\mathbf{0.467}_{\pm 0.057}$ & $\mathbf{0.428}_{\pm 0.104}$ & $0.838_{\pm 0.057}$ & $1.240_{\pm 0.157}$ & $0.967_{\pm 0.106}$ & $1.453_{\pm 0.283}$ & $1.444_{\pm 0.062}$ & $3.195_{\pm 0.232}$ \\
5. Convex curve            & $\mathbf{0.157}_{\pm 0.223}$ & $\mathbf{0.139}_{\pm 0.197}$ & $1.024_{\pm 0.008}$ & $1.609_{\pm 0.034}$ & $0.870_{\pm 0.057}$ & $1.187_{\pm 0.135}$ & $1.506_{\pm 0.029}$ & $3.224_{\pm 0.075}$ \\
6. Non-convex (tanh)       & $\mathbf{0.398}_{\pm 0.057}$ & $\mathbf{0.299}_{\pm 0.050}$ & $1.040_{\pm 0.019}$ & $1.625_{\pm 0.068}$ & $0.896_{\pm 0.086}$ & $1.257_{\pm 0.224}$ & $11.625_{\pm 6.562}$ & $251.719_{\pm 214.347}$ \\
7. Linear + sinusoid       & $\mathbf{0.478}_{\pm 0.054}$ & $\mathbf{0.431}_{\pm 0.100}$ & $0.928_{\pm 0.041}$ & $1.482_{\pm 0.124}$ & $0.957_{\pm 0.072}$ & $1.425_{\pm 0.188}$ & $1.540_{\pm 0.081}$ & $3.579_{\pm 0.355}$ \\
8. Cosine + exp.\ decay    & $\mathbf{0.120}_{\pm 0.027}$ & $\mathbf{0.130}_{\pm 0.040}$ & $0.553_{\pm 0.008}$ & $1.460_{\pm 0.008}$ & $0.817_{\pm 0.011}$ & $1.053_{\pm 0.029}$ & $7.478_{\pm 0.952}$ & $455.942_{\pm 125.824}$ \\
9. Periodic ramp           & $\mathbf{0.462}_{\pm 0.105}$ & $\mathbf{0.433}_{\pm 0.206}$ & $0.925_{\pm 0.005}$ & $1.438_{\pm 0.012}$ & $0.917_{\pm 0.085}$ & $1.315_{\pm 0.228}$ & $1.362_{\pm 0.091}$ & $3.528_{\pm 0.523}$ \\
10. Piecewise exp./linear  & $\mathbf{0.096}_{\pm 0.018}$ & $\mathbf{0.594}_{\pm 0.228}$ & $0.282_{\pm 0.001}$ & $1.580_{\pm 0.033}$ & $0.847_{\pm 0.046}$ & $1.152_{\pm 0.126}$ & $3.079_{\pm 0.686}$ & $154.752_{\pm 52.485}$ \\
11. Conditional cos/sin    & $\mathbf{0.497}_{\pm 0.088}$ & $\mathbf{0.505}_{\pm 0.172}$ & $1.098_{\pm 0.031}$ & $1.874_{\pm 0.080}$ & $0.969_{\pm 0.161}$ & $1.457_{\pm 0.439}$ & $1.139_{\pm 0.119}$ & $2.083_{\pm 0.431}$ \\
12. Composite sin--exp     & $0.873_{\pm 0.072}$ & $1.421_{\pm 0.109}$ & $1.053_{\pm 0.060}$ & $1.969_{\pm 0.160}$ & $\mathbf{0.821}_{\pm 0.014}$ & $\mathbf{1.060}_{\pm 0.035}$ & $13.553_{\pm 3.379}$ & $383.759_{\pm 168.309}$ \\
13. Chirp                  & $1.179_{\pm 0.059}$ & $2.132_{\pm 0.236}$ & $1.150_{\pm 0.005}$ & $2.003_{\pm 0.019}$ & $\mathbf{0.824}_{\pm 0.013}$ & $\mathbf{1.068}_{\pm 0.032}$ & $1.525_{\pm 0.361}$ & $3.517_{\pm 1.548}$ \\
14. Amplitude modulated    & $1.288_{\pm 0.139}$ & $2.454_{\pm 0.465}$ & $1.146_{\pm 0.001}$ & $1.980_{\pm 0.005}$ & $\mathbf{0.829}_{\pm 0.017}$ & $\mathbf{1.082}_{\pm 0.038}$ & $23.981_{\pm 5.798}$ & $844.979_{\pm 353.043}$ \\
15. Gabor-like             & $1.184_{\pm 0.124}$ & $4.688_{\pm 0.244}$ & $\mathbf{0.875}_{\pm 0.071}$ & $2.052_{\pm 0.237}$ & $0.883_{\pm 0.079}$ & $\mathbf{1.238}_{\pm 0.205}$ & $2.963_{\pm 0.437}$ & $35.127_{\pm 12.562}$ \\
\midrule
\textit{Average}           & $\mathbf{0.545}_{\pm 0.051}$ & $\mathbf{0.983}_{\pm 0.109}$ & $0.952_{\pm 0.034}$ & $1.759_{\pm 0.034}$ & $0.884_{\pm 0.040}$ & $1.233_{\pm 0.109}$ & $5.103_{\pm 2.088}$ & $144.260_{\pm 103.107}$ \\
\bottomrule
\end{tabular}
}
\end{table}

\subsection{Discriminative Evaluation}
\label{app:discriminative}

Table~\ref{tab:discriminative} reports the windowed discriminative
evaluation described in Appendix~\ref{app:eval}. Table~\ref{tab:disc_main} compares FracTS with
PaD-TS on Weather and TabularARGN on Walmart, and Table~\ref{tab:disc_dimension} repeats the comparison on feature subsets of Weather. Lower values indicate that
real and synthetic samples are more difficult to distinguish.

\begin{table}[!htbp]
\centering
\caption{Windowed discriminative evaluation, measured as
$\mathrm{Accuracy}-0.5$ (lower is better, with $0$ indicating ideal
indistinguishability). (a) compares FracTS against the strongest baseline per
dataset; (b) isolates the effect of the feature subset used for evaluation on
Weather while keeping the generator fixed.}
\label{tab:discriminative}
\begin{subtable}{0.9\linewidth}
\centering
\caption{FracTS versus the strongest baseline per dataset.}
\label{tab:disc_main}
\small
\setlength{\tabcolsep}{4pt}
\begin{tabular}{llccc}
\toprule
\textbf{Dataset} & \textbf{Model}
& $W{=}10$
& $W{=}100$
& $W{=}1000$ \\
\midrule
Weather
& \textbf{FracTS}
& $\mathbf{0.342}_{\pm 0.021}$
& $\mathbf{0.425}_{\pm 0.025}$
& $\mathbf{0.283}_{\pm 0.052}$ \\

Weather
& PaD-TS
& $0.418_{\pm 0.034}$
& $0.431_{\pm 0.036}$
& $0.423_{\pm 0.021}$ \\
\midrule
Walmart
& \textbf{FracTS}
& $\mathbf{0.064}_{\pm 0.056}$
& $\mathbf{0.118}_{\pm 0.113}$
& $\mathbf{0.170}_{\pm 0.125}$ \\

Walmart
& TabularARGN
& $0.266_{\pm 0.033}$
& $0.331_{\pm 0.153}$
& $0.310_{\pm 0.157}$ \\
\bottomrule
\end{tabular}
\end{subtable}

\begin{subtable}{0.9\linewidth}
\centering
\caption{Feature-subset study on Weather.}
\label{tab:disc_dimension}
\footnotesize
\setlength{\tabcolsep}{4pt}
\begin{tabular}{lccc}
\toprule
\textbf{Evaluation setting}
& $W{=}10$
& $W{=}100$
& $W{=}1000$ \\
\midrule
All features (17)
& $0.342$
& $0.425$
& $0.283$ \\

Remove 5 worst-performing features (12)
& $0.296$
& $0.364$
& $0.290$ \\

Retain 5 best-performing features (5)
& $0.130$
& $0.175$
& $0.215$ \\
\bottomrule
\end{tabular}
\end{subtable}
\end{table}

FracTS achieves lower discriminative scores than the corresponding
competitive baselines across all evaluated window sizes on both Weather
and Walmart.
Although FracTS achieves lower discriminative scores than the compared
baselines, the absolute scores on Weather indicate that generated and
real samples remain distinguishable under this evaluation. For example,
a score of $0.425$ at $W=100$ corresponds to a discriminator accuracy of
$0.925$. We therefore additionally examine how this behavior changes
across different subsets of the Weather features.

\paragraph{Effect of dimensionality on discriminative evaluation.}
To further investigate the relatively high discriminative score on the
17-feature Weather dataset, we evaluate the same generated output while
varying only the subset of features used for discriminative evaluation.
The generator remains trained on all 17 features.

Removing only five of the worst-performing features reduces the
discriminative score at $W=100$ from $0.425$ to $0.364$, while retaining
only the five best-captured features reduces it further to $0.175$.
This indicates that the discriminative score on the high-dimensional
Weather dataset is strongly influenced by a subset of challenging
features.
Theorem~\ref{thm:disc_dim} formalizes this dimensionality effect.

\begin{thm}
\label{thm:disc_dim}
Suppose a multivariate time series dataset exhibits $n$ mutually independent core properties (e.g., marginal distributions, feature correlations, autocorrelation structures), a discriminator $D$ flags a property as real with probability $p_d$ when the property is faithfully present and with probability $1-p_d$ otherwise, and a generator $G$ reproduces each property faithfully with probability $p_g$. Under independence across properties and a balanced real/synthetic test set, the expected discriminative accuracy is
\begin{equation}
\mathbb{E}[A(n)] = \tfrac{1}{2} + \tfrac{1}{2}\, p_d^{\,n} - \tfrac{1}{2}\left(p_g p_d + (1-p_g)(1-p_d)\right)^{n}.
\end{equation}
For any generator with imperfect fidelity ($p_g<1$) and any non-random discriminator ($p_d>1/2$), $\mathbb{E}[A(n)]-\tfrac{1}{2}$ is strictly positive for all $n$, and $\mathbb{E}[A(n)]$ converges to $\tfrac{1}{2}$ exponentially fast as $n$ grows.
\end{thm}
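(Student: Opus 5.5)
The plan is to fix a concrete decision rule for the discriminator and then compute its accuracy separately on real and on synthetic inputs. The rule implicit in the statement is conjunctive: $D$ labels a whole sample ``real'' if and only if it flags every one of the $n$ properties as real. Because the test set is balanced, $\mathbb{E}[A(n)]=\tfrac12\Pr[\text{correct}\mid\text{real}]+\tfrac12\Pr[\text{correct}\mid\text{synthetic}]$. By the independence assumption, each conditional probability factorises over the properties.

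\emph{Real samples.} Every property is faithfully present, so each is flagged real with probability $p_d$. The sample is therefore correctly labelled real with probability $p_d^{\,n}$.

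\emph{Synthetic samples.} I first condition on whether $G$ reproduced a given property. That property is flagged real with probability
\begin{equation}
q:=p_g p_d+(1-p_g)(1-p_d).
\end{equation}
By independence the whole sample is flagged real with probability $q^{\,n}$, so it is correctly rejected with probability $1-q^{\,n}$.

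Substituting both into the balanced average gives $\tfrac12 p_d^{\,n}+\tfrac12(1-q^{\,n})$. This is exactly the stated formula $\tfrac12+\tfrac12 p_d^{\,n}-\tfrac12 q^{\,n}$.

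For strict positivity, note that
\begin{equation}
p_d-q=(1-p_g)(2p_d-1).
\end{equation}
Under $p_g<1$ and $p_d>1/2$ this is strictly positive, so $0\le q<p_d$. Hence $p_d^{\,n}>q^{\,n}$ for every $n\ge1$, which gives $\mathbb{E}[A(n)]-\tfrac12>0$.

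For convergence, I would bound
\begin{equation}
0<\mathbb{E}[A(n)]-\tfrac12\le\tfrac12 p_d^{\,n}.
\end{equation}
This tends to zero geometrically at rate $p_d$.

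I expect the main obstacle to be the convergence claim, not the algebra. The geometric bound needs $p_d<1$. If $p_d=1$, the formula gives $\mathbb{E}[A(n)]=1-\tfrac12 q^{\,n}\to1$ rather than $\tfrac12$. I would therefore state explicitly that the convergence part uses an imperfect discriminator, $\tfrac12<p_d<1$; this is implicit in calling $D$ a per-property flagger with error.

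The second point to make explicit is that the conjunctive decision rule is the modelling assumption behind the formula. I would state it at the outset so that the derivation is unambiguous.
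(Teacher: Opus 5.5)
The paper states this theorem without proof (Appendix~\ref{app:theory} proves only Theorems~\ref{thm:error} and~\ref{thm:scale}), so there is no argument to compare against. Your conjunctive-rule derivation is exactly what the displayed formula encodes, and it is correct. That includes the identity $p_d-q=(1-p_g)(2p_d-1)$, which gives $\mathbb{E}[A(n)]-\tfrac12>0$ for every $n\ge1$; at $n=0$ the gap is $0$, so ``for all $n$'' in the statement must mean $n\ge1$.

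Your caveat is a genuine correction to the statement, not a weakness in your argument. The hypothesis $p_d>1/2$ admits $p_d=1$, where $q=p_g$ and $\mathbb{E}[A(n)]=1-\tfrac12 p_g^{\,n}\to1$. Convergence to $\tfrac12$ therefore requires $\tfrac12<p_d<1$. In that range, your upper bound together with $\mathbb{E}[A(n)]-\tfrac12\ge\tfrac12(1-p_g)(2p_d-1)\,p_d^{\,n-1}$ (keep only the leading term of $p_d^{\,n}-q^{\,n}=(p_d-q)\sum_{k=0}^{n-1}p_d^{\,n-1-k}q^k$) shows the rate is exactly $p_d$.

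You are also right to state the conjunctive rule as a hypothesis, because the conclusion depends on it. The decay to $\tfrac12$ comes entirely from the real-sample term $p_d^{\,n}$: the discriminator ends up rejecting almost every real sample. A rule that thresholds the fraction of flagged properties at any level strictly between $q$ and $p_d$ would instead have accuracy tending to $1$, by the law of large numbers.
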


\section{Theoretical Analysis}
\label{app:theory}
\noindent\textbf{Theorem~\ref{thm:error}.} \emph{For a sequence of total length $L$ generated by a $K$-level hierarchical autoregressive architecture with branching factor $l$ (where $K=\log_l L$), if the local parameter estimation error per step is $\delta$, the total KL divergence between the ground-truth distribution $p(X_{\cdot,1:L})$ and the generated distribution $p(\hat{X}_{\cdot,1:L})$ satisfies:}
\begin{equation*}
    D_{KL}\left(p(X_{\cdot,L}\mid c)\|p(\hat{X}_{\cdot,L}\mid c)\right)=\Omega(\delta lK).
\end{equation*}

\begin{proof}[Proof of Theorem~\ref{thm:error}]
    According to \citet{aecgan} (Prop. 1), for a conventional autoregressive generation process $AR(1)$ of $l$ steps with maximum eigenvalue $\lambda\ge1$ and parameter discrepancy $\delta$, the divergence between the true local density and the estimated density grows at least nearly with sequence length:
    \begin{equation}
        D_{KL}\left(p(X_{\cdot,l}\mid X_{\cdot,0})\|p(\hat{X}_{\cdot,l}\mid X_{\cdot,0})\right)=\Omega(\delta l)
    \end{equation}
    Our introduction of static conditions can be considered altering $p$ based on the conditions, which does not affect the conclusion.

    In $i$-th level,  let $X^{[i]}_{\cdot,j}\sim X_{\cdot,(j-1)L^{[i+1]}+1:jL^{[i+1]}}$ be the $j$-th (global) step in this level, its parent is then $X^{[i-1]}_{\cdot,\left\lceil j/l\right\rceil}$.
    Between each hierarchical level, consider the information from previous level(s) to be passed and shared for all steps in the current level, then in $i$-th level
    \begin{align}
        D_{KL}^{[i]}&=D_{KL}\left(p(X_{\cdot,j}^{[i]}, X_{\cdot,\left\lceil j/l\right\rceil}^{[i-1]})\|p(\hat{X}_{\cdot,j}^{[i]}, \hat{X}_{\cdot,\left\lceil j/l\right\rceil}^{[i-1]})\right)\notag\\
        &=D_{KL}\left(p(X_{\cdot,\left\lceil j/l\right\rceil})\|p(\hat{X}_{\cdot,\left\lceil j/l\right\rceil})\right)+D_{KL}\left(p(X_{\cdot,j}^{[i]}\mid X_{\cdot,\left\lceil j/l\right\rceil}^{[i-1]})\|p(\hat{X}_{\cdot,j}^{[i]}\mid X_{\cdot,\left\lceil j/l\right\rceil}^{[i-1]})\right)\notag\\
        &=D_{KL}^{[i-1]}+\Omega(\delta (j\mod l))
    \end{align}
    Thus, by induction, we have
    \begin{equation}
        D_{KL}\left(p(X_{\cdot,L}\mid c)\|p(\hat{X}_{\cdot,L}\mid c)\right)=\Omega(\delta lK).
    \end{equation}
\end{proof}

\noindent\textbf{Theorem~\ref{thm:scale}.} \emph{Let $B$ be the batch size, $L$ the sequence length, $d$ the hidden dimension, and $l$ the local patch size per level. For a hierarchical model like FracTS with $K = \log_l L$ levels, the total activation memory complexity is $\mathcal{O}(BL(l+d))$.}

\begin{proof}[Proof of Theorem~\ref{thm:scale}]
    Recall the memory requirement for canonical Transformer~\citep{transformer} is $\mathcal{O}(BL^2+BLd)$.
    The batch size at each level is $B\left\lceil\frac{L}{L^{[i]}}\right\rceil$, sequence length is $\left\lceil\frac{L^{[i]}}{L^{[i+1]}}\right\rceil$
    Then the memory requirement for FracTS is
    \begin{align}
        &\sum_{i=0}^{K-1}\mathcal{O}\left(B\left\lceil\frac{L}{L^{[i]}}\right\rceil\cdot\left\lceil\frac{L^{[i]}}{L^{[i+1]}}\right\rceil^2+B\left\lceil\frac{L}{L^{[i]}}\right\rceil\cdot\left\lceil\frac{L^{[i]}}{L^{[i+1]}}\right\rceil d\right)\notag\\
        ={}&B\sum_{i=0}^{K-1}\mathcal{O}\left(\frac{Ll}{L^{[i+1]}}+\frac{Ld}{L^{[i+1]}}\right)\notag\\
        ={}&BL(l+d)\sum_{i=0}^{K-1}\mathcal{O}\left(\frac{1}{L/l^{i+1}}\right)\notag\\
        ={}&B(l+d)\sum_{i=0}^{K-1}\mathcal{O}\left(l^{i+1}\right)\notag\\
        ={}&Bl(l+d)\cdot\mathcal{O}\left(\frac{l^K-1}{l-1}\right)\notag\\
        ={}&Bl(l+d)\cdot\mathcal{O}\left(\frac{L-1}{l-1}\right)\notag\\
        ={}&\mathcal{O}(BL(l+d))
    \end{align}
\end{proof}

\section{Full Results with Standard Deviations}
\label{app:full-results}

Tables~\ref{tab:uncond_full} and~\ref{tab:cond_full} report the per-dataset results with the standard deviations over the three runs; Tables~\ref{tab:uncond_results} and~\ref{tab:walmart_cond_quality} in the main paper average these values over the datasets.

We now explain the negative $\Delta$ values in Table~\ref{tab:walmart_cond_quality}. For MD raw and CP raw, FracTS wins on Walmart (0.077 against 0.113, and 0.050 against 0.080) while TabularARGN wins on Simulated (0.195 against 0.266, and 0.057 against 0.097). The errors of FracTS on Simulated are about 3.5 and 1.9 times its errors on Walmart on these two columns, so Simulated weighs more in the dataset average, and the average turns negative even though FracTS wins on Walmart. For agg.+cond., FracTS wins on Simulated but only within the run-to-run standard deviation (0.315$\pm$0.007 against 0.319$\pm$0.008), so the two methods tie there; on Walmart, TabularARGN wins clearly (0.370 against 0.503), and this is the only cell where FracTS is not in the top two. The Walmart gap on MD raw (0.077$\pm$0.034 against 0.113$\pm$0.027) is also within the standard deviation, so the two methods tie on that cell as well. Each negative value therefore comes from a single dataset and column against one baseline, while FracTS is best or tied-best on all temporal and predictive columns on both datasets.

\begin{table}[!p]
\centering
\caption{Unconditional generation performance with standard deviations over three runs (lower is better; best in \textbf{bold}).}
\label{tab:uncond_full}
\footnotesize
\setlength{\tabcolsep}{2.5pt}
\resizebox{\linewidth}{!}{
\begin{tabular}{ll cccccc cc ccc}
\toprule
 & & \multicolumn{6}{c}{\cellcolor{headband}\textbf{MD}} & \multicolumn{2}{c}{\cellcolor{headband}\textbf{CP}} & \multicolumn{3}{c}{\cellcolor{headband}\textbf{PM}} \\
\cmidrule(lr){3-8}\cmidrule(lr){9-10}\cmidrule(lr){11-13}
 & \textbf{Model} & raw & mean & quantile & skewness & autocorr. & length & raw & agg. & $W{=}10$ & $W{=}100$ & $W{=}1000$ \\
\midrule
\rowcolors{1}{white}{zebra}
\multirow{7}{*}{\rotatebox{90}{Weather}}
 & AEC-GAN      & $0.503_{\pm 0.000}$ & $0.845_{\pm 0.000}$ & $0.887_{\pm 0.000}$ & $0.613_{\pm 0.000}$ & $0.697_{\pm 0.000}$ & \Na & $0.393_{\pm 0.000}$ & $0.465_{\pm 0.000}$ & $0.076_{\pm 0.000}$ & $0.077_{\pm 0.000}$ & $0.075_{\pm 0.000}$ \\
 & KoVAE        & $0.692_{\pm 0.238}$ & $0.861_{\pm 0.113}$ & $0.877_{\pm 0.099}$ & $0.781_{\pm 0.170}$ & $0.775_{\pm 0.082}$ & $1.000_{\pm 0.000}$ & $0.169_{\pm 0.010}$ & $0.438_{\pm 0.001}$ & $0.286_{\pm 0.291}$ & $0.284_{\pm 0.279}$ & $0.280_{\pm 0.275}$ \\
 & DiffusionTS  & $0.805_{\pm 0.002}$ & $1.000_{\pm 0.000}$ & $1.000_{\pm 0.000}$ & $0.714_{\pm 0.015}$ & $0.821_{\pm 0.045}$ & $1.000_{\pm 0.000}$ & $0.361_{\pm 0.113}$ & $0.523_{\pm 0.045}$ & $0.062_{\pm 0.010}$ & $0.082_{\pm 0.014}$ & $0.069_{\pm 0.020}$ \\
 & TimeVQVAE    & $0.466_{\pm 0.049}$ & $0.720_{\pm 0.069}$ & $0.705_{\pm 0.068}$ & $0.563_{\pm 0.022}$ & $0.840_{\pm 0.012}$ & $1.000_{\pm 0.000}$ & $0.156_{\pm 0.013}$ & $0.511_{\pm 0.028}$ & $0.047_{\pm 0.008}$ & $0.049_{\pm 0.007}$ & $0.045_{\pm 0.006}$ \\
 & COSCI-GAN    & $0.559_{\pm 0.001}$ & $0.879_{\pm 0.009}$ & $0.894_{\pm 0.021}$ & $1.000_{\pm 0.000}$ & $1.000_{\pm 0.000}$ & $1.000_{\pm 0.000}$ & $0.745_{\pm 0.001}$ & $0.677_{\pm 0.140}$ & $0.125_{\pm 0.004}$ & $0.125_{\pm 0.005}$ & $0.135_{\pm 0.005}$ \\
 & PaD-TS       & $0.388_{\pm 0.020}$ & $0.581_{\pm 0.012}$ & $0.679_{\pm 0.036}$ & $0.650_{\pm 0.048}$ & $0.788_{\pm 0.027}$ & $1.000_{\pm 0.000}$ & $0.354_{\pm 0.039}$ & $0.542_{\pm 0.054}$ & $0.023_{\pm 0.002}$ & $0.023_{\pm 0.002}$ & $0.021_{\pm 0.002}$ \\
\rowcolor{fractsrow}
 & \textbf{FracTS (Ours)} & $\mathbf{0.279}_{\pm 0.083}$ & $\mathbf{0.573}_{\pm 0.123}$ & $\mathbf{0.545}_{\pm 0.136}$ & $\mathbf{0.542}_{\pm 0.088}$ & $\mathbf{0.621}_{\pm 0.072}$ & $\mathbf{0.440}_{\pm 0.244}$ & $\mathbf{0.113}_{\pm 0.003}$ & $\mathbf{0.358}_{\pm 0.022}$ & $\mathbf{0.011}_{\pm 0.002}$ & $\mathbf{0.011}_{\pm 0.002}$ & $\mathbf{0.015}_{\pm 0.008}$ \\
\midrule
\rowcolors{1}{white}{zebra}
\multirow{7}{*}{\rotatebox{90}{Stocks}}
 & AEC-GAN      & $0.453_{\pm 0.001}$ & $0.537_{\pm 0.003}$ & $0.529_{\pm 0.002}$ & $0.726_{\pm 0.002}$ & $0.618_{\pm 0.002}$ & \Na & $0.143_{\pm 0.000}$ & $\mathbf{0.129}_{\pm 0.002}$ & $12\,157_{\pm 18}$ & $12\,513_{\pm 19}$ & $16\,466_{\pm 24}$ \\
 & KoVAE        & $0.553_{\pm 0.000}$ & $0.593_{\pm 0.000}$ & $0.489_{\pm 0.000}$ & $0.990_{\pm 0.000}$ & $0.863_{\pm 0.000}$ & $0.885_{\pm 0.000}$ & $0.377_{\pm 0.000}$ & $0.464_{\pm 0.000}$ & $22\,384_{\pm 0}$ & $23\,039_{\pm 0}$ & $30\,312_{\pm 0}$ \\
 & DiffusionTS  & $0.703_{\pm 0.063}$ & $0.912_{\pm 0.026}$ & $0.921_{\pm 0.019}$ & $0.587_{\pm 0.016}$ & $0.865_{\pm 0.040}$ & $0.885_{\pm 0.000}$ & $0.777_{\pm 0.081}$ & $0.579_{\pm 0.025}$ & $7\,573_{\pm 0}$ & $7\,795_{\pm 0}$ & $10\,257_{\pm 0}$ \\
 & TimeVQVAE    & $0.646_{\pm 0.001}$ & $0.656_{\pm 0.009}$ & $0.653_{\pm 0.015}$ & $0.424_{\pm 0.041}$ & $0.796_{\pm 0.040}$ & $0.885_{\pm 0.000}$ & $0.099_{\pm 0.010}$ & $0.375_{\pm 0.007}$ & $20\,467_{\pm 994}$ & $21\,067_{\pm 1\,023}$ & $27\,721_{\pm 1\,346}$ \\
 & COSCI-GAN    & $0.494_{\pm 0.082}$ & $0.641_{\pm 0.197}$ & $0.633_{\pm 0.205}$ & $0.761_{\pm 0.121}$ & $0.925_{\pm 0.035}$ & $0.885_{\pm 0.000}$ & $0.268_{\pm 0.086}$ & $0.311_{\pm 0.033}$ & $7\,175_{\pm 1\,598}$ & $7\,385_{\pm 1\,645}$ & $9\,716_{\pm 2\,164}$ \\
 & PaD-TS       & $0.292_{\pm 0.020}$ & $0.327_{\pm 0.005}$ & $0.328_{\pm 0.015}$ & $0.735_{\pm 0.017}$ & $0.785_{\pm 0.031}$ & $0.885_{\pm 0.000}$ & $0.289_{\pm 0.012}$ & $0.397_{\pm 0.024}$ & $11\,524_{\pm 615}$ & $11\,862_{\pm 633}$ & $15\,607_{\pm 833}$ \\
\rowcolor{fractsrow}
 & \textbf{FracTS (Ours)} & $\mathbf{0.225}_{\pm 0.017}$ & $\mathbf{0.232}_{\pm 0.026}$ & $\mathbf{0.230}_{\pm 0.034}$ & $\mathbf{0.139}_{\pm 0.028}$ & $\mathbf{0.335}_{\pm 0.039}$ & $\mathbf{0.675}_{\pm 0.026}$ & $\mathbf{0.023}_{\pm 0.004}$ & $0.148_{\pm 0.003}$ & $\mathbf{0.001}_{\pm 0.001}$ & $\mathbf{0.001}_{\pm 0.001}$ & $\mathbf{0.005}_{\pm 0.004}$ \\
\midrule
\rowcolors{1}{white}{zebra}
\multirow{7}{*}{\rotatebox{90}{Sensor}}
 & AEC-GAN      & $0.615_{\pm 0.000}$ & $0.667_{\pm 0.000}$ & $0.676_{\pm 0.001}$ & $0.511_{\pm 0.000}$ & $0.513_{\pm 0.001}$ & \Na & $0.328_{\pm 0.000}$ & $0.351_{\pm 0.000}$ & $0.006_{\pm 0.000}$ & $0.037_{\pm 0.000}$ & \Na \\
 & KoVAE        & $0.780_{\pm 0.112}$ & $0.671_{\pm 0.257}$ & $0.653_{\pm 0.257}$ & $0.581_{\pm 0.065}$ & $0.465_{\pm 0.165}$ & $0.984_{\pm 0.000}$ & $0.236_{\pm 0.047}$ & $0.566_{\pm 0.014}$ & $0.159_{\pm 0.213}$ & $0.247_{\pm 0.132}$ & \Na \\
 & DiffusionTS  & $0.577_{\pm 0.000}$ & $0.885_{\pm 0.000}$ & $0.823_{\pm 0.000}$ & $0.756_{\pm 0.000}$ & $0.639_{\pm 0.000}$ & $0.984_{\pm 0.000}$ & $0.260_{\pm 0.000}$ & $0.523_{\pm 0.000}$ & $0.162_{\pm 0.000}$ & $0.178_{\pm 0.000}$ & \Na \\
 & TimeVQVAE    & $0.790_{\pm 0.015}$ & $0.801_{\pm 0.016}$ & $0.742_{\pm 0.010}$ & $0.501_{\pm 0.031}$ & $0.559_{\pm 0.032}$ & $0.984_{\pm 0.000}$ & $0.293_{\pm 0.018}$ & $0.443_{\pm 0.004}$ & $0.045_{\pm 0.002}$ & $0.055_{\pm 0.004}$ & \Na \\
 & COSCI-GAN    & $0.619_{\pm 0.005}$ & $0.877_{\pm 0.014}$ & $0.718_{\pm 0.024}$ & $0.757_{\pm 0.014}$ & $0.649_{\pm 0.004}$ & $0.984_{\pm 0.000}$ & $0.372_{\pm 0.007}$ & $0.430_{\pm 0.001}$ & $0.055_{\pm 0.004}$ & $0.131_{\pm 0.008}$ & \Na \\
 & PaD-TS       & $0.556_{\pm 0.006}$ & $0.576_{\pm 0.004}$ & $0.532_{\pm 0.014}$ & $0.470_{\pm 0.010}$ & $0.569_{\pm 0.010}$ & $0.984_{\pm 0.000}$ & $0.271_{\pm 0.008}$ & $0.470_{\pm 0.008}$ & $\mathbf{0.004}_{\pm 0.001}$ & $\mathbf{0.004}_{\pm 0.001}$ & \Na \\
\rowcolor{fractsrow}
 & \textbf{FracTS (Ours)} & $\mathbf{0.437}_{\pm 0.007}$ & $\mathbf{0.228}_{\pm 0.019}$ & $\mathbf{0.304}_{\pm 0.021}$ & $\mathbf{0.130}_{\pm 0.017}$ & $\mathbf{0.113}_{\pm 0.007}$ & $\mathbf{0.212}_{\pm 0.048}$ & $\mathbf{0.026}_{\pm 0.003}$ & $\mathbf{0.161}_{\pm 0.003}$ & $0.007_{\pm 0.001}$ & $0.008_{\pm 0.000}$ & \Na \\
\bottomrule
\end{tabular}
}
\end{table}

\begin{table}[!t]
\centering
\caption{Conditional generation performance with standard deviations over three runs (lower is better; best in \textbf{bold}).}
\label{tab:cond_full}
\footnotesize
\setlength{\tabcolsep}{2.5pt}
\resizebox{\linewidth}{!}{
\begin{tabular}{ll cccccc cc ccc cc}
\toprule
 & & \multicolumn{6}{c}{\cellcolor{headband}\textbf{MD}} & \multicolumn{2}{c}{\cellcolor{headband}\textbf{CP}} & \multicolumn{3}{c}{\cellcolor{headband}\textbf{PM}} & \multicolumn{2}{c}{\cellcolor{headband}\textbf{GT}} \\
\cmidrule(lr){3-8}\cmidrule(lr){9-10}\cmidrule(lr){11-13}\cmidrule(lr){14-15}
 & \textbf{Model} & raw & mean & quantile & skewness & autocorr. & length & raw & agg.+cond. & $W{=}10$ & $W{=}100$ & $W{=}1000$ & MSE & MAE \\
\midrule
\rowcolors{1}{white}{zebra}
\multirow{4}{*}{\rotatebox{90}{Walmart}}
 & TabularARGN  & $0.113_{\pm 0.027}$ & $0.812_{\pm 0.116}$ & $0.775_{\pm 0.109}$ & $0.741_{\pm 0.098}$ & $1.000_{\pm 0.000}$ & $0.556_{\pm 0.000}$ & $0.080_{\pm 0.002}$ & $\mathbf{0.370}_{\pm 0.047}$ & $\mathbf{0.007}_{\pm 0.001}$ & $0.069_{\pm 0.001}$ & $\mathbf{0.005}_{\pm 0.001}$ & \Na & \Na \\
 & WaveStitch   & $0.653_{\pm 0.015}$ & $1.000_{\pm 0.000}$ & $0.944_{\pm 0.096}$ & $0.932_{\pm 0.069}$ & $1.000_{\pm 0.000}$ & $0.889_{\pm 0.000}$ & $0.163_{\pm 0.021}$ & $0.619_{\pm 0.045}$ & $0.081_{\pm 0.008}$ & $0.018_{\pm 0.008}$ & $0.076_{\pm 0.006}$ & \Na & \Na \\
 & TimeAutoDiff & $0.247_{\pm 0.182}$ & $0.778_{\pm 0.185}$ & $0.802_{\pm 0.113}$ & $0.667_{\pm 0.161}$ & $0.852_{\pm 0.148}$ & $0.889_{\pm 0.000}$ & $0.237_{\pm 0.005}$ & $0.435_{\pm 0.019}$ & $0.142_{\pm 0.083}$ & $0.181_{\pm 0.133}$ & $0.153_{\pm 0.096}$ & \Na & \Na \\
\rowcolor{fractsrow}
 & \textbf{FracTS (Ours)} & $\mathbf{0.077}_{\pm 0.034}$ & $\mathbf{0.583}_{\pm 0.167}$ & $\mathbf{0.633}_{\pm 0.130}$ & $\mathbf{0.549}_{\pm 0.043}$ & $\mathbf{0.781}_{\pm 0.088}$ & $\mathbf{0.278}_{\pm 0.073}$ & $\mathbf{0.050}_{\pm 0.006}$ & $0.503_{\pm 0.018}$ & $\mathbf{0.007}_{\pm 0.002}$ & $\mathbf{0.003}_{\pm 0.003}$ & $\mathbf{0.005}_{\pm 0.002}$ & \Na & \Na \\
\midrule
\rowcolors{1}{white}{zebra}
\multirow{4}{*}{\rotatebox{90}{Simulated}}
 & TabularARGN  & $\mathbf{0.195}_{\pm 0.004}$ & $0.650_{\pm 0.032}$ & $\mathbf{0.644}_{\pm 0.030}$ & $0.756_{\pm 0.021}$ & $0.988_{\pm 0.003}$ & $\mathbf{0.000}_{\pm 0.000}$ & $\mathbf{0.057}_{\pm 0.002}$ & $0.319_{\pm 0.008}$ & $0.104_{\pm 0.017}$ & $0.099_{\pm 0.016}$ & $0.095_{\pm 0.015}$ & $1.759_{\pm 0.080}$ & $0.952_{\pm 0.034}$ \\
 & WaveStitch   & $0.358_{\pm 0.016}$ & $0.832_{\pm 0.045}$ & $0.837_{\pm 0.041}$ & $0.752_{\pm 0.030}$ & $0.993_{\pm 0.000}$ & $\mathbf{0.000}_{\pm 0.000}$ & $0.129_{\pm 0.007}$ & $0.381_{\pm 0.009}$ & $0.037_{\pm 0.002}$ & $0.042_{\pm 0.002}$ & $0.038_{\pm 0.004}$ & $1.233_{\pm 0.109}$ & $0.884_{\pm 0.040}$ \\
 & TimeAutoDiff & $0.458_{\pm 0.030}$ & $0.760_{\pm 0.044}$ & $0.762_{\pm 0.037}$ & $0.927_{\pm 0.035}$ & $0.944_{\pm 0.004}$ & $\mathbf{0.000}_{\pm 0.000}$ & $0.219_{\pm 0.014}$ & $0.407_{\pm 0.016}$ & $0.669_{\pm 0.092}$ & $0.659_{\pm 0.088}$ & $0.630_{\pm 0.081}$ & $144.260_{\pm 103.107}$ & $5.103_{\pm 2.088}$ \\
\rowcolor{fractsrow}
 & \textbf{FracTS (Ours)} & $0.266_{\pm 0.048}$ & $\mathbf{0.639}_{\pm 0.016}$ & $0.658_{\pm 0.029}$ & $\mathbf{0.661}_{\pm 0.017}$ & $\mathbf{0.887}_{\pm 0.027}$ & $\mathbf{0.000}_{\pm 0.000}$ & $0.097_{\pm 0.006}$ & $\mathbf{0.315}_{\pm 0.007}$ & $\mathbf{0.016}_{\pm 0.015}$ & $\mathbf{0.012}_{\pm 0.011}$ & $\mathbf{0.014}_{\pm 0.011}$ & $\mathbf{0.983}_{\pm 0.109}$ & $\mathbf{0.545}_{\pm 0.051}$ \\
\bottomrule
\end{tabular}
}
\end{table}

\begin{table}[!t]
\centering
\caption{Ablation of FracTS components, reporting PM (predictive ML utility) MAE at prediction windows $W$ on Weather and Walmart (lower is better). \Na{} denotes ``same as the default setting'' (top row, highlighted).}
\label{tab:ablation}
    \setlength{\tabcolsep}{4pt}
    \resizebox{\linewidth}{!}{
    \begin{tabular}{lllllll ccc ccc}
    \toprule
    & & & & & & & \multicolumn{3}{c}{\textbf{Weather} $\downarrow$} & \multicolumn{3}{c}{\textbf{Walmart} $\downarrow$} \\
    \cmidrule(lr){8-10} \cmidrule(lr){11-13}
    \textbf{Data proc.} & \textbf{Var.-length handling} & \textbf{Length source} & \textbf{Global cond.} & $l^{[i]}$ & \textbf{Backbone} & \textbf{Cond.\ model $q_\phi$} & $W{=}10$ & $W{=}100$ & $W{=}1000$ & $W{=}10$ & $W{=}100$ & $W{=}1000$ \\
    \midrule

    % --- Default Row ---
    \textbf{FracTS (default)} & att.\ mask & predictor & \cmark & 20 & GPT-2 & CTGAN & $0.010$ & $0.010$ & $0.010$ & $0.009$ & $0.007$ & $0.002$ \\
    \midrule

    % --- Ablation Rows ---
     w/o differences   & \Na & \Na & \Na & \Na & \Na & \Na            & $0.021$ & $0.020$ & $0.022$ & $0.027$ & $0.028$ & $0.009$ \\
     w/o bins          & \Na & \Na & \Na & \Na & \Na & \Na            & $0.104$ & $0.109$ & $0.118$ & $0.003$ & $0.024$ & $0.017$ \\
     $k=2$ bins       & \Na & \Na & \Na & \Na & \Na & \Na            & $0.018$ & $0.024$ & $0.034$ & $0.003$ & $0.006$ & $0.008$ \\
    $k=200$ bins   & \Na & \Na & \Na & \Na & \Na & \Na             & $0.034$ & $0.035$ & $0.034$ & $0.009$ & $0.007$ & $0.005$ \\
     w/o aggregation   & \Na & \Na & \Na & \Na & \Na & \Na            & $0.011$ & $0.011$ & $0.014$ & $0.021$ & $0.016$ & $0.011$ \\
     w/o all above     & \Na & \Na & \Na & \Na & \Na & \Na            & $0.027$ & $0.028$ & $0.031$ & $0.033$ & $0.040$ & $0.041$ \\
     \Na               & padding & \Na & \Na & \Na & \Na & \Na        & $0.010$ & $0.010$ & $0.010$ & $0.007$ & $0.011$ & $0.016$ \\
     \Na               & \Na & truncation & \Na & \Na & \Na & \Na     & $0.011$ & $0.012$ & $0.014$ & $0.007$ & $0.009$ & $0.036$ \\
     \Na               & \Na & \Na & \xmark & \Na & \Na & \Na         & $0.020$ & $0.018$ & $0.020$ & $0.004$ & $0.001$ & $0.006$ \\
     \Na               & \Na & \Na & \Na & 10 & \Na & \Na             & $0.020$ & $0.019$ & $0.020$ & $0.008$ & $0.003$ & $0.026$ \\
     \Na               & \Na & \Na & \Na & 100 & \Na & \Na            & $0.014$ & $0.015$ & $0.156$ & $0.078$ & $0.083$ & $0.086$ \\
     \Na & \Na & \Na & \Na & $L$ & \Na & \Na & $0.200$ & \textit{no data} & \textit{no data} & \textit{OOM} & \textit{OOM} & \textit{OOM} \\
     \Na               & \Na & \Na & \Na & \Na & Llama & \Na          & $0.019$ & $0.020$ & $0.019$ & $0.005$ & $0.003$ & $0.002$ \\
     \Na               & \Na & \Na & \Na & \Na & \Na & REaLTabFormer  & $0.010$ & $0.011$ & $0.013$ & $0.007$ & $0.000$ & $0.006$ \\
    \bottomrule
    \end{tabular}
    }
\end{table}

\section{Ablation Study}
\label{app:ablation}
Table~\ref{tab:ablation} shows the ablation experiment results.
In particular, column ``Data proc.'' shows the importance of difference data processing steps to the result. ``Var. len. by'' shows the difference of ways handling variable length input, by na\"ive padding or passing attention mask as input to the transformer. ``Len. by'' shows the source of length predicted. Results by varying these values show the effectiveness of the core design components of FracTS introduced in Section~\ref{sec:model}.

Column ``$l^{[i]}$'' refers to the length of the transformer at each level. In particular, $l^{[i]}=L$ corresponds to a single-level,
non-hierarchical setting (apart from the final layer for raw values). As shown in Table~\ref{tab:ablation}, this setting substantially degrades
performance on Weather and becomes infeasible on the much longer Walmart sequences, highlighting the importance of the hierarchical decomposition. These results demonstrate the effect of core hyperparameters.

Column ``Cond. model $q_\phi$'' refers to the tabular model used for condition generation, and ``Backbone'' refers to the transformer backbone. The results highlight both the impact of strong tabular backbones for condition generation and the influence of transformer size on model performance.

For K-Means discretization of numerical features, we use $k=20$ bins by
default. We additionally evaluate $k=2$ and $k=200$. A very small number
of bins provides limited modal guidance, whereas a very large number of
bins reduces the ordinal relationship between adjacent numerical values
and can degrade performance.

The effect of the Perceiver-based condition compression module is evaluated
separately in Table~\ref{tab:perceiver_ablation}.

For $l^{[i]}=L$, the model reduces to a single-level, non-hierarchical
transformer. On Weather, this setting degrades substantially at $W=10$
and does not generate sufficiently long sequences for $W=100$ and
$W=1000$. On Walmart, training results in out-of-memory (OOM) errors.
These results show that flattening the hierarchy becomes increasingly
problematic as sequence length grows.

\paragraph{Condition compression.}
We further evaluate the Perceiver-based condition compression module on
the Weather dataset. The module compresses the 53-dimensional condition
representation to 16 dimensions. As shown in
Table~\ref{tab:perceiver_ablation}, this reduction preserves predictive ML utility with almost no degradation across the three prediction windows.
We omit this ablation on Walmart because its condition representation is
already low-dimensional, so the Perceiver module provides little effective
compression in this setting.

% \begin{center}
% \inlinecaption{table}{Ablation of the Perceiver-based condition compression module on Weather.
% Values report PM (predictive ML utility) MAE; lower is better.}{tab:perceiver_ablation}
% \small
% \setlength{\tabcolsep}{3pt}
% \begin{tabular}{lccc}
% \toprule
% \textbf{Perceiver compression}
% & $W{=}10$
% & $W{=}100$
% & $W{=}1000$ \\
% \midrule
% Yes (default) & $0.010$ & $0.010$ & $0.010$ \\
% No            & $0.010$ & $0.011$ & $0.011$ \\
% \bottomrule
% \end{tabular}
% \end{center}
\begin{wraptable}{r}{0.52\linewidth}
\centering
\inlinecaption{table}{Ablation of the Perceiver-based condition compression module on Weather.
Values report PM (predictive ML utility) MAE; lower is better.}{tab:perceiver_ablation}
\small
\setlength{\tabcolsep}{3pt}
\begin{tabular}{lccc}
\toprule
\textbf{Perceiver compression}
& $W{=}10$
& $W{=}100$
& $W{=}1000$ \\
\midrule
Yes (default) & $0.010$ & $0.010$ & $0.010$ \\
No            & $0.010$ & $0.011$ & $0.011$ \\
\bottomrule
\end{tabular}
\end{wraptable}

%%%%%%%%%%%%%%%%%%%%%%%%%%%%%%%%%%%%%%%%%%%%%%%%%%%%%%%%%%%%
\section{Limitations}
\label{app:limitations}

\paragraph{Computational overhead.}
FracTS uses multiple transformer models across its hierarchical levels and
therefore incurs additional computational cost. Training runtime, inference
latency, and GPU memory usage are reported in Appendix~\ref{app:fracts}.
While FracTS trades some computational efficiency for its ability to model
long, variable-length time series, its computational requirements remain
practical under the settings evaluated in this work.

\paragraph{Sensitivity to hierarchy depth.}
The hierarchy depth is determined by the local patch size $l^{[i]}$ relative
to the sequence length. Using a fixed appropriate patch size, such as the
default $l^{[i]}=20$, automatically adapts the number of hierarchy levels to
the scale of the dataset. As shown in Appendix~\ref{app:ablation}, performance
degrades mainly under more extreme choices, such as excessively large patch
sizes or the completely flattened single-level setting $l^{[i]}=L$.

\paragraph{Failure modes and scope.}
FracTS is designed primarily for long and variable-length time series.
Applying the model to short, fixed-length sequences may introduce unnecessary
architectural complexity compared with conventional generative models.
Conversely, for extremely long sequences, GPU memory limitations can arise
from the growing effective batch sizes across hierarchical levels; this can
be addressed in practical implementations through level-wise gradient
accumulation or updates.

FracTS also assumes clean inputs represented using standard tabular data types,
including numerical, categorical, and datetime features. It does not directly
handle arbitrary textual metadata, identifier fields, or raw missing values.
For missing values, we assume that standard preprocessing, such as simple
imputation or explicit missing-value indicators, is applied before training.

\paragraph{Irregularly sampled time series.}
FracTS can represent irregularly sampled sequences through its support for
variable sequence lengths and explicit datetime features. For example, the
Stock dataset contains irregularities due to market closures, weekends, and
trading holidays. By using precise timestamps as input features, together
with temporal step differences during preprocessing, FracTS does not require
uniformly spaced observations.

\paragraph{Privacy.}
We evaluate privacy using the Distance-to-Closest-Record (DCR) metric
across different window sizes. Higher values indicate lower privacy
leakage risk, with $0.5$ representing an ideal balance and $1$
representing ideal privacy preservation. The results are reported in
Table~\ref{tab:dcr_privacy}.

\begin{center}
\inlinecaption{table}{
Distance-to-Closest-Record (DCR) privacy evaluation across different
window sizes. Higher values indicate lower privacy leakage risk.
}{tab:dcr_privacy}
\setlength{\tabcolsep}{8pt}
\begin{tabular}{llccc}
\toprule
\textbf{Dataset} & \textbf{Model}
& $W{=}10$
& $W{=}100$
& $W{=}1000$ \\
\midrule
Weather
& FracTS
& $0.515_{\pm 0.010}$
& $0.273_{\pm 0.058}$
& $0.291_{\pm 0.106}$ \\

Weather
& PaD-TS
& $0.592_{\pm 0.035}$
& $0.364_{\pm 0.036}$
& $0.125_{\pm 0.046}$ \\
\midrule
Walmart
& FracTS
& $0.649_{\pm 0.009}$
& $0.314_{\pm 0.022}$
& $0.200_{\pm 0.025}$ \\

Walmart
& TabularARGN
& $0.942_{\pm 0.034}$
& $0.893_{\pm 0.029}$
& $0.849_{\pm 0.081}$ \\
\bottomrule
\end{tabular}
\end{center}

There is an inherent trade-off between fidelity and privacy. FracTS
generally obtains lower DCR values than the compared baselines, although
it performs better in some long-horizon settings, such as compared with
PaD-TS at $W=1000$. Privacy risk is limited for shorter windows
($W=10$), while longer windows show increased leakage risk. This is
partly due to the smaller number of samples available for DCR evaluation
at $W=100$ and $W=1000$, which makes high-fidelity synthetic samples
naturally closer to real samples.

Because FracTS uses a standard transformer backbone, differential privacy
mechanisms such as DP-SGD can be incorporated when stricter privacy
guarantees are required.

%%%%%%%%%%%%%%%%%%%%%%%%%%%%%%%%%%%%%%%%%%%%%%%%%%%%%%%%%%%%

\clearpage
\section*{NeurIPS Paper Checklist}

\begin{enumerate}

\item {\bf Claims}
    \item[] Question: Do the main claims made in the abstract and introduction accurately reflect the paper's contributions and scope?
    \item[] Answer: \answerYes{}
    \item[] Justification: The abstract and introduction clearly outline the introduction of FracTS, its hierarchical and autoregressive nature, and its performance across five datasets compared to seven baselines.
    \item[] Guidelines:
    \begin{itemize}
        \item The answer NA means that the abstract and introduction do not include the claims made in the paper.
        \item The abstract and/or introduction should clearly state the claims made, including the contributions made in the paper and important assumptions and limitations. A No or NA answer to this question will not be perceived well by the reviewers. 
        \item The claims made should match theoretical and experimental results, and reflect how much the results can be expected to generalize to other settings. 
        \item It is fine to include aspirational goals as motivation as long as it is clear that these goals are not attained by the paper. 
    \end{itemize}

\item {\bf Limitations}
    \item[] Question: Does the paper discuss the limitations of the work performed by the authors?
    \item[] Answer: \answerYes{}
   \item[] Justification: The paper includes a dedicated limitations section
(Appendix~\ref{app:limitations}) discussing computational overhead,
sensitivity to hierarchy depth, failure modes and scope, handling of
irregularly sampled and missing-value data, and privacy considerations.
It also discusses performance limitations on highly complex composite
and oscillatory signals.
    \item[] Guidelines:
    \begin{itemize}
        \item The answer NA means that the paper has no limitation while the answer No means that the paper has limitations, but those are not discussed in the paper. 
        \item The authors are encouraged to create a separate "Limitations" section in their paper.
        \item The paper should point out any strong assumptions and how robust the results are to violations of these assumptions (e.g., independence assumptions, noiseless settings, model well-specification, asymptotic approximations only holding locally). The authors should reflect on how these assumptions might be violated in practice and what the implications would be.
        \item The authors should reflect on the scope of the claims made, e.g., if the approach was only tested on a few datasets or with a few runs. In general, empirical results often depend on implicit assumptions, which should be articulated.
        \item The authors should reflect on the factors that influence the performance of the approach. For example, a facial recognition algorithm may perform poorly when image resolution is low or images are taken in low lighting. Or a speech-to-text system might not be used reliably to provide closed captions for online lectures because it fails to handle technical jargon.
        \item The authors should discuss the computational efficiency of the proposed algorithms and how they scale with dataset size.
        \item If applicable, the authors should discuss possible limitations of their approach to address problems of privacy and fairness.
        \item While the authors might fear that complete honesty about limitations might be used by reviewers as grounds for rejection, a worse outcome might be that reviewers discover limitations that aren't acknowledged in the paper. The authors should use their best judgment and recognize that individual actions in favor of transparency play an important role in developing norms that preserve the integrity of the community. Reviewers will be specifically instructed to not penalize honesty concerning limitations.
    \end{itemize}

\item {\bf Theory Assumptions and Proofs}
    \item[] Question: For each theoretical result, does the paper provide the full set of assumptions and a complete (and correct) proof?
    \item[] Answer: \answerYes{}
    \item[] Justification: The paper provides two theoretical results in Section~\ref{sec:model}: an analysis of error propagation in the hierarchical autoregressive architecture and an analysis of its activation memory complexity. The assumptions are stated in the theorem statements, and proofs are provided in Appendix~\ref{app:theory}.
    \item[] Guidelines:
    \begin{itemize}
        \item The answer NA means that the paper does not include theoretical results. 
        \item All the theorems, formulas, and proofs in the paper should be numbered and cross-referenced.
        \item All assumptions should be clearly stated or referenced in the statement of any theorems.
        \item The proofs can either appear in the main paper or the supplemental material, but if they appear in the supplemental material, the authors are encouraged to provide a short proof sketch to provide intuition. 
        \item Inversely, any informal proof provided in the core of the paper should be complemented by formal proofs provided in appendix or supplemental material.
        \item Theorems and Lemmas that the proof relies upon should be properly referenced. 
    \end{itemize}

    \item {\bf Experimental Result Reproducibility}
    \item[] Question: Does the paper fully disclose all the information needed to reproduce the main experimental results of the paper to the extent that it affects the main claims and/or conclusions of the paper (regardless of whether the code and data are provided or not)?
    \item[] Answer: \answerYes{}
    \item[] Justification: The paper provides implementation details, including the architecture (GPT2 small), optimization parameters (AdamW, learning rate, weight decay), and specific hyperparameters for different datasets.
    \item[] Guidelines:
    \begin{itemize}
        \item The answer NA means that the paper does not include experiments.
        \item If the paper includes experiments, a No answer to this question will not be perceived well by the reviewers: Making the paper reproducible is important, regardless of whether the code and data are provided or not.
        \item If the contribution is a dataset and/or model, the authors should describe the steps taken to make their results reproducible or verifiable. 
        \item Depending on the contribution, reproducibility can be accomplished in various ways. For example, if the contribution is a novel architecture, describing the architecture fully might suffice, or if the contribution is a specific model and empirical evaluation, it may be necessary to either make it possible for others to replicate the model with the same dataset, or provide access to the model. In general. releasing code and data is often one good way to accomplish this, but reproducibility can also be provided via detailed instructions for how to replicate the results, access to a hosted model (e.g., in the case of a large language model), releasing of a model checkpoint, or other means that are appropriate to the research performed.
        \item While NeurIPS does not require releasing code, the conference does require all submissions to provide some reasonable avenue for reproducibility, which may depend on the nature of the contribution. For example
        \begin{enumerate}
            \item If the contribution is primarily a new algorithm, the paper should make it clear how to reproduce that algorithm.
            \item If the contribution is primarily a new model architecture, the paper should describe the architecture clearly and fully.
            \item If the contribution is a new model (e.g., a large language model), then there should either be a way to access this model for reproducing the results or a way to reproduce the model (e.g., with an open-source dataset or instructions for how to construct the dataset).
            \item We recognize that reproducibility may be tricky in some cases, in which case authors are welcome to describe the particular way they provide for reproducibility. In the case of closed-source models, it may be that access to the model is limited in some way (e.g., to registered users), but it should be possible for other researchers to have some path to reproducing or verifying the results.
        \end{enumerate}
    \end{itemize}

\item {\bf Open access to data and code}
    \item[] Question: Does the paper provide open access to the data and code, with sufficient instructions to faithfully reproduce the main experimental results, as described in supplemental material?
    \item[] Answer: \answerYes{}
    \item[] Justification: The code is publicly available at
\url{https://github.com/zhao-zilong/fracts}.
    \item[] Guidelines:
    \begin{itemize}
        \item The answer NA means that paper does not include experiments requiring code.
        \item Please see the NeurIPS code and data submission guidelines (\url{https://nips.cc/public/guides/CodeSubmissionPolicy}) for more details.
        \item While we encourage the release of code and data, we understand that this might not be possible, so “No” is an acceptable answer. Papers cannot be rejected simply for not including code, unless this is central to the contribution (e.g., for a new open-source benchmark).
        \item The instructions should contain the exact command and environment needed to run to reproduce the results. See the NeurIPS code and data submission guidelines (\url{https://nips.cc/public/guides/CodeSubmissionPolicy}) for more details.
        \item The authors should provide instructions on data access and preparation, including how to access the raw data, preprocessed data, intermediate data, and generated data, etc.
        \item The authors should provide scripts to reproduce all experimental results for the new proposed method and baselines. If only a subset of experiments are reproducible, they should state which ones are omitted from the script and why.
        \item At submission time, to preserve anonymity, the authors should release anonymized versions (if applicable).
        \item Providing as much information as possible in supplemental material (appended to the paper) is recommended, but including URLs to data and code is permitted.
    \end{itemize}

\item {\bf Experimental Setting/Details}
    \item[] Question: Does the paper specify all the training and test details (e.g., data splits, hyperparameters, how they were chosen, type of optimizer, etc.) necessary to understand the results?
    \item[] Answer: \answerYes{}
    \item[] Justification: Section~\ref{sec:exp} and Appendix~\ref{app:exp:config} detail the datasets, data processing pipeline, model architecture, and the specific evaluation setup used.
    \item[] Guidelines:
    \begin{itemize}
        \item The answer NA means that the paper does not include experiments.
        \item The experimental setting should be presented in the core of the paper to a level of detail that is necessary to appreciate the results and make sense of them.
        \item The full details can be provided either with the code, in appendix, or as supplemental material.
    \end{itemize}

\item {\bf Experiment Statistical Significance}
    \item[] Question: Does the paper report error bars suitably and correctly defined or other appropriate information about the statistical significance of the experiments?
    \item[] Answer: \answerYes{}
    \item[] Justification: Performance tables, such as Table~\ref{tab:uncond_full}, include mean values and standard deviations for the reported metrics.
    \item[] Guidelines:
    \begin{itemize}
        \item The answer NA means that the paper does not include experiments.
        \item The authors should answer "Yes" if the results are accompanied by error bars, confidence intervals, or statistical significance tests, at least for the experiments that support the main claims of the paper.
        \item The factors of variability that the error bars are capturing should be clearly stated (for example, train/test split, initialization, random drawing of some parameter, or overall run with given experimental conditions).
        \item The method for calculating the error bars should be explained (closed form formula, call to a library function, bootstrap, etc.)
        \item The assumptions made should be given (e.g., Normally distributed errors).
        \item It should be clear whether the error bar is the standard deviation or the standard error of the mean.
        \item It is OK to report 1-sigma error bars, but one should state it. The authors should preferably report a 2-sigma error bar than state that they have a 96\% CI, if the hypothesis of Normality of errors is not verified.
        \item For asymmetric distributions, the authors should be careful not to show in tables or figures symmetric error bars that would yield results that are out of range (e.g. negative error rates).
        \item If error bars are reported in tables or plots, The authors should explain in the text how they were calculated and reference the corresponding figures or tables in the text.
    \end{itemize}

\item {\bf Experiments Compute Resources}
    \item[] Question: For each experiment, does the paper provide sufficient information on the computer resources (type of compute workers, memory, time of execution) needed to reproduce the experiments?
    \item[] Answer: \answerYes{}
    \item[] Justification: The paper specifies that all experiments were run on NVIDIA H100 GPUs using PyTorch with mixed precision training.
    \item[] Guidelines:
    \begin{itemize}
        \item The answer NA means that the paper does not include experiments.
        \item The paper should indicate the type of compute workers CPU or GPU, internal cluster, or cloud provider, including relevant memory and storage.
        \item The paper should provide the amount of compute required for each of the individual experimental runs as well as estimate the total compute. 
        \item The paper should disclose whether the full research project required more compute than the experiments reported in the paper (e.g., preliminary or failed experiments that didn't make it into the paper). 
    \end{itemize}
    
\item {\bf Code Of Ethics}
    \item[] Question: Does the research conducted in the paper conform, in every respect, with the NeurIPS Code of Ethics \url{https://neurips.cc/public/EthicsGuidelines}?
    \item[] Answer: \answerYes{}
    \item[] Justification: The paper follows standard research practices and includes a discussion on the ethical use of data collection .
    \item[] Guidelines:
    \begin{itemize}
        \item The answer NA means that the authors have not reviewed the NeurIPS Code of Ethics.
        \item If the authors answer No, they should explain the special circumstances that require a deviation from the Code of Ethics.
        \item The authors should make sure to preserve anonymity (e.g., if there is a special consideration due to laws or regulations in their jurisdiction).
    \end{itemize}

\item {\bf Broader Impacts}
    \item[] Question: Does the paper discuss both potential positive societal impacts and negative societal impacts of the work performed?
    \item[] Answer: \answerYes{}
    \item[] Justification: The paper notes that synthetic data generation is important for privacy-preserving sharing, but also mentions the necessity of ethical data collection practices .
    \item[] Guidelines:
    \begin{itemize}
        \item The answer NA means that there is no societal impact of the work performed.
        \item If the authors answer NA or No, they should explain why their work has no societal impact or why the paper does not address societal impact.
        \item Examples of negative societal impacts include potential malicious or unintended uses (e.g., disinformation, generating fake profiles, surveillance), fairness considerations (e.g., deployment of technologies that could make decisions that unfairly impact specific groups), privacy considerations, and security considerations.
        \item The conference expects that many papers will be foundational research and not tied to particular applications, let alone deployments. However, if there is a direct path to any negative applications, the authors should point it out. For example, it is legitimate to point out that an improvement in the quality of generative models could be used to generate deepfakes for disinformation. On the other hand, it is not needed to point out that a generic algorithm for optimizing neural networks could enable people to train models that generate Deepfakes faster.
        \item The authors should consider possible harms that could arise when the technology is being used as intended and functioning correctly, harms that could arise when the technology is being used as intended but gives incorrect results, and harms following from (intentional or unintentional) misuse of the technology.
        \item If there are negative societal impacts, the authors could also discuss possible mitigation strategies (e.g., gated release of models, providing defenses in addition to attacks, mechanisms for monitoring misuse, mechanisms to monitor how a system learns from feedback over time, improving the efficiency and accessibility of ML).
    \end{itemize}
    
\item {\bf Safeguards}
    \item[] Question: Does the paper describe safeguards that have been put in place for responsible release of data or models that have a high risk for misuse (e.g., pretrained language models, image generators, or scraped datasets)?
    \item[] Answer: \answerNA{}
    \item[] Justification: The paper focuses on a general architectural framework for time series and does not release high-risk pretrained language or image models that require specific safety filters.
    \item[] Guidelines:
    \begin{itemize}
        \item The answer NA means that the paper poses no such risks.
        \item Released models that have a high risk for misuse or dual-use should be released with necessary safeguards to allow for controlled use of the model, for example by requiring that users adhere to usage guidelines or restrictions to access the model or implementing safety filters. 
        \item Datasets that have been scraped from the Internet could pose safety risks. The authors should describe how they avoided releasing unsafe images.
        \item We recognize that providing effective safeguards is challenging, and many papers do not require this, but we encourage authors to take this into account and make a best faith effort.
    \end{itemize}

\item {\bf Licenses for existing assets}
    \item[] Question: Are the creators or original owners of assets (e.g., code, data, models), used in the paper, properly credited and are the license and terms of use explicitly mentioned and properly respected?
    \item[] Answer: \answerYes{}
    \item[] Justification: The paper cites the original sources for all five datasets used (e.g., Sensor, Weather, Stock, Walmart).
    \item[] Guidelines:
    \begin{itemize}
        \item The answer NA means that the paper does not use existing assets.
        \item The authors should cite the original paper that produced the code package or dataset.
        \item The authors should state which version of the asset is used and, if possible, include a URL.
        \item The name of the license (e.g., CC-BY 4.0) should be included for each asset.
        \item For scraped data from a particular source (e.g., website), the copyright and terms of service of that source should be provided.
        \item If assets are released, the license, copyright information, and terms of use in the package should be provided. For popular datasets, \url{paperswithcode.com/datasets} has curated licenses for some datasets. Their licensing guide can help determine the license of a dataset.
        \item For existing datasets that are re-packaged, both the original license and the license of the derived asset (if it has changed) should be provided.
        \item If this information is not available online, the authors are encouraged to reach out to the asset's creators.
    \end{itemize}

\item {\bf New Assets}
    \item[] Question: Are new assets introduced in the paper well documented and is the documentation provided alongside the assets?
    \item[] Answer: \answerYes{}
    \item[] Justification: The paper introduces a new simulated dataset with 15 columns of diverse temporal patterns, which is documented in Section~\ref{sec:exp:setup} and Appendix~\ref{app:dataset:simulate}.
    \item[] Guidelines:
    \begin{itemize}
        \item The answer NA means that the paper does not release new assets.
        \item Researchers should communicate the details of the dataset/code/model as part of their submissions via structured templates. This includes details about training, license, limitations, etc. 
        \item The paper should discuss whether and how consent was obtained from people whose asset is used.
        \item At submission time, remember to anonymize your assets (if applicable). You can either create an anonymized URL or include an anonymized zip file.
    \end{itemize}

\item {\bf Crowdsourcing and Research with Human Subjects}
    \item[] Question: For crowdsourcing experiments and research with human subjects, does the paper include the full text of instructions given to participants and screenshots, if applicable, as well as details about compensation (if any)?
    \item[] Answer: \answerNA{}
    \item[] Justification: The research does not involve crowdsourcing or human subjects.
    \item[] Guidelines:
    \begin{itemize}
        \item The answer NA means that the paper does not involve crowdsourcing nor research with human subjects.
        \item Including this information in the supplemental material is fine, but if the main contribution of the paper involves human subjects, then as much detail as possible should be included in the main paper. 
        \item According to the NeurIPS Code of Ethics, workers involved in data collection, curation, or other labor should be paid at least the minimum wage in the country of the data collector. 
    \end{itemize}

\item {\bf Institutional Review Board (IRB) Approvals or Equivalent for Research with Human Subjects}
    \item[] Question: Does the paper describe potential risks incurred by study participants, whether such risks were disclosed to the subjects, and whether Institutional Review Board (IRB) approvals (or an equivalent approval/review based on the requirements of your country or institution) were obtained?
    \item[] Answer: \answerNA{}
    \item[] Justification: The research does not involve human subjects, so IRB approval is not applicable.
    \item[] Guidelines:
    \begin{itemize}
        \item The answer NA means that the paper does not involve crowdsourcing nor research with human subjects.
        \item Depending on the country in which research is conducted, IRB approval (or equivalent) may be required for any human subjects research. If you obtained IRB approval, you should clearly state this in the paper. 
        \item We recognize that the procedures for this may vary significantly between institutions and locations, and we expect authors to adhere to the NeurIPS Code of Ethics and the guidelines for their institution. 
        \item For initial submissions, do not include any information that would break anonymity (if applicable), such as the institution conducting the review.
    \end{itemize}

\end{enumerate}

\end{document}

%% file: math_commands.tex
\usepackage{amsmath,amsfonts,bm}

\def\eqref#1{equation~\ref{#1}}
\def\1{\bm{1}}

\DeclareMathAlphabet{\mathsfit}{\encodingdefault}{\sfdefault}{m}{sl}
\SetMathAlphabet{\mathsfit}{bold}{\encodingdefault}{\sfdefault}{bx}{n}

%% file: neurips_2026.bib
@misc{fractal,
      title={Fractal Generative Models}, 
      author={Tianhong Li and Qinyi Sun and Lijie Fan and Kaiming He},
      year={2025},
      eprint={2502.17437},
      archivePrefix={arXiv},
      primaryClass={cs.LG},
      url={https://arxiv.org/abs/2502.17437}, 
}

@inproceedings{transformer,
 author = {Vaswani, Ashish and Shazeer, Noam and Parmar, Niki and Uszkoreit, Jakob and Jones, Llion and Gomez, Aidan N and Kaiser, \L ukasz and Polosukhin, Illia},
 booktitle = {Advances in Neural Information Processing Systems},
 editor = {I. Guyon and U. Von Luxburg and S. Bengio and H. Wallach and R. Fergus and S. Vishwanathan and R. Garnett},
 pages = {},
 publisher = {Curran Associates, Inc.},
 title = {Attention is All you Need},
 url = {https://proceedings.neurips.cc/paper_files/paper/2017/file/3f5ee243547dee91fbd053c1c4a845aa-Paper.pdf},
 volume = {30},
 year = {2017}
}

@inproceedings{ctgan,
 author = {Xu, Lei and Skoularidou, Maria and Cuesta-Infante, Alfredo and Veeramachaneni, Kalyan},
 booktitle = {Advances in Neural Information Processing Systems},
 editor = {H. Wallach and H. Larochelle and A. Beygelzimer and F. d\textquotesingle Alch\'{e}-Buc and E. Fox and R. Garnett},
 pages = {},
 publisher = {Curran Associates, Inc.},
 title = {Modeling Tabular data using Conditional {GAN}},
 url = {https://proceedings.neurips.cc/paper_files/paper/2019/file/254ed7d2de3b23ab10936522dd547b78-Paper.pdf},
 volume = {32},
 year = {2019}
}

@inproceedings{
tabsyn,
title={Mixed-Type Tabular Data Synthesis with Score-based Diffusion in Latent Space},
author={Hengrui Zhang and Jiani Zhang and Zhengyuan Shen and Balasubramaniam Srinivasan and Xiao Qin and Christos Faloutsos and Huzefa Rangwala and George Karypis},
booktitle={The Twelfth International Conference on Learning Representations},
year={2024},
url={https://openreview.net/forum?id=4Ay23yeuz0}
}

@InProceedings{tvqvae,
  title = 	 {Vector Quantized Time Series Generation with a Bidirectional Prior Model},
  author =       {Lee, Daesoo and Malacarne, Sara and Aune, Erlend},
  booktitle = 	 {Proceedings of The 26th International Conference on Artificial Intelligence and Statistics},
  pages = 	 {7665--7693},
  year = 	 {2023},
  editor = 	 {Ruiz, Francisco and Dy, Jennifer and van de Meent, Jan-Willem},
  volume = 	 {206},
  series = 	 {Proceedings of Machine Learning Research},
  month = 	 {25--27 Apr},
  publisher =    {PMLR},
  url = 	 {https://proceedings.mlr.press/v206/lee23d.html}
}

@inproceedings{tabdiff,
title={{TabDiff}: a Mixed-type Diffusion Model for Tabular Data Generation},
author={Juntong Shi and Minkai Xu and Harper Hua and Hengrui Zhang and Stefano Ermon and Jure Leskovec},
booktitle={The Thirteenth International Conference on Learning Representations},
year={2025},
url={https://openreview.net/forum?id=swvURjrt8z}
}

@misc{tabtreeformer,
      title={{TabTreeFormer}: Tabular Data Generation Using Hybrid Tree-Transformer}, 
      author={Jiayu Li and Bingyin Zhao and Zilong Zhao and Uzair Javaid and Kevin Yee and Biplab Sikdar},
      year={2025},
      eprint={2501.01216},
      archivePrefix={arXiv},
      primaryClass={cs.LG},
      url={https://arxiv.org/abs/2501.01216}, 
}

@inproceedings{banksformer,
author = {Nickerson, Kyle and Tricco, Terrence and Kolokolova, Antonina and Shoeleh, Farzaneh and Robertson, Charles and Hawkin, John and Hu, Ting},
title = {Banksformer: A Deep Generative Model for Synthetic Transaction Sequences},
year = {2022},
isbn = {978-3-031-26421-4},
publisher = {Springer-Verlag},
address = {Berlin, Heidelberg},
url = {https://doi.org/10.1007/978-3-031-26422-1_8},
doi = {10.1007/978-3-031-26422-1_8},
booktitle = {Machine Learning and Knowledge Discovery in Databases: European Conference, ECML PKDD 2022, Grenoble, France, September 19–23, 2022, Proceedings, Part VI},
pages = {121–136},
numpages = {16},
location = {Grenoble, France}
}

@article{aecgan, title={{AEC-GAN}: Adversarial Error Correction {GAN}s for Auto-Regressive Long Time-Series Generation}, volume={37}, url={https://ojs.aaai.org/index.php/AAAI/article/view/26208}, DOI={10.1609/aaai.v37i8.26208}, number={8}, journal={Proceedings of the AAAI Conference on Artificial Intelligence}, author={Wang, Lei and Zeng, Liang and Li, Jian}, year={2023}, month={Jun.}, pages={10140-10148} }

@misc{weatherdataset,
  author       = {Arunava Kr. Chakraborty},
  title        = {Australia Weather Data},
  year         = {2023},
  howpublished = {\url{https://www.kaggle.com/datasets/arunavakrchakraborty/australia-weather-data}},
  note         = {Data originally sourced from the Australian Bureau of Meteorology (\url{http://www.bom.gov.au/climate/data/}). Accessed: 2025-07-03}
}

@article{sensordataset,
    title={{WSN-DS}: A dataset for intrusion detection systems in wireless sensor networks},
    author={Almomani, Iman and Al-Kasasbeh, Bassam and Al-Akhras, Mousa},
    journal={Journal of Sensors},
    volume={2016},
    year={2016},
    publisher={Hindawi},
    note={Dataset available at: \url{https://www.kaggle.com/datasets/bassamkasasbeh1/wsnds}}
}

@misc{stocksdataset,
  author       = {Boris Marjanovic},
  title        = {Huge Stock Market Dataset},
  year         = {2018},
  howpublished = {\url{https://www.kaggle.com/datasets/borismarjanovic/price-volume-data-for-all-us-stocks-etfs?select=Stocks}},
  note         = {Accessed: 2025-07-03}
}

@misc{walmartdataset,
  author       = {Aslan Ahmedov},
  title        = {Walmart Sales Forecast},
  year         = {2021},
  howpublished = {\url{https://www.kaggle.com/datasets/aslanahmedov/walmart-sales-forecast?select=stores.csv}},
  note         = {Accessed: 2025-10-15}
}

@inproceedings{diffusionts,
title={Diffusion-{TS}: Interpretable Diffusion for General Time Series Generation},
author={Xinyu Yuan and Yan Qiao},
booktitle={The Twelfth International Conference on Learning Representations},
year={2024},
url={https://openreview.net/forum?id=4h1apFjO99}
}

@inproceedings{coscigan,
 author = {Seyfi, Ali and Rajotte, Jean-Francois and Ng, Raymond},
 booktitle = {Advances in Neural Information Processing Systems},
 editor = {S. Koyejo and S. Mohamed and A. Agarwal and D. Belgrave and K. Cho and A. Oh},
 pages = {32777--32788},
 publisher = {Curran Associates, Inc.},
 title = {Generating multivariate time series with {CO}mmon {S}ource {C}oord{I}nated {GAN} ({COSCI-GAN})},
 url = {https://proceedings.neurips.cc/paper_files/paper/2022/file/d3408794e41dd23e34634344d662f5e9-Paper-Conference.pdf},
 volume = {35},
 year = {2022}
}

@inproceedings{kovae,
title={Generative Modeling of Regular and Irregular Time Series Data via {K}oopman {VAE}s},
author={Ilan Naiman and N. Benjamin Erichson and Pu Ren and Michael W. Mahoney and Omri Azencot},
booktitle={The Twelfth International Conference on Learning Representations},
year={2024},
url={https://openreview.net/forum?id=eY7sLb0dVF}
}

@ARTICLE{chirp,
  author={Klauder, J. R. and Price, A. C. and Darlington, S. and Albersheim, W. J.},
  journal={The Bell System Technical Journal}, 
  title={The theory and design of chirp radars}, 
  year={1960},
  volume={39},
  number={4},
  pages={745-808},
  doi={10.1002/j.1538-7305.1960.tb03942.x}}

@article{am,
  author = {Fessenden, Reginald A.},
  title = {Wireless Telephony},
  journal = {Transactions of the American Institute of Electrical Engineers},
  volume = {XXVII},
  number = {Part 1},
  pages = {553--629},
  year = {1908},
  month = {June 29}
}

@article{gabor,
author = {D. Gabor },
title = {Theory of communication. Part 1: The analysis of information},
journal = {Journal of the Institution of Electrical Engineers - Part III: Radio and Communication Engineering},
volume = {93},
issue = {26},
pages = {429-441},
year = {1946},
doi = {10.1049/ji-3-2.1946.0074},

URL = {https://digital-library.theiet.org/doi/abs/10.1049/ji-3-2.1946.0074},
eprint = {https://digital-library.theiet.org/doi/pdf/10.1049/ji-3-2.1946.0074}
}

@inproceedings{timegan,
 author = {Yoon, Jinsung and Jarrett, Daniel and van der Schaar, Mihaela},
 booktitle = {Advances in Neural Information Processing Systems},
 editor = {H. Wallach and H. Larochelle and A. Beygelzimer and F. d\textquotesingle Alch\'{e}-Buc and E. Fox and R. Garnett},
 pages = {},
 publisher = {Curran Associates, Inc.},
 title = {Time-series Generative Adversarial Networks},
 url = {https://proceedings.neurips.cc/paper_files/paper/2019/file/c9efe5f26cd17ba6216bbe2a7d26d490-Paper.pdf},
 volume = {32},
 year = {2019}
}

@inproceedings{wavenet,
  title     = {{WaveNet}: A Generative Model for Raw Audio},
  author    = {Aäron {van den Oord} and Sander Dieleman and Heiga Zen and Karen Simonyan and Oriol Vinyals and Alex Graves and Nal Kalchbrenner and Andrew Senior and Koray Kavukcuoglu},
  year      = {2016},
  booktitle = {9th ISCA Workshop on Speech Synthesis Workshop (SSW 9)},
  pages     = {125},
}

@misc{tforcing,
      title={Generating Sequences With Recurrent Neural Networks}, 
      author={Alex Graves},
      year={2014},
      eprint={1308.0850},
      archivePrefix={arXiv},
      primaryClass={cs.NE},
      url={https://arxiv.org/abs/1308.0850}, 
}

@inproceedings{pforcing,
author = {Goyal, Anirudh and Lamb, Alex and Zhang, Ying and Zhang, Saizheng and Courville, Aaron and Bengio, Yoshua},
title = {Professor forcing: a new algorithm for training recurrent networks},
year = {2016},
publisher = {Curran Associates Inc.},
address = {Red Hook, NY, USA},
booktitle = {Advances in Neural Information Processing Systems},
pages = {4608–4616},
numpages = {9},
location = {Barcelona, Spain},
volume={29},
url={https://proceedings.neurips.cc/paper_files/paper/2016/file/16026d60ff9b54410b3435b403afd226-Paper.pdf},
editor={D. Lee and M. Sugiyama and U. Luxburg and I. Guyon and R. Garnett}
}

@inproceedings{ar-limit,
    title = "Limitations of Autoregressive Models and Their Alternatives",
    author = "Lin, Chu-Cheng  and
      Jaech, Aaron  and
      Li, Xin  and
      Gormley, Matthew R.  and
      Eisner, Jason",
    editor = "Toutanova, Kristina  and
      Rumshisky, Anna  and
      Zettlemoyer, Luke  and
      Hakkani-Tur, Dilek  and
      Beltagy, Iz  and
      Bethard, Steven  and
      Cotterell, Ryan  and
      Chakraborty, Tanmoy  and
      Zhou, Yichao",
    booktitle = "Proceedings of the 2021 Conference of the North American Chapter of the Association for Computational Linguistics: Human Language Technologies",
    month = jun,
    year = "2021",
    address = "Online",
    publisher = "Association for Computational Linguistics",
    url = "https://aclanthology.org/2021.naacl-main.405/",
    doi = "10.18653/v1/2021.naacl-main.405",
    pages = "5147--5173"
}

@inproceedings{gan,
 author = {Goodfellow, Ian J. and Pouget-Abadie, Jean and Mirza, Mehdi and Xu, Bing and Warde-Farley, David and Ozair, Sherjil and Courville, Aaron and Bengio, Yoshua},
 booktitle = {Advances in Neural Information Processing Systems},
 editor = {Z. Ghahramani and M. Welling and C. Cortes and N. Lawrence and K.Q. Weinberger},
 pages = {},
 publisher = {Curran Associates, Inc.},
 title = {Generative Adversarial Nets},
 url = {https://proceedings.neurips.cc/paper_files/paper/2014/file/f033ed80deb0234979a61f95710dbe25-Paper.pdf},
 volume = {27},
 year = {2014}
}

@inproceedings{ddpm,
 author = {Ho, Jonathan and Jain, Ajay and Abbeel, Pieter},
 booktitle = {Advances in Neural Information Processing Systems},
 editor = {H. Larochelle and M. Ranzato and R. Hadsell and M.F. Balcan and H. Lin},
 pages = {6840--6851},
 publisher = {Curran Associates, Inc.},
 title = {Denoising Diffusion Probabilistic Models},
 url = {https://proceedings.neurips.cc/paper_files/paper/2020/file/4c5bcfec8584af0d967f1ab10179ca4b-Paper.pdf},
 volume = {33},
 year = {2020}
}

@InProceedings{mrmts,
  title = 	 {Hierarchical Deep Generative Models for Multi-Rate Multivariate Time Series},
  author =       {Che, Zhengping and Purushotham, Sanjay and Li, Guangyu and Jiang, Bo and Liu, Yan},
  booktitle = 	 {Proceedings of the 35th International Conference on Machine Learning},
  pages = 	 {784--793},
  year = 	 {2018},
  editor = 	 {Dy, Jennifer and Krause, Andreas},
  volume = 	 {80},
  series = 	 {Proceedings of Machine Learning Research},
  month = 	 {10--15 Jul},
  publisher =    {PMLR},
  url = 	 {https://proceedings.mlr.press/v80/che18a.html},
}

@InProceedings{dghl,
  title = 	 { Deep Generative model with Hierarchical Latent Factors for Time Series Anomaly Detection },
  author =       {Challu, Cristian I. and Jiang, Peihong and Nian Wu, Ying and Callot, Laurent},
  booktitle = 	 {Proceedings of The 25th International Conference on Artificial Intelligence and Statistics},
  pages = 	 {1643--1654},
  year = 	 {2022},
  editor = 	 {Camps-Valls, Gustau and Ruiz, Francisco J. R. and Valera, Isabel},
  volume = 	 {151},
  series = 	 {Proceedings of Machine Learning Research},
  month = 	 {28--30 Mar},
  publisher =    {PMLR},
  url = 	 {https://proceedings.mlr.press/v151/challu22a.html}
}

@inproceedings{himtm,
author = {Zhao, Shubao and Jin, Ming and Hou, Zhaoxiang and Yang, Chengyi and Li, Zengxiang and Wen, Qingsong and Wang, Yi},
title = {{HiMTM}: Hierarchical Multi-Scale Masked Time Series Modeling with Self-Distillation for Long-Term Forecasting},
year = {2024},
isbn = {9798400704369},
publisher = {Association for Computing Machinery},
address = {New York, NY, USA},
url = {https://doi.org/10.1145/3627673.3679741},
doi = {10.1145/3627673.3679741},
booktitle = {Proceedings of the 33rd ACM International Conference on Information and Knowledge Management},
pages = {3352–3362},
numpages = {11},
location = {Boise, ID, USA},
series = {CIKM '24}
}

@inproceedings{laddervae,
author = {S\o{}nderby, Casper Kaae and Raiko, Tapani and Maal\o{}e, Lars and S\o{}nderby, S\o{}ren Kaae and Winther, Ole},
title = {Ladder variational autoencoders},
year = {2016},
publisher = {Curran Associates Inc.},
address = {Red Hook, NY, USA},
booktitle = {Advances in Neural Information Processing Systems},
pages = {3745–3753},
numpages = {9},
location = {Barcelona, Spain},
volume={29},
editor={D. Lee and M. Sugiyama and U. Luxburg and I. Guyon and R. Garnett}
}

@inproceedings{nvae,
 author = {Vahdat, Arash and Kautz, Jan},
 booktitle = {Advances in Neural Information Processing Systems},
 editor = {H. Larochelle and M. Ranzato and R. Hadsell and M.F. Balcan and H. Lin},
 pages = {19667--19679},
 publisher = {Curran Associates, Inc.},
 title = {{NVAE}: A Deep Hierarchical Variational Autoencoder},
 url = {https://proceedings.neurips.cc/paper_files/paper/2020/file/e3b21256183cf7c2c7a66be163579d37-Paper.pdf},
 volume = {33},
 year = {2020}
}

@InProceedings{hngan,
author = {Zhang, Zizhao and Xie, Yuanpu and Yang, Lin},
title = {Photographic Text-to-Image Synthesis With a Hierarchically-Nested Adversarial Network},
booktitle = {Proceedings of the IEEE Conference on Computer Vision and Pattern Recognition (CVPR)},
month = {June},
year = {2018}
}

@inproceedings{proggan,
title={Progressive Growing of {GAN}s for Improved Quality, Stability, and Variation},
author={Tero Karras and Timo Aila and Samuli Laine and Jaakko Lehtinen},
booktitle={International Conference on Learning Representations},
year={2018},
url={https://openreview.net/forum?id=Hk99zCeAb},
}

@inproceedings{
psagan,
title={{PSA}-{GAN}: Progressive Self Attention {GAN}s for Synthetic Time Series},
author={Paul Jeha and Michael Bohlke-Schneider and Pedro Mercado and Shubham Kapoor and Rajbir Singh Nirwan and Valentin Flunkert and Jan Gasthaus and Tim Januschowski},
booktitle={International Conference on Learning Representations},
year={2022},
url={https://openreview.net/forum?id=Ix_mh42xq5w}
}

@inproceedings{vae,
  author       = {Diederik P. Kingma and
                  Max Welling},
  editor       = {Yoshua Bengio and
                  Yann LeCun},
  title        = {Auto-Encoding Variational Bayes},
  booktitle    = {2nd International Conference on Learning Representations, {ICLR} 2014,
                  Banff, AB, Canada, April 14-16, 2014, Conference Track Proceedings},
  year         = {2014},
  url          = {http://arxiv.org/abs/1312.6114},
  bibsource    = {dblp computer science bibliography, https://dblp.org}
}

@InProceedings{nfldm,
    author    = {Kim, Seung Wook and Brown, Bradley and Yin, Kangxue and Kreis, Karsten and Schwarz, Katja and Li, Daiqing and Rombach, Robin and Torralba, Antonio and Fidler, Sanja},
    title     = {{NeuralField-LDM}: Scene Generation With Hierarchical Latent Diffusion Models},
    booktitle = {Proceedings of the IEEE/CVF Conference on Computer Vision and Pattern Recognition (CVPR)},
    month     = {June},
    year      = {2023},
    pages     = {8496-8506}
}

@misc{wavestitch,
      title={WaveStitch: Flexible and Fast Conditional Time Series Generation with Diffusion Models}, 
      author={Aditya Shankar and Lydia Y. Chen and Arie van Deursen and Rihan Hai},
      year={2025},
      eprint={2503.06231},
      archivePrefix={arXiv},
      primaryClass={cs.LG},
      url={https://arxiv.org/abs/2503.06231}, 
}

@inproceedings{hhmm,
author = {Bui, Hung H. and Phung, Dinh Q. and Venkatesh, Svetha},
title = {Hierarchical hidden Markov models with general state hierarchy},
year = {2004},
isbn = {0262511835},
publisher = {AAAI Press},
booktitle = {Proceedings of the 19th National Conference on Artifical Intelligence},
pages = {324–329},
numpages = {6},
location = {San Jose, California},
series = {AAAI'04}
}

@inproceedings{gpht,
author = {Liu, Zhiding and Yang, Jiqian and Cheng, Mingyue and Luo, Yucong and Li, Zhi},
title = {Generative Pretrained Hierarchical Transformer for Time Series Forecasting},
year = {2024},
isbn = {9798400704901},
publisher = {Association for Computing Machinery},
address = {New York, NY, USA},
url = {https://doi.org/10.1145/3637528.3671855},
doi = {10.1145/3637528.3671855},
booktitle = {Proceedings of the 30th ACM SIGKDD Conference on Knowledge Discovery and Data Mining},
pages = {2003–2013},
numpages = {11},
location = {Barcelona, Spain},
series = {KDD '24}
}

@article{tsgbench,
author = {Ang, Yihao and Huang, Qiang and Bao, Yifan and Tung, Anthony K. H. and Huang, Zhiyong},
title = {{TSGBench}: Time Series Generation Benchmark},
year = {2023},
issue_date = {November 2023},
publisher = {VLDB Endowment},
volume = {17},
number = {3},
issn = {2150-8097},
url = {https://doi.org/10.14778/3632093.3632097},
doi = {10.14778/3632093.3632097},
journal = {Proc. VLDB Endow.},
month = nov,
pages = {305–318},
numpages = {14}
}

@misc{tabularargn,
      title={{TabularARGN}: A Flexible and Efficient Auto-Regressive Framework for Generating High-Fidelity Synthetic Data}, 
      author={Paul Tiwald and Ivona Krchova and Andrey Sidorenko and Mariana Vargas Vieyra and Mario Scriminaci and Michael Platzer},
      year={2025},
      eprint={2501.12012},
      archivePrefix={arXiv},
      primaryClass={cs.LG},
      url={https://arxiv.org/abs/2501.12012}, 
}

@article{DBLP:journals/natmi/TurbeBLM23,
  author       = {Hugues Turb{\'{e}} and
                  Mina Bjelogrlic and
                  Christian Lovis and
                  Gianmarco Mengaldo},
  title        = {Evaluation of post-hoc interpretability methods in time-series classification},
  journal      = {Nat. Mac. Intell.},
  volume       = {5},
  number       = {3},
  pages        = {250--260},
  year         = {2023},
  url          = {https://doi.org/10.1038/s42256-023-00620-w},
  doi          = {10.1038/S42256-023-00620-W},
  bibsource    = {dblp computer science bibliography, https://dblp.org}
}

@article{DBLP:journals/jbd/StengerLFKB24,
  author       = {Michael Stenger and
                  Robert Leppich and
                  Ian T. Foster and
                  Samuel Kounev and
                  Andr{\'{e}} Bauer},
  title        = {Evaluation is key: A survey on evaluation measures for synthetic time
                  series},
  journal      = {J. Big Data},
  volume       = {11},
  number       = {1},
  pages        = {66},
  year         = {2024},
  url          = {https://doi.org/10.1186/s40537-024-00924-7},
  doi          = {10.1186/S40537-024-00924-7},
  bibsource    = {dblp computer science bibliography, https://dblp.org}
}

@article{DBLP:journals/corr/abs-2506-06380,
  author       = {Jingyi Gu and
                  Xuan Zhang and
                  Guiling Wang},
  title        = {Beyond the Norm: {A} Survey of Synthetic Data Generation for Rare
                  Events},
  journal      = {CoRR},
  volume       = {abs/2506.06380},
  year         = {2025},
  url          = {https://doi.org/10.48550/arXiv.2506.06380},
  doi          = {10.48550/ARXIV.2506.06380},
  eprinttype    = {arXiv},
  eprint       = {2506.06380},
  bibsource    = {dblp computer science bibliography, https://dblp.org}
}

@article{DBLP:journals/mbec/MiljkovicMPS24,
  author       = {Nadica Miljkovic and
                  Nikola Milenic and
                  Nenad B. Popovic and
                  Jaka Sodnik},
  title        = {Data augmentation for generating synthetic electrogastrogram time
                  series},
  journal      = {Medical Biol. Eng. Comput.},
  volume       = {62},
  number       = {9},
  pages        = {2879--2891},
  year         = {2024},
  url          = {https://doi.org/10.1007/s11517-024-03112-0},
  doi          = {10.1007/S11517-024-03112-0},
  bibsource    = {dblp computer science bibliography, https://dblp.org}
}

@inproceedings{NEURIPS2023_05fb0f4e,
 author = {van Breugel, Boris and Seedat, Nabeel and Imrie, Fergus and van der Schaar, Mihaela},
 booktitle = {Advances in Neural Information Processing Systems},
 editor = {A. Oh and T. Naumann and A. Globerson and K. Saenko and M. Hardt and S. Levine},
 pages = {1889--1904},
 publisher = {Curran Associates, Inc.},
 title = {Can You Rely on Your Model Evaluation? {I}mproving Model Evaluation with Synthetic Test Data},
 url = {https://proceedings.neurips.cc/paper_files/paper/2023/file/05fb0f4e645cad23e0ab59d6b9901428-Paper-Conference.pdf},
 volume = {36},
 year = {2023}
}

@article{gpt1,
  title={Improving Language Understanding by Generative Pre-Training},
  author={Radford, Alec and Narasimhan, Karthik and Salimans, Tim and Sutskever, Ilya},
  year={2018},
  journal={OpenAI Technical Report},
  url={https://cdn.openai.com/research-covers/language-unsupervised/language_understanding_paper.pdf}
}

@InProceedings{resnet,
author = {He, Kaiming and Zhang, Xiangyu and Ren, Shaoqing and Sun, Jian},
title = {Deep Residual Learning for Image Recognition},
booktitle = {Proceedings of the IEEE Conference on Computer Vision and Pattern Recognition (CVPR)},
month = {June},
year = {2016}
}

@inproceedings{
loshchilov2018decoupled,
title={Decoupled Weight Decay Regularization},
author={Ilya Loshchilov and Frank Hutter},
booktitle={International Conference on Learning Representations},
year={2019},
url={https://openreview.net/forum?id=Bkg6RiCqY7},
}

@inproceedings{padts,
author = {Li, Yang and Meng, Han and Bi, Zhenyu and Urnes, Ingolv T. and Chen, Haipeng},
title = {Population aware diffusion for time series generation},
year = {2025},
isbn = {978-1-57735-897-8},
publisher = {AAAI Press},
url = {https://doi.org/10.1609/aaai.v39i17.34038},
doi = {10.1609/aaai.v39i17.34038},
booktitle = {Proceedings of the Thirty-Ninth AAAI Conference on Artificial Intelligence and Thirty-Seventh Conference on Innovative Applications of Artificial Intelligence and Fifteenth Symposium on Educational Advances in Artificial Intelligence},
articleno = {2065},
numpages = {10},
series = {AAAI'25/IAAI'25/EAAI'25}
}

@article{
timeautodiff,
title={{TimeAutoDiff}: A Unified Framework for Generation, Imputation, Forecasting, and Time-Varying Metadata Conditioning of Heterogeneous Time Series Tabular Data},
author={Namjoon Suh and Yuning Yang and Din-Yin Hsieh and Qitong Luan and Shirong Xu and Shixiang Zhu and Guang Cheng},
journal={Transactions on Machine Learning Research},
issn={2835-8856},
year={2025},
url={https://openreview.net/forum?id=bkUd1Dg46c},
note={J2C Certification}
}

@InProceedings{perciever,
  title = 	 {Perceiver: General Perception with Iterative Attention},
  author =       {Jaegle, Andrew and Gimeno, Felix and Brock, Andy and Vinyals, Oriol and Zisserman, Andrew and Carreira, Joao},
  booktitle = 	 {Proceedings of the 38th International Conference on Machine Learning},
  pages = 	 {4651--4664},
  year = 	 {2021},
  editor = 	 {Meila, Marina and Zhang, Tong},
  volume = 	 {139},
  series = 	 {Proceedings of Machine Learning Research},
  month = 	 {18--24 Jul},
  publisher =    {PMLR},
  url = 	 {https://proceedings.mlr.press/v139/jaegle21a.html}
}

@inproceedings{xu2026timemar,
  title={TimeMar: Multi-Scale Autoregressive Modeling for Unconditional Time Series Generation},
  author={Xu, Xiangyu and Zhong, Qingsong and Hu, Jilin},
  booktitle={Proceedings of the ACM Web Conference 2026},
  pages={5132--5143},
  year={2026}
}
